\documentclass{article}
\usepackage[accepted]{icml2025}
\usepackage{pgfplots}
\usepackage[utf8]{inputenc}
\usepackage{microtype}

\usepackage[
  style=authoryear,
  maxcitenames=2,
  minnames=1,
  maxbibnames=4,
  sorting=nyt,
  giveninits=false,
  uniquelist=false,
  uniquename=false,
  dashed=false,
  doi=false,
  url=false,
  backref=false,
]{biblatex}
\usepackage[
  colorlinks=true,
  linkcolor=blue,
  filecolor=magenta,
  urlcolor=cyan,
  citecolor=purple,
]{hyperref}
\usepackage{fontawesome5}
\usepackage{xstring}

\newbibmacro*{doi-link}{\href{https://doi.org/\thefield{doi}}{\raisebox{0.067ex}{\small\faIcon[regular]{file-alt}}}}

\newbibmacro*{url-link}{\href{\thefield{url}}{\raisebox{0.067ex}{\small\faIcon[regular]{external-link-alt}}}}

\newbibmacro*{arxiv-link}{\href{https://arxiv.org/abs/\thefield{eprint}}{\raisebox{0.067ex}{\small\faIcon[regular]{external-link-alt}}}}

\newbibmacro*{biorxiv-link}{\href{https://biorxiv.org/content/\thefield{eprint}}{\raisebox{0.067ex}{\small\faIcon[regular]{external-link-alt}}}}

\newbibmacro*{psyarxiv-link}{\href{https://psyarxiv.com/\thefield{eprint}}{\raisebox{0.067ex}{\small\faIcon[regular]{external-link-alt}}}}

\renewbibmacro*{finentry}{\hspace{1ex}
  \iffieldundef{eprint}{\iffieldundef{doi}{\iffieldundef{url}{}{\usebibmacro{url-link}}}{\usebibmacro{doi-link}}}{\iffieldequalstr{eprinttype}{arXiv}{\usebibmacro{arxiv-link}}{\iffieldequalstr{eprinttype}{bioRxiv}{\usebibmacro{biorxiv-link}}{\iffieldequalstr{eprinttype}{PsyArXiv}{\usebibmacro{psyarxiv-link}}{\iffieldundef{doi}{\iffieldundef{url}{}{\usebibmacro{url-link}}}{\usebibmacro{doi-link}}}}}}}

\DeclareFieldFormat{doi}{}
\DeclareFieldFormat{url}{}
\AtEveryBibitem{
  \clearfield{eprint}
  \clearfield{eprinttype}
  \clearfield{eprintclass}
}
\AtEveryBibitem{\clearname{editor}}

\usepackage{amsmath}
\usepackage{amssymb}
\usepackage{mathtools}
\usepackage{amsthm}

\usepackage{enumitem}
\setlist[itemize,enumerate]{
    topsep=2pt,        partopsep=0pt,     leftmargin=*,
    parsep=0pt,        itemsep=2pt        }

\usepackage{graphicx}
\usepackage[nameinlink,capitalise]{cleveref}
\crefname{figure}{Figure}{Figures}
\crefname{equation}{Equation}{Equations}
\crefname{assumption}{Assumption}{Assumptions}
\usepackage[font=small,labelfont=bf]{caption}
\usepackage[format=hang]{subcaption}          

\usepackage{acro}
\DeclareAcronym{svd}{
  short=SVD,
  long=singular value decomposition
}

\DeclareAcronym{csvd}{
  short=cSVD,
  long=compact singular value decomposition
}

\DeclareAcronym{rsm}{
  short=RSM,
  long=representational similarity matrix
}

\DeclareAcronym{iid}{
  short=i.i.d.,
  long=independent and identically distributed
}

\DeclareAcronym{ann}{
  short=ANN,
  long=artificial neural network
}

\DeclareAcronym{relu}{
  short=ReLU,
  long=rectified linear unit
}

\DeclareAcronym{mds}{
  short=MDS,
  long=multidimensional scaling
}

\DeclareAcronym{rsa}{
  short=RSA,
  long=representational similarity analysis
}

\DeclareAcronym{gls}{
  short=GLS,
  long=general linear solution
}

\DeclareAcronym{lss}{
  short=LSS,
  long=least-squares solution
}

\DeclareAcronym{mrns}{
  short=MRNS,
  long=minimum representation-norm solution
}

\DeclareAcronym{mwns}{
  short=MWNS,
  long=minimum weight-norm solution
}

\theoremstyle{plain}
\newtheorem{theorem}{Theorem}[section]

\newtheorem{proposition}[theorem]{Proposition}

\newtheorem{corollary}[theorem]{Corollary}
\theoremstyle{definition}
\newtheorem{definition}[theorem]{Definition}
\newtheorem{assumption}[theorem]{Assumption}
\theoremstyle{remark}

\DeclareMathOperator{\rank}{rank}
\DeclareMathOperator{\image}{im}
\DeclareMathOperator{\svd}{cSVD}
\DeclareMathOperator{\rsm}{RSM}

\DeclareMathOperator*{\argmin}{argmin}
\DeclareMathOperator{\tr}{Tr}

\newcommand{\la}{\left\langle}
\newcommand{\ra}{\right\rangle}

\newcommand{\bfa}{\mathbf{a}}
\newcommand{\bfb}{\mathbf{b}}

\newcommand{\bfh}{\mathbf{h}}

\newcommand{\bfx}{\mathbf{x}}
\newcommand{\bfy}{\mathbf{y}}

\newcommand{\bfyhat}{\mathbf{\hat{y}}}
\newcommand{\bfxt}{\mathbf{\tilde{x}}}
\newcommand{\bfyt}{\mathbf{\tilde{y}}}

\newcommand{\bflambda}{\boldsymbol{\lambda}}
\newcommand{\bfxi}{\boldsymbol{\xi}}

\newcommand{\bfA}{\mathbf{A}}
\newcommand{\bfB}{\mathbf{B}}
\newcommand{\bfC}{\mathbf{C}}
\newcommand{\bfD}{\mathbf{D}}
\newcommand{\bfE}{\mathbf{E}}
\newcommand{\bfF}{\mathbf{F}}
\newcommand{\bfG}{\mathbf{G}}
\newcommand{\bfH}{\mathbf{H}}
\newcommand{\bfI}{\mathbf{I}}
\newcommand{\bfJ}{\mathbf{J}}
\newcommand{\bfK}{\mathbf{K}}
\newcommand{\bfL}{\mathbf{L}}
\newcommand{\bfM}{\mathbf{M}}
\newcommand{\bfN}{\mathbf{N}}
\newcommand{\bfO}{\mathbf{O}}
\newcommand{\bfP}{\mathbf{P}}
\newcommand{\bfQ}{\mathbf{Q}}
\newcommand{\bfR}{\mathbf{R}}
\newcommand{\bfS}{\mathbf{S}}

\newcommand{\bfU}{\mathbf{U}}

\newcommand{\bfV}{\mathbf{V}}

\newcommand{\bfW}{\mathbf{W}}
\newcommand{\bfX}{\mathbf{X}}
\newcommand{\bfY}{\mathbf{Y}}
\newcommand{\bfZ}{\mathbf{Z}}

\newcommand{\wa}{\bfW_1}
\newcommand{\wb}{\bfW_2}

\newcommand{\bfZero}{\textbf{0}}
\newcommand{\sigxx}{\boldsymbol{\Sigma}_{xx}}
\newcommand{\sigyy}{\boldsymbol{\Sigma}_{yy}}
\newcommand{\sigyx}{\boldsymbol{\Sigma}_{yx}}

\newcommand{\sigxxt}{\boldsymbol{\tilde{\Sigma}}_{xx}}

\newcommand{\sigyxt}{\boldsymbol{\tilde{\Sigma}}_{yx}}
\newcommand{\bfXt}{\mathbf{\tilde{X}}}
\newcommand{\bfYt}{\mathbf{\tilde{Y}}}

\newcommand{\bfGamma}{\boldsymbol{\Gamma}}
\newcommand{\bfPsi}{\boldsymbol{\Psi}}
\newcommand{\bfPhi}{\boldsymbol{\Phi}}
\newcommand{\bfLambda}{\boldsymbol{\Lambda}}
\newcommand{\bfXi}{\boldsymbol{\Xi}}

\newcommand{\wbar}{\mathbf{\bar{W}}}
\newcommand{\wobar}{\mathbf{\bar{\Omega}}}
\newcommand{\wao}{\mathbf{\Omega}_1}
\newcommand{\wbo}{\mathbf{\Omega}_2}

\newcommand{\fa}{\textbf{(A)}~}
\newcommand{\fb}{\textbf{(B)}~}
\newcommand{\fc}{\textbf{(C)}~}
\newcommand{\fd}{\textbf{(D)}~}
\newcommand{\fe}{\textbf{(E)}~}

\usepackage{marginnote}
\usepackage{xcolor}
\newcommand{\ignore}[1]{}

\definecolor{electric-purple}{RGB}{191, 0, 255}
\newcommand{\erin}[1]{\marginnote{\tiny\textsf{\color{electric-purple}[#1]}}}
\definecolor{complementary-green}{RGB}{0, 255, 191}
\newcommand{\lukas}[1]{\marginnote{\tiny\textsf{\color{complementary-green}[#1]}}}
\definecolor{complementary-orange}{RGB}{255, 191, 0}
\newcommand{\andrew}[1]{\marginnote{\tiny\textsf{\color{complementary-orange}[#1]}}}

\renewcommand{\erin}[1]{}
\renewcommand{\lukas}[1]{}
\renewcommand{\andrew}[1]{}

\newcommand{\eg}{\textit{e.g.},~}
\newcommand{\ie}{\textit{i.e.},~}

\icmltitlerunning{An analytical dissociation of functional and representational similarity in deep linear neural networks}

\begin{document}

\twocolumn[
  \icmltitle{Not all solutions are created equal: An analytical dissociation of\\functional and representational similarity in deep linear neural networks}

  \begin{icmlauthorlist}
    \icmlauthor{Lukas Braun}{expsy}
    \icmlauthor{Erin Grant}{gatsby}
    \icmlauthor{Andrew M. Saxe}{gatsby}
  \end{icmlauthorlist}

  \icmlaffiliation{expsy}{Department of Experimental Psychology, University of Oxford, Oxford, UK}
  \icmlaffiliation{gatsby}{Gatsby Unit \& Sainsbury Wellcome Centre, University College London, London, UK}

  \icmlcorrespondingauthor{Lukas Braun}{lukas.braun@psy.ox.ac.uk}

  \icmlkeywords{deep learning, representational similarity, representation learning, deep linear networks, neuroscience}

  \vskip 0.3in
]

\printAffiliationsAndNotice{}

\begin{abstract}

  A foundational principle of connectionism is that perception, action, and cognition emerge from parallel computations among simple, interconnected units that generate and rely on neural representations.
  Accordingly, researchers employ multivariate pattern analysis to decode and compare the neural codes of artificial and biological networks, aiming to uncover their functions.
  However, there is limited analytical understanding of how a network’s representation and function relate, despite this being essential to any quantitative notion of underlying function or functional similarity.
  We address this question using analysable two-layer linear networks and numerical simulations in nonlinear networks.
  We find that function and representation are dissociated, allowing representational similarity without functional similarity and vice versa.
  Further, we show that neither robustness to input noise nor the level of generalization error constrain representations to the task.
  In contrast, networks robust to parameter noise have limited representational flexibility and must employ task-specific representations.
  Our findings suggest that representational alignment reflects computational advantages beyond functional alignment alone, with significant implications for interpreting and comparing the representations of connectionist systems.
\end{abstract}

\section{Introduction}
The \emph{parallel distributed processing} hypothesis posits that function in artificial and biological networks emerges from interactions among simple interconnected units that compute with distributed representations~\parencite{rumelhart1986parallel}.
Accordingly, one might aim to identify function from networks observables such as connectivity weights and neural activity patterns; however, this is often complicated by the inherent complexity and partial observability of these systems.
In particular, the structure of artificial and biological networks is often \emph{non-identifiable} in the sense that networks can be structurally distinct, yet implement the same input-output mapping.
\begin{figure}[t]
  \centering
  \includegraphics[width=\columnwidth]{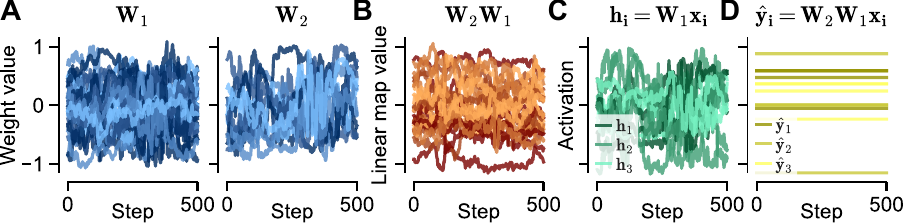}
  \caption{
    \textbf{Random walk}.
    \fa A random walk on the solution manifold of a two-layer linear network reveals that input and readout weights can change continuously, inducing changes in the \fb network parametrisation and thus the \fc hidden-layer representations, while preserving the \fd network output.
  }
  \label{fig:intro}
  \vspace{-4ex}
\end{figure}
For example, biophysical neuron models can exhibit nearly identical functions at both the neuron~\parencite{goldman2001global} and network levels~\parencite{prinz2004similar} despite considerable variation in their architecture~\parencite[reviewed in][]{marder2006variability, albantakis2024brain}.
Similarly, \acp{ann} are almost always non-identifiable due to simple symmetries, such as permutation-invariance of neurons~\parencite{sussmann1992uniqueness, albertini1993neural}, scale-invariance of activation functions~\parencite{neyshabur2015pathsgd}, alongside more complex symmetries arising from feature composition across layers and from finite training data~\parencite{refinetti2021classifying, arous2023highdimensional}.
As modern networks are deep and heavily overparametrised~\parencite{10.1145/3446776}, they are inherently non-identifiable, with many parametrisations yielding the same input-output behaviour.
Determining when parametrisations become identifiable and understanding the consequences of non-identifiability remain open problems~\parencite{roeder2021linear, entezari2022role, vlacic2022neural, ghosh2022pitfalls, wang2022desiderata, godfrey2022symmetries, martinelli2023expandandcluster, bona-pellissier2023parameter, lampinen2024learned, kori2024identifiable, marconato2024all}.

Even deep linear networks exhibit both trivial and non-trivial symmetries, making their parametrisation non-identifiable.
While any deep linear network can be re-parametrised as a single linear transformation~\parencite{laurent2018deep}, it does so through multistage computations that give rise to hidden-layer representations.
Moreover, the optimisation landscape of a deep linear network is non-convex and contains a high-dimensional solution manifold~(\cref{fig:intro}) whose shape is determined by the statistics of training data and the network architecture~\parencite{baldi1989neural, saxe2014exact, arora2019convergence}, making it a useful surrogate for studying representation learning~\parencite{saxe2019mathematical, braun2022exact, domine2024lazy}.
Here, we leverage the analytical tractability of deep linear networks to study functionally equivalent parametrisations at global minimum error.
Crucially, these solutions employ different internal representations, which has significant computational consequences, most notably in their affordances for linear decoding, representational similarity analysis (\cref{sec:implications}), and their sensitivity to noise (\cref{sec:advantages}).

We now detail our \textbf{contributions:}
\begin{itemize}
  \item We derive exact parametric equations characterising the complete and distinct subregions of the solution manifold in two-layer linear networks.
  \item We demonstrate that, although all subregions allow flexible neural representations, some inherently lead to identifiable task-specific representational similarities, while others result in non-identifiable, task-agnostic representational similarities.
  \item We establish that in contrast to task-specific solutions, task-agnostic solutions are non-identifiable and non-comparable.
  \item We analytically show that input noise and generalisation error do not constrain representations to task-specific regions, whereas parameter noise does.
  \item We validate our analytical findings through numerical simulations, demonstrating that these computational principles persist in non-linear neural networks.
\end{itemize}

All simulations are detailed in \cref{app:sec:simulation-details}, and a code repository reproducing all figures is available on GitHub at \href{https://github.com/lukas-braun/dissociating-similarity}{\texttt{lukas-braun/dissociating-similarity}}.
\section{Setting and preliminaries}
We consider a two-layer linear network~(\cref{fig:intro}A),
\begin{equation}
  \bfyhat_n = \wb\wa\bfx_n,
\end{equation}
trained to minimise the mean-squared error
\begin{equation}
  \mathcal{L}_\text{MSE} = \frac{1}{2P}\sum_{n=1}^P||\bfyhat_n - \bfy_n||_2^2
\end{equation}
over a dataset $\mathcal{D}=\{(\bfx_n, \bfy_n)\}_{n=1}^P$, with inputs $\bfx_n \in \mathbb{R}^{N_i}$ and corresponding targets $\bfy_n \in \mathbb{R}^{N_o}$.
The input weights $\wa \in \mathbb{R}^{N_h \times N_i}$ project inputs to hidden-layer neural representation $\bfh_n = \wa\bfx_n \in \mathbb{R}^{N_h}$, which are projected to outputs via the readout weights $\wb \in \mathbb{R}^{N_o \times N_h}$.
We denote by $\bfX = [\bfx_1,.
      .., \bfx_P]$, $\bfY = [\bfy_1,..., \bfy_P]$, and $\bfH = [\bfh_1,..., \bfh_P]$ the matrices that contain all inputs, targets and hidden-layer representations, respectively.
The network's \ac{rsm} is then defined by
\begin{equation}
  \rsm = \bfX^T\wa^T\wa\bfX = \bfH^T\bfH,
\end{equation}
capturing pairwise similarities between inputs in the hidden representational space. \textcite{laurent2018deep} showed that, under the following assumptions:
\begin{assumption} \label{ass:convex-differentiable-loss}
  The loss function is convex and differentiable, \eg the mean-squared error loss.
\end{assumption}
\begin{assumption} \label{ass:no-bottleneck}
  The network is not bottle-necked, \ie $\min\left(N_i, N_o\right) \leq N_h$ \end{assumption} \noindent all local minima of a deep linear network are global and equivalent to the solution of the corresponding single-layer linear regression problem.
Notably, this result holds without assumptions on the structure of the training data.
In our setting, this permits the following definition (see \cref{app:sec:general-linear-solution}):
\begin{definition} \label{def:general-linear-solution}
  Under \cref{ass:convex-differentiable-loss,ass:no-bottleneck}, any pair of network weights satisfying
  \begin{equation} \label{eq:general-linear-solution}
    \wb\wa\sigxx = \sigyx
  \end{equation}
  is a globally optimal \ac{gls}, where
  \begin{equation}
    \sigxx = \frac{1}{P}\sum_{n=1}^P \bfx_n\bfx_n^T\quad \text{and} \quad \sigyx = \frac{1}{P}\sum_{n=1}^P \bfy_n\bfx_n^T
  \end{equation}
  denote the input and input-output covariance matrices.
\end{definition}
Note that the absence of suboptimal minima does not preclude the existence of other critical points, nor does it guarantee convergence of gradient-based algorithms.
To distinguish general weight matrices $\wa$ and $\wb$ from those that satisfy \cref{eq:general-linear-solution}, we denote the latter as optimal weights $\wao$ and $\wbo$.
Further, we denote the \ac{csvd}, as defined in \cref{app:sec:csvd}, of the inputs and least-squares solution as
\begin{equation}
  \svd(\bfX) = \bfA\bfB\bfC^T,
\end{equation}
and
\begin{equation}
  \svd(\sigyx(\sigxx)^+) = \bfU\bfS\bfV^T.
\end{equation}
In the following, we analytically study the full set of global solutions, the so called solution manifold, and partition it into subregions with distinct representational and computational properties.
Crucially, our analysis holds without assumptions on the structure of the training data and irrespective of how a solution is obtained, and thus does not depend on any particular learning or optimisation algorithm.
\Cref{fig:manifold}A visualises the entire solution manifold and its subregions for a simple example, providing some intuition for the formal definitions and theorems developed next.
\section{Partitioning the solution manifold of two-layer linear networks}
\label{sec:manifold}
\begin{figure*}[!t]
  \centering
  \includegraphics[width=\textwidth]{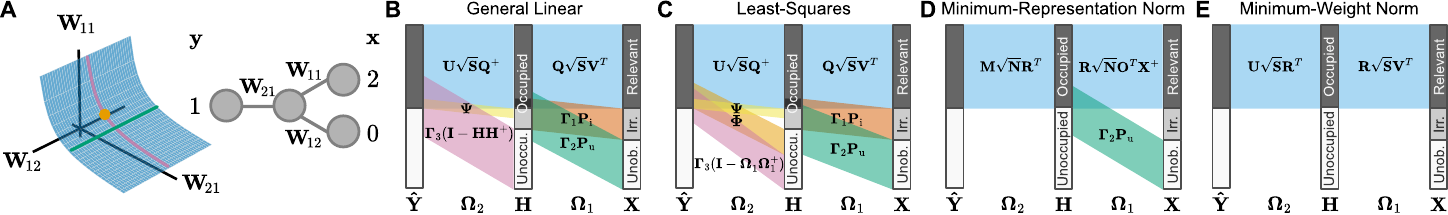}
  \caption{
    \textbf{Solution manifold}.
    \fa Schematic of solution manifold (left) for a two-layer linear network trained on a single training pair (right).
    The \ac{gls} (blue plane) reflects that weight $\mathbf{W}_{12}$ lies in the input null space and is unconstrained, while $\mathbf{W}_{11}$ and $\mathbf{W}_{21}$ are coupled, an increase in one requires a decrease in the other.
    Constrained LSS (pink), MRNS (green), MWNS (orange) are highlighted subregions of the manifold.
    \fb Schematic of the parametrisation of the GLS, showing how components of $\wao$ map relevant, irrelevant, and unobserved input directions to the hidden space, and how components of $\wbo$ map from unoccupied and occupied hidden directions to the output.
    Projections from irrelevant inputs can interfere with the core (blue), creating overlap (black) between the relevant (dark grey) and irrelevant (light grey) hidden space , which is cancelled by $\bfPsi$.
    \fc As in \fb, but for LSS.
    Projections from unobserved input directions into the occupied hidden space are cancelled by $\bfPhi$.
    \fd and \fe are as in \fb, but show MRNS and MWNS, respectively.
    The additional constraints remove projections and further restrict the core.
  }
  \label{fig:manifold}
  \vspace{-0.3cm}
\end{figure*}
Two-layer linear networks architectures are typically highly overparametrised, admitting many combinations of input and output weights that achieve the global optimum for a given task.
Formally, the set of all such network weights defines the solution manifold,
\begin{equation}
  \mathcal{M} = \Big\{\wbo\wao  : \wbo\wao\sigxx = \sigyx\Big\}~.
\end{equation}
We note that useful intuition about the manifold's structure can be gained by viewing it as the set of weight configurations related by invertible linear transformations.
For any invertible matrix $\bfQ \in \mathcal{R}^{N_h \times N_h}$, the weight pair
\begin{equation}
  \wbo \rightarrow \wbo\bfQ^{-1}\quad \text{and}\quad \wao \rightarrow \bfQ\wao
\end{equation}
implements the same input-output map and thus lies on the same manifold \parencite{baldi1989neural, saxe2014exact}.
In the context of a neural network architecture, these $\bfQ$-transformations redistribute how information is processed across layers, for example, by rotating and scaling intermediate representations.
However, since they preserve rank, they only fully characterise the solution manifold when the input and output dimensions are identical and the task has full rank, conditions that may not be met in real-world scenarios.
To refine this view, we partition the input space into three subspaces: \emph{Observed and relevant}, \emph{observed but irrelevant}, and \emph{unobserved} null directions, with corresponding projections $\bfP_\text{r} = \bfV\bfV^T$, $\bfP_\text{i} = \bfA\bfA^T - \bfV\bfV^T$, and $\bfP_\text{u} = \bfI - \bfA\bfA^T$.
The distinction between relevant and irrelevant directions arises because the input space can exceed the intrinsic dimensionality of the solution manifold (but not vice versa).
Specifically,
\begin{equation}
  r = \rank(\sigyx) \leq \min(\rank(\bfX), \rank(\bfY)),
\end{equation}
so when the input rank exceeds the target rank, some input directions are irrelevant to solving the task.
Likewise, the hidden space can be partitioned into subspaces corresponding to the hidden-layer representations of relevant, and irrelevant inputs, and all remaining unoccupied null directions.
Importantly, the hidden representations of relevant and irrelevant inputs may overlap, which necessitates compensation and introduces structural constraints on the form of valid solutions.
The following parametrized equation fully encapsulates this intricate structure of the solution manifold:
\begin{theorem} \label{the:general-linear-solution}
  Any \ac{gls} satisfies
  \begin{equation}
    \begin{aligned}
      \wao & = \bfQ\sqrt{\bfS}\bfV^T + \bfGamma_1\bfP_\text{i} + \bfGamma_2\bfP_\text{u}\ \ \text{and} \\
      \wbo & = \bfU\sqrt{\bfS}\bfQ^+ + \bfPsi + \bfGamma_3(\bfI - \bfH\bfH^+),
    \end{aligned}
  \end{equation}
  where $\bfQ \in \mathcal{R}^{N_h \times r}$ is an arbitrary full-column-rank matrix, $\bfGamma_1$, $\bfGamma_2 \in \mathcal{R}^{N_h \times N_i}$ are arbitrary matrices subject to the constraint $\rank(\bfQ\bfQ^+\bfGamma_1\bfP_\text{i}) \leq \rank((\bfI - \bfQ\bfQ^+)\bfGamma_1\bfP_\text{i})$, $\bfPsi = -\bfU\sqrt{\bfS}\bfQ^+\bfGamma_1\bfP_\text{i}\left[(\bfI - \bfQ\bfQ^+)\bfGamma_1\bfP_\text{i}\right]^+$, and $\bfGamma_3 \in \mathcal{R}^{N_o \times N_h}$ is an arbitrary matrix.
\end{theorem}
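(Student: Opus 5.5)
Write $\bfW := \wbo\wao$. The plan is to reduce the GLS condition to a statement about the product $\bfW$ restricted to each input subspace. Then I parametrise each weight matrix in turn, taking the hidden representation $\bfH = \wao\bfX$ as given when solving for $\wbo$.

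\textbf{Step 1: reduce the GLS condition.} Since $\sigxx = \frac1P \bfX\bfX^T = \frac1P\bfA\bfB^2\bfA^T$, the condition $\bfW\sigxx=\sigyx$ is equivalent to
\[
\bfW\bfA\bfA^T = \sigyx\sigxx^+ = \bfU\bfS\bfV^T .
\]
Here I use that the rows of $\sigyx$ lie in $\image(\bfA)$, so that $\image(\bfV)\subseteq\image(\bfA)$. Because $\bfP_\text{r}+\bfP_\text{i}+\bfP_\text{u}=\bfI$ and the three projections are mutually orthogonal, this splits into three conditions:
\[
\bfW\bfV = \bfU\bfS, \qquad \bfW\bfP_\text{i}=\bfZero, \qquad \bfW\bfP_\text{u} \text{ free}.
\]

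\textbf{Step 2: parametrise $\wao$.} Decompose $\wao = \wao\bfP_\text{r}+\wao\bfP_\text{i}+\wao\bfP_\text{u}$, and set $\bfGamma_1,\bfGamma_2$ equal to $\wao$ on the last two pieces. This is trivially general. Define $\bfM := \wao\bfV \in\mathbb{R}^{N_h\times r}$. The relevant condition $\wbo\bfM=\bfU\bfS$ forces $\rank\bfM\ge r$, since $\bfS$ is invertible. Hence $\bfM$ has full column rank, and writing $\bfQ := \bfM\sqrt{\bfS}^{-1}$ gives $\wao\bfP_\text{r}=\bfQ\sqrt{\bfS}\bfV^T$ with $\bfQ$ arbitrary of full column rank.

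\textbf{Step 3: parametrise $\wbo$ by solving a linear system.} Given $\wao$, the conditions on $\wbo$ are
\[
\wbo\bfQ=\bfU\sqrt{\bfS}, \qquad \wbo\bfG=\bfZero, \qquad \bfG:=\bfGamma_1\bfP_\text{i}.
\]
I would decompose $\bfG = \bfQ\bfQ^+\bfG + (\bfI-\bfQ\bfQ^+)\bfG$. Substituting the first condition into the second gives
\[
\wbo(\bfI-\bfQ\bfQ^+)\bfG = -\bfU\sqrt{\bfS}\bfQ^+\bfG .
\]
The general solution of the combined system is a particular solution plus any $\bfGamma_3$ times the projector onto the orthogonal complement of $\image[\bfQ,\ \bfG]$. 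On $\bfX$ this column space coincides with $\image(\bfH)$, since $\image\bfH = \image(\wao\bfA)$ is spanned by $\bfQ$ and $\bfG$ (the $\bfP_\text{u}$ part vanishes on $\bfX$). That complement term is $\bfGamma_3(\bfI-\bfH\bfH^+)$.

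\textbf{Step 4: the particular solution and the rank constraint.} For the particular solution take $\bfU\sqrt{\bfS}\bfQ^+ + \bfPsi$, with
\[
\bfPsi=-\bfU\sqrt{\bfS}\bfQ^+\bfG\,[(\bfI-\bfQ\bfQ^+)\bfG]^+ .
\]
Then $\bfPsi\bfQ=\bfZero$, because the row space of the pseudoinverse lies in $\image(\bfI-\bfQ\bfQ^+)$, which is orthogonal to $\bfQ$. So the relevant condition is preserved. What remains is to check $\wbo\bfG=\bfZero$, which requires
\[
\bfQ^+\bfG = \bfQ^+\bfG\,[(\bfI-\bfQ\bfQ^+)\bfG]^+(\bfI-\bfQ\bfQ^+)\bfG ,
\]
that is, the row space of $\bfQ^+\bfG$ lies in the row space of $(\bfI-\bfQ\bfQ^+)\bfG$. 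This solvability condition is exactly the feasibility constraint, and the rank condition in the statement encodes it.

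\textbf{Main obstacle.} I expect the hard part to be stating this constraint exactly and proving it is necessary and not merely sufficient. The claim is that solvability of $\wbo\bfQ=\bfU\sqrt{\bfS}$, $\wbo\bfG=\bfZero$ is equivalent to row-space containment. The reason is that if some $\bfz$ has $(\bfI-\bfQ\bfQ^+)\bfG\bfz=\bfZero$ but $\bfQ^+\bfG\bfz\ne\bfZero$, then $\bfG\bfz\in\image\bfQ$ is nonzero, so $\wbo\bfG\bfz = \bfU\sqrt{\bfS}\bfQ^+\bfG\bfz\neq\bfZero$, using that $\bfU\sqrt{\bfS}$ is injective. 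As written, the rank inequality is only a necessary consequence of the containment, so I would either interpret it as the containment condition or show that the two coincide in the intended sense. Completeness, meaning that every GLS has this form, then follows because each step above was an equivalence.
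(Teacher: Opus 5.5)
Your proposal is correct and follows the paper's proof essentially step for step. It uses the same $\bfP_\text{r},\bfP_\text{i},\bfP_\text{u}$ split, the same $\wao\bfP_\text{r}=\bfQ\sqrt{\bfS}\bfV^T$ with $\wbo\bfQ=\bfU\sqrt{\bfS}$, and the same pseudoinverse solution of the cancellation equation with homogeneous part $\bfGamma_3(\bfI-\bfH\bfH^+)$. Since the theorem only asserts that every GLS has this form, the necessity of the rank inequality, which you get from the row-space containment, is all that is needed.

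On your open point, take the first option and do not try to show that the two conditions coincide. The paper claims they are equivalent ``as both sides live in the same subspace $\bfP_\text{i}$'', and that claim is false. Take $N_h=2$, $r=1$, $\bfQ=\mathbf{e}_1$ and $\bfGamma_1\bfP_\text{i}=\mathbf{e}_1\mathbf{a}^T+\mathbf{e}_2\mathbf{b}^T$ for orthonormal irrelevant directions $\mathbf{a},\mathbf{b}$. Both ranks equal $1$, yet $\wbo\wao\mathbf{a}=\wbo\mathbf{e}_1=\bfU\sqrt{\bfS}\neq\bfZero$, so no GLS has these parameters. The exact condition is $\rank(\bfGamma_1\bfP_\text{i})=\rank((\bfI-\bfQ\bfQ^+)\bfGamma_1\bfP_\text{i})$, equivalently $\image(\bfGamma_1\bfP_\text{i})\cap\image(\bfQ)=\{\bfZero\}$. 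With the stated inequality, the parametrisation is therefore necessary but over-generates.
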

The first terms of $\wao$ and $\wbo$ implement the core input-output mapping; $\bfGamma_1$ and $\bfGamma_2$ project from task-irrelevant and unobserved input directions; $\bfGamma_3$ projects from the unoccupied hidden space; $\bfPsi$ cancels interference from irrelevant inputs that are projected into the core; and the rank constraint ensures that such a correction exists.
See \cref{app:sec:general-linear-solution} for a detailed proof and \cref{fig:manifold}B for a visualisation.

Next, we partition the solution manifold into distinct regions and subsequently analyse their respective representational and computational properties.
For proofs of \cref{the:least-squares-solution,the:minimum-weight-norm-solution,the:minimum-representation-norm-solution} refer to \cref{app:sec:partitioning-of-suolution-manifold}, and to \cref{fig:manifold}C-E for visualisations.
\begin{definition} \label{def:least-squares-solution}
  Any \ac{gls} that minimises the norm of the network function,
  \begin{equation}
    \argmin_{\wa, \wb}\left|\left|\wb\wa\right|\right|_F^2\ \ \text{s.t.}\ \ \wb\wa\sigxx = \sigyx
  \end{equation}
  is a \ac{lss}.
\end{definition}
\begin{theorem} \label{the:least-squares-solution}
  All \ac{lss} satisfy $\wbo\wao = \sigyx\left(\sigxx\right)^+$ and are exactly parametrised by
  \begin{equation}
    \begin{aligned}
      \wao & = \bfQ\sqrt{\bfS}\bfV^T + \bfGamma_1\bfP_\text{i} + \bfGamma_2\bfP_\text{u}\ \ \text{and} \\
      \wbo & = \bfU\sqrt{\bfS}\bfQ^+ + \bfPsi + \bfPhi + \bfGamma_3(\bfI - \wao^{}\wao^+),
    \end{aligned}
  \end{equation}
  subject to the definitions and constraints in \cref{the:general-linear-solution}, and the additional constraint that $\rank(\bfH\bfH^+\wao\bfP_\text{u}) \leq \rank((\bfI - \bfH\bfH^+)\wao\bfP_\text{u})$, and where $\bfPhi = -(\bfU\sqrt{\bfS}\bfQ^+ + \bfPsi)\wao\bfP_\text{u}\left[(\bfI - \bfH\bfH^+)\wao\bfP_\text{u}\right]^+$.
\end{theorem}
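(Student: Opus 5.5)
The proof has two parts. First we identify the product $\wbo\wao$ of every \ac{lss}. Then we add the condition that the product equals this map to the parametrisation of \cref{the:general-linear-solution} and solve for the extra freedom in $\wbo$.

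\emph{Step 1: the network function.} Write $\bfM=\wb\wa$. The constraint $\bfM\sigxx=\sigyx$ is linear in $\bfM$. Its solution set is the affine space $\sigyx(\sigxx)^+ + \bfZ(\bfI-\sigxx\sigxx^+)$ with $\bfZ$ arbitrary. This uses that $\sigyx(\sigxx)^+\sigxx=\sigyx$, which holds because the rows of $\sigyx$ lie in the row space of $\bfX^T$, i.e.\ in $\image\bfA$. The two summands are Frobenius-orthogonal, since $\sigyx(\sigxx)^+$ has rows in $\image\bfA$ and the second term annihilates $\image\bfA$. Hence $\|\bfM\|_F^2$ is minimised exactly when $\bfM\bfP_\text{u}=0$, giving $\bfM=\sigyx(\sigxx)^+=\bfU\bfS\bfV^T$. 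This value is attainable by $\wao=\bfQ\sqrt{\bfS}\bfV^T$, $\wbo=\bfU\sqrt{\bfS}\bfQ^+$, and by \cref{ass:no-bottleneck} we have $N_h\ge r$. So the \ac{lss} are exactly the \ac{gls} that additionally satisfy $\wbo\wao\bfP_\text{u}=0$.

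\emph{Step 2: imposing the extra condition.} Start from the \ac{gls} parametrisation of \cref{the:general-linear-solution}. Note that $\wao\bfP_\text{u}=\bfGamma_2\bfP_\text{u}$, because $\bfV^T\bfP_\text{u}=0$ and $\bfP_\text{i}\bfP_\text{u}=0$. Write $\wbo=\bfB_0+\bfE$ with $\bfB_0=\bfU\sqrt{\bfS}\bfQ^+ +\bfPsi$. The \ac{gls} condition says $\bfE$ must vanish on the occupied space $\image\bfH$, i.e.\ $\bfE=\bfE(\bfI-\bfH\bfH^+)$. The new condition is then
\[
\bfE(\bfI-\bfH\bfH^+)\wao\bfP_\text{u} = -\bfB_0\wao\bfP_\text{u}.
\]
Split $\wao\bfP_\text{u}$ into the part lying in $\image\bfH$ and the part $\bfK=(\bfI-\bfH\bfH^+)\wao\bfP_\text{u}$ lying outside it. This is a linear system $\bfE\bfK=\bfR$ with $\bfR=-\bfB_0\wao\bfP_\text{u}$. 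Its general solution is $\bfR\bfK^+ + \bfGamma(\bfI-\bfK\bfK^+)$, provided it is solvable, i.e.\ $\bfR\bfK^+\bfK=\bfR$. The particular solution $\bfR\bfK^+$ is exactly $\bfPhi$.

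The remaining free part $\bfGamma(\bfI-\bfK\bfK^+)$ must still vanish on $\image\bfH$. It therefore lives on the orthogonal complement of $\image\bfH+\image(\wao\bfP_\text{u})$, which is the complement of $\image\wao$. This yields the term $\bfGamma_3(\bfI-\wao\wao^+)$. Here we use $\image\wao=\image\bfH\oplus\image\bfK$, which follows from $\wao=\wao(\bfA\bfA^T)+\wao\bfP_\text{u}$ together with $\image(\wao\bfA)=\image\bfH$. We also have to confirm that $\bfPhi$ itself vanishes on $\image\bfH$. This holds because $\bfK^+$ has row space contained in $\image\bfK\perp\image\bfH$.

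\emph{Main obstacle: solvability.} The hard step is showing that $\bfR\bfK^+\bfK=\bfR$ is guaranteed by, and essentially equivalent to, the rank constraint $\rank(\bfH\bfH^+\wao\bfP_\text{u})\le\rank(\bfK)$. This mirrors the role of the analogous constraint for $\bfPsi$ in \cref{the:general-linear-solution}. The condition requires that the row space of $\bfB_0\bfH\bfH^+\wao\bfP_\text{u}$ lie in the row space of $\bfK$. Since both matrices are built from $\bfGamma_2\bfP_\text{u}$, I would first try the following route. Use the freedom in the parametrisation, namely the choice of $\bfGamma_2$ up to the representative with the same network function, to reduce to the case where the row spaces are nested. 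Then argue that the rank inequality is exactly what permits this: a component of the unobserved inputs landing in the occupied space can only be cancelled if it is accompanied by an independent component landing in the unoccupied space. This mirrors the paper's reading of the $\bfPsi$ constraint. If that equivalence proves delicate, I would state the constraint in the precise form $\bfR\bfK^+\bfK=\bfR$ and show that the rank condition is its natural sufficient form.

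Finally, I would verify the converse. Every tuple of this form satisfies $\wbo\wao\sigxx=\sigyx$, because all added terms vanish on $\image\bfH$. It also satisfies $\wbo\wao\bfP_\text{u}=0$, by the construction of $\bfPhi$. Hence each such tuple is an \ac{lss}, and the parametrisation is exact.
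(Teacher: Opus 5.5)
Your Step 2 is the paper's own computation: $\bfPhi=\bfR\bfK^+$ is the particular solution, and the homogeneous part reduces to $\bfGamma_3(\bfI-\wao\wao^+)$ via $\image\wao=\image\bfH\oplus\image\bfK$. Your Step 1 replaces the paper's Lagrangian with an orthogonal decomposition of the affine solution set, which is cleaner and directly gives uniqueness of the minimising product. Both steps are correct.

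The gap is where you suspected, and it cannot be closed as planned. The inequality $\rank(\bfH\bfH^+\wao\bfP_\text{u})\le\rank(\bfK)$ is neither sufficient nor necessary for $\bfR\bfK^+\bfK=\bfR$.

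\emph{Not sufficient.} Take $N_i=3$, $N_h=2$, $N_o=1$ and one sample $\bfx_1=\mathbf{e}_1$, $\bfy_1=1$, so $\bfU=\bfS=1$, $\bfV=\mathbf{e}_1$, $\bfP_\text{i}=\bfZero$. Let $\wao=\begin{pmatrix}1&1&0\\0&0&1\end{pmatrix}$, i.e.\ $\bfQ=(1,0)^T$, with $\bfGamma_2\bfP_\text{u}$ sending $\mathbf{e}_2$ into $\image\bfH$ and $\mathbf{e}_3$ out of it. Both ranks equal $1$, so the constraint holds. Since $\bfPsi=\bfPhi=\bfZero$ and $\wao\wao^+=\bfI$, the formula returns $\wbo=(1,0)$, and then $\wbo\wao=(1,1,0)\neq\sigyx\sigxx^+$. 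In fact no readout works: every GLS has $\wbo(1,0)^T=1$, hence $\wbo\wao\mathbf{e}_2=1$.

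\emph{Not necessary.} Take $\bfX=[\mathbf{e}_1,\mathbf{e}_2]$, $\bfY=[1,0]$, $\wao=\begin{pmatrix}1&0&0\\0&1&1\end{pmatrix}$ and $\wbo=(1,0)$. This is an LSS, yet $\rank(\bfH\bfH^+\wao\bfP_\text{u})=1>0=\rank(\bfK)$. The reason is that $\mathbf{e}_3$ lands on the irrelevant hidden direction, which $\bfB_0$ already annihilates.

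The missing idea is that cancellation happens one input direction at a time. An independent unobserved direction mapped into the unoccupied space cannot cancel the output of a different direction, so counting dimensions is the wrong test. The exact condition is the kernel inclusion $\ker\bfK\subseteq\ker(\bfB_0\wao\bfP_\text{u})$, equivalently $\bfR\bfK^+\bfK=\bfR$. In words: every unobserved direction whose hidden image lies entirely in $\image\bfH$ must be sent to zero by the readout.

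Your proposed reduction via the ``freedom in $\bfGamma_2$'' is not available. $\bfGamma_2\bfP_\text{u}=\wao\bfP_\text{u}$ is fixed by the very $\wao$ being parametrised.

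The paper's proof takes the same route and bridges this step by assertion. It says solvability is an image inclusion, and that this is equivalent to the rank inequality because both sides live in $\bfP_\text{u}$. That equivalence does not hold. Also, the pseudo-inverse always exists; what can fail is that $\bfR\bfK^+$ solves the equation.

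So your fallback is the right move, but drop the claim that the rank condition is a sufficient form of it. Replace the rank constraint by the kernel inclusion. Do the same for the inherited constraint on $\bfGamma_1$, using $\ker((\bfI-\bfQ\bfQ^+)\bfGamma_1\bfP_\text{i})\subseteq\ker(\bfQ^+\bfGamma_1\bfP_\text{i})$; there the rank inequality is necessary but likewise not sufficient. With these replacements, your argument, including the converse check, is a complete proof of an exact parametrisation. As stated with rank inequalities, the theorem's ``exactly parametrised'' claim fails in both directions.
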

Here, $\bfPhi$ cancels interference from unobserved inputs projected into the occupied hidden space, with the rank constraint ensuring a correction exists.
\begin{definition} \label{def:minimum representation-norm-solution}
  Any \ac{gls} for which the norm of the hidden-layer representations and readout weights is minimised
  \begin{equation}
    \begin{aligned}
      \argmin_{\wa, \wb}||\wa\bfX|| & _F^2 + ||\wb||_F^2                   \\
                                    & \text{s.t.}\ \wb\wa\sigxx = \sigyx~,
    \end{aligned}
  \end{equation}
  is a \ac{mrns}.
\end{definition}
\begin{theorem} \label{the:minimum-representation-norm-solution}
  All \ac{mrns} are parametrised by
  \begin{equation}
    \wbo = \bfM\sqrt{\bfN}\bfR^T\ \ \text{and}\ \ \wao = \bfR\sqrt{\bfN}\bfO^T\bfX^+ + \bfGamma_2\bfP_\text{u},
  \end{equation}
  where $\bfR \in \mathcal{R}^{N_h\times r}$ is an arbitrary (semi\nobreakdash-)orthonormal matrix, and
  \begin{equation}
    \svd(\bfY\bfC\bfC^T) = \bfM\bfN\bfO^T.
  \end{equation}
\end{theorem}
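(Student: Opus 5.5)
The plan is to split $\wao$ into its action on the observed input space and its action on the unobserved directions, and then to reduce the problem to a balanced matrix factorisation. Write $\wao = \wao\bfA\bfA^T + \wao\bfP_\text{u}$. Since $\bfX = \bfA\bfB\bfC^T$, the objective $\|\wao\bfX\|_F^2 + \|\wbo\|_F^2$ and the constraint $\wbo\wao\sigxx = \sigyx$ involve only $\wao\bfA\bfA^T$. Indeed $\sigxx = \frac{1}{P}\bfX\bfX^T$ has range $\image(\bfA)$. The component $\wao\bfP_\text{u}$ is therefore completely unconstrained, which produces the free term $\bfGamma_2\bfP_\text{u}$. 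Next define the hidden representation $\bfH = \wao\bfX$, so that $\wao\bfA\bfA^T = \bfH\bfX^+$ (using $\bfX\bfX^+ = \bfA\bfA^T$). The problem then becomes one over pairs $(\wbo, \bfH)$: minimise $\|\bfH\|_F^2 + \|\wbo\|_F^2$ subject to the constraint, where $\bfH$ ranges over matrices with row space contained in $\image(\bfC)$.

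The second step rewrites the constraint in terms of the product $\wbo\bfH$. The condition $\wbo\wao\bfX\bfX^T = \bfY\bfX^T$ is equivalent to $\wbo\bfH\bfC = \bfY\bfC$, because $\bfX^T = \bfC\bfB\bfA^T$ and $\bfB\bfA^T$ has full row rank. Together with the row-space condition on $\bfH$, this gives $\wbo\bfH = \bfY\bfC\bfC^T$. Conversely, any factorisation $\wbo\bfH = \bfY\bfC\bfC^T$ in which $\bfH$ has row space inside $\image(\bfC)$ is feasible. Note that the row space is automatically inside $\image(\bfC)$ at a minimiser, since any component outside it can be removed without harm. So the reduced problem is: minimise $\|\bfH\|_F^2+\|\wbo\|_F^2$ subject to $\wbo\bfH = \bfZ := \bfY\bfC\bfC^T$.

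The third step is the classical variational characterisation of the nuclear norm, which I expect to be the main obstacle. For any factorisation $\bfZ = \wbo\bfH$ we have $\|\wbo\|_F^2 + \|\bfH\|_F^2 \ge 2\|\wbo\|_F\|\bfH\|_F \ge 2\|\bfZ\|_*$, and equality is attained by $\wbo = \bfM\sqrt{\bfN}\bfR^T$, $\bfH = \bfR\sqrt{\bfN}\bfO^T$ for any semi-orthonormal $\bfR$. The delicate part is proving that \emph{every} minimiser has this form. To do this I would show that minimisers are balanced. Perturbing $(\wbo,\bfH)\to(\wbo e^{-t\bfK}, e^{t\bfK}\bfH)$ preserves the product, so stationarity in $t$ gives $\wbo^T\wbo = \bfH\bfH^T$. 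Balancedness implies that $\wbo$ and $\bfH^T$ share right singular vectors and singular values. Write $\wbo = \bfM'\bfD\bfR^T$ and $\bfH = \bfR\bfD\bfO'^T$ with the same $\bfR$ and $\bfD$. Then $\bfZ = \bfM'\bfD^2\bfO'^T$ is an SVD, so $\bfD = \sqrt{\bfN}$, and $\bfM'$, $\bfO'$ coincide with $\bfM$, $\bfO$ up to the joint rotation within degenerate singular subspaces. That rotation can be absorbed into $\bfR$. Here one must take care with repeated singular values and with zero singular values, where the corresponding columns are removed in the compact SVD. The rank of $\bfZ$ equals $r$, because $\sigyx$ and $\bfY\bfC$ have the same rank, so $\bfR$ has $r$ columns.

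Finally, substitute back: $\wao = \bfH\bfX^+ + \bfGamma_2\bfP_\text{u} = \bfR\sqrt{\bfN}\bfO^T\bfX^+ + \bfGamma_2\bfP_\text{u}$ and $\wbo = \bfM\sqrt{\bfN}\bfR^T$. Then check directly that every such pair is feasible and attains the bound $2\|\bfZ\|_*$. This establishes exactness of the parametrisation.
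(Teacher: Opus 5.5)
Your proof is correct. Its core is the same as the paper's: balancedness $\wbo^T\wbo = \bfH\bfH^T$ with $\bfH = \wao\bfX$, shared singular vectors, matching the product against $\svd(\bfY\bfC\bfC^T)$, and recovering $\wao$ from $\wao\bfX = \bfR\sqrt{\bfN}\bfO^T$ up to a free $\bfP_\text{u}$ component.

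The difference is in how you reach that core. The paper writes a Lagrangian with multiplier $\bfLambda$ for the original constraint and combines the two stationarity equations to get balancedness. You instead reduce first to the factorisation problem $\wbo\bfH = \bfY\bfC\bfC^T$ and get balancedness from stationarity along the feasible curves $(\wbo e^{-t\bfK}, e^{t\bfK}\bfH)$. Your argument needs no constraint qualification. The paper never checks one, and it can fail: the derivative of $(\wa,\wb)\mapsto \wb\wa\sigxx$ is never onto $\mathbb{R}^{N_o\times N_i}$ when $\sigxx$ is singular, and at rank-deficient points it need not be onto even the relevant subspace. So the paper assumes rather than shows that a multiplier exists at a minimiser, although one does exist here.

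Your bound $\|\wbo\|_F^2 + \|\bfH\|_F^2 \ge 2\|\bfY\bfC\bfC^T\|_*$ also gives three things the paper's first-order analysis leaves implicit: existence of minimisers, the optimal value, and the converse that every parametrised pair is a global minimiser. You also correctly absorb the rotation within degenerate singular subspaces into $\bfR$. The paper simply identifies the singular vectors of $\wbo$ and $\wao\bfX$ with $\bfM$ and $\bfO$, which is only true up to that rotation.

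What the paper's route buys is uniformity: the same Lagrangian template is reused almost verbatim for the LSS and MWNS theorems.
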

\begin{definition} \label{def:minimum weight-norm-solution}
  Any \ac{gls} for which the sum of the norm of the weight matrices is minimised
  \begin{equation}
    \argmin_{\wa, \wb}||\wa||_F^2 + ||\wb||_F^2\ \text{s.t.}\ \wb\wa\sigxx = \sigyx
  \end{equation}
  is a \ac{mwns}.
\end{definition}
\begin{theorem} \label{the:minimum-weight-norm-solution}
  All \ac{mwns} are parametrised by
  \begin{equation}
    \wbo = \bfU\sqrt{\bfS}\bfR^T\ \ \text{and}\ \ \wao = \bfR\sqrt{\bfS}\bfV^T.
  \end{equation}
\end{theorem}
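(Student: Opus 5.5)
The plan is to reduce the problem to a constrained minimisation over the product $\bfW = \wb\wa$, and then use the classical balancing argument for the split of $\bfW$ into two factors.

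\textbf{Step 1: the unobserved directions.} Any component $\wa\bfP_\text{u}$ leaves the constraint $\wb\wa\sigxx = \sigyx$ unchanged, because $\bfP_\text{u}\sigxx = 0$. It does, however, strictly increase $||\wa||_F^2$. Similarly, $\wa\bfP_\text{i}$ enters the product only through $\wb\wa\bfP_\text{i}\sigxx$. Since $\bfP_\text{i}\sigxx$ spans the observed directions orthogonal to $\bfV$, while $\sigyx$ vanishes there, we need $\wb\wa\bfP_\text{i}\sigxx = 0$. So at the optimum we expect $\wa(\bfP_\text{i}+\bfP_\text{u}) = 0$. I would argue this directly: replacing $\wa$ by $\wa\bfV\bfV^T$ preserves feasibility, because the product on the relevant directions is unchanged and the product on the other directions becomes zero, which is still consistent with the constraint. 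The replacement also does not increase the norm. Hence every MWNS has $\wa = \wa\bfV\bfV^T$, and the product satisfies $\wb\wa = \wb\wa\bfV\bfV^T$. Restricted to the observed span, the constraint then forces $\wb\wa\bfV = \bfU\bfS$, so $\wb\wa = \bfU\bfS\bfV^T$.

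\textbf{Step 2: minimum-norm factorisation.} Next we minimise $||\wa||_F^2+||\wb||_F^2$ subject to $\wb\wa = \bfW := \bfU\bfS\bfV^T$. I would use the standard inequality
\[
||\wa||_F^2+||\wb||_F^2 \ge 2||\wb\wa||_* = 2\tr(\bfS),
\]
which follows from the AM--GM inequality combined with $||\bfA\bfB||_* \le ||\bfA||_F||\bfB||_F$. The factorisation $\wb = \bfU\sqrt{\bfS}\bfR^T$, $\wa = \bfR\sqrt{\bfS}\bfV^T$ with $\bfR^T\bfR = \bfI_r$ attains this bound, which shows sufficiency.

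\textbf{Step 3: necessity (main obstacle).} The difficulty is to show that every minimiser has exactly this form. At any constrained minimiser, the Lagrange/stationarity conditions for variations $\wa\to \bfG\wa$, $\wb\to\wb\bfG^{-1}$ give balancedness:
\[
\wa\wa^T = \wb^T\wb.
\]
Let $\svd(\wa) = \bfR\bfD\bfZ^T$. Balancedness forces $\wb = \bfK\bfD\bfR^T$ for some semi-orthonormal $\bfK$. Then $\wb\wa = \bfK\bfD^2\bfZ^T$ is a compact SVD of $\bfU\bfS\bfV^T$. By uniqueness of the compact SVD up to rotations within blocks of equal singular values, we get $\bfD^2 = \bfS$ and $\bfK = \bfU\bfO$, $\bfZ = \bfV\bfO$ for a block-orthogonal $\bfO$ that commutes with $\bfS$. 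Absorbing $\bfO$ into $\bfR$, using $\bfO\sqrt{\bfS} = \sqrt{\bfS}\bfO$, yields the claimed form. The care needed here is in handling repeated singular values and in justifying the stationarity condition. One could instead obtain the latter from the equality case of the Cauchy--Schwarz step, which forces $\wa^T\wa$ and $\wb\wb^T$ to be aligned with $\bfW$.
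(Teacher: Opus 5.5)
Your argument is correct up to two small repairs noted below, and it takes a different route from the paper's.

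\textbf{The paper's route.} The paper writes one Lagrangian with multiplier $\bfLambda$ and reads everything off the stationarity equations $\wa = -\tfrac12\wb^T\bfLambda\sigxx$ and $\wb = -\tfrac12\bfLambda\sigxx\wa^T$. Combining them gives balancedness $\wb^T\wb = \wa\wa^T$. Substituting the first into the constraint and cancelling $\sigxx$ gives $\wb\wa = \sigyx\sigxx^+ = \bfU\bfS\bfV^T$. It then equates compact SVDs.

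\textbf{Your route.} You get the product condition from a projection argument. You get balancedness as a first-order condition along the feasible curve $t\mapsto(e^{t\bfM}\wa,\ \wb e^{-t\bfM})$, whose derivative $2\langle\bfM,\wa\wa^T-\wb^T\wb\rangle_F$ must vanish for every $\bfM$. You also add a sufficiency step via $\|\wa\|_F^2+\|\wb\|_F^2\ge 2\|\wb\wa\|_*$.

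\textbf{What each buys.} The paper's route is shorter and uniform with its LSS and MRNS proofs. However, it has three weak points. It tacitly assumes a multiplier exists at every minimiser; this is not automatic, because the bilinear constraint is not regular at rank-deficient points, though $\bfLambda=-2\bfU\bfV^T\sigxx^+$ does work here. It identifies singular vectors exactly rather than up to block rotations. And it leaves implicit that every member of the family is optimal. Your route avoids all three: it needs no constraint qualification, identifies the optimal value $2\tr(\bfS)$, and absorbs repeated singular values into $\bfR$ via the block-orthogonal $\bfO$.

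The equality case you mention at the end would in fact give the whole theorem at once. Every feasible pair satisfies $\wb\wa\bfV=\bfU\bfS$, so
$\tr(\bfS)=\langle\wb^T\bfU,\wa\bfV\rangle_F\le\|\wb^T\bfU\|_F\|\wa\bfV\|_F\le\|\wb\|_F\|\wa\|_F\le\tfrac12(\|\wa\|_F^2+\|\wb\|_F^2)$.
Equality forces $\wa=\wa\bfV\bfV^T$, $\wb=\bfU\bfU^T\wb$, and $\wb^T\bfU=\wa\bfV=:\bfM$ with $\bfM^T\bfM=\bfS$, i.e.\ $\bfR=\bfM\bfS^{-1/2}$.

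\textbf{Two repairs in Step 1.}
First, $\sigyx$ itself need not vanish on the range of $\bfP_\text{i}$; it does only when inputs are whitened. What vanishes there is $\sigyx\sigxx^+=\bfU\bfS\bfV^T$. The clean statement is that the constraint is equivalent to $\wb\wa\bfA\bfA^T=\bfU\bfS\bfV^T$. This forces $\wb\wa\bfP_\text{i}=0$ and makes your replacement $\wa\to\wa\bfV\bfV^T$ feasible.

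Second, ``does not increase the norm'' only shows that some minimiser has $\wa=\wa\bfV\bfV^T$. To get this for every MWNS, use $\|\wa\|_F^2=\|\wa\bfV\bfV^T\|_F^2+\|\wa(\bfI-\bfV\bfV^T)\|_F^2$, so the decrease is strict unless $\wa=\wa\bfV\bfV^T$.
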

We note, that the relation between \ac{mwns} and the singular value decomposition of the least-squares solution has been previously derived under strong assumptions, namely that $\sigxx = \bfI$, $N_i = N_o$ and that $\sigyx$ has full rank \parencite[see appendix S14-S15]{saxe2019mathematical}.
Further, we note that
\begin{corollary}
  \ac{mrns} and \ac{mwns} are identical if inputs are whitened, \emph{i.e.}, $\sigxx = \bfI$.
\end{corollary}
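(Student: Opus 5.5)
The plan is to compare the two parametrisations in \cref{the:minimum-representation-norm-solution,the:minimum-weight-norm-solution} directly. I will show that whitening forces $\bfM\sqrt{\bfN}\bfO^T\bfX^+ = \bfU\sqrt{\bfS}\bfV^T$ up to the free orthonormal factor $\bfR$. I will also show that the residual term $\bfGamma_2\bfP_\text{u}$ vanishes. A cleaner alternative works at the level of the objectives. With $\sigxx = \frac{1}{P}\bfX\bfX^T = \bfI$ we get
\begin{equation}
\|\wa\bfX\|_F^2 = \tr(\wa\bfX\bfX^T\wa^T) = P\,\|\wa\|_F^2 .
\end{equation}
So the MRNS objective equals $P\|\wa\|_F^2 + \|\wb\|_F^2$, while the constraint sets of \cref{def:minimum representation-norm-solution,def:minimum weight-norm-solution} coincide. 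I will therefore take the parametric route but use this identity as a guide. Under a normalisation convention the factor $P$ is absorbed; otherwise the rescaling $\wa\to c\wa$, $\wb\to\wb/c$ identifies the two minimiser sets up to a fixed global scale. I will state the identity with that convention explicitly.

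\textbf{Step 1 (no unobserved directions).} Whitening gives $\bfX\bfX^T = P\bfI$, so $\rank(\bfX)=N_i$ and the columns of $\bfA$ span $\mathbb{R}^{N_i}$. Hence $\bfA\bfA^T=\bfI$ and $\bfP_\text{u} = \bfI - \bfA\bfA^T = 0$, which kills $\bfGamma_2\bfP_\text{u}$ in the MRNS.

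\textbf{Step 2 (matching the SVDs).} Here $\bfX = \bfA\bfB\bfC^T$ with $\bfB = \sqrt{P}\,\bfI$ and $\bfA$ orthogonal. Thus $\bfX^+ = \bfC\bfA^T/\sqrt{P}$ and $\bfC\bfC^T = \bfX^T\bfX/P$. Consequently
\begin{equation}
\bfY\bfC\bfC^T = \tfrac{1}{P}\bfY\bfX^T\bfX = \sigyx\bfX = \bfU\bfS\bfV^T\bfA\bfB\bfC^T = \sqrt{P}\,\bfU\bfS(\bfV^T\bfA)\bfC^T ,
\end{equation}
using $\sigyx(\sigxx)^+ = \sigyx$. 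The matrix $\bfC\bfA^T\bfV$ has orthonormal columns, so this display is itself a compact SVD. Up to the usual rotational freedom within repeated singular values, we can take $\bfM=\bfU$, $\bfN=\sqrt{P}\bfS$ and $\bfO = \bfC\bfA^T\bfV$.

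\textbf{Step 3 (substitution).} Substituting into \cref{the:minimum-representation-norm-solution} gives
\begin{equation}
\bfO^T\bfX^+ = \bfV^T\bfA\bfC^T\bfC\bfA^T/\sqrt{P} = \bfV^T/\sqrt{P}.
\end{equation}
It follows that $\wao = P^{1/4}\,\bfR\sqrt{\bfS}\bfV^T\cdot P^{-1/2}$ and $\wbo = P^{1/4}\,\bfU\sqrt{\bfS}\bfR^T$. These coincide with the MWNS family of \cref{the:minimum-weight-norm-solution} up to the balancing scalar $P^{\pm1/4}$, and exactly under the normalisation that makes the objectives agree.

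The main obstacle is this scalar bookkeeping: $\|\wa\bfX\|_F^2$ versus the $1/P$ in $\sigxx$. I will resolve it by reading the MRNS objective with the same $1/P$ normalisation as the covariances, or by stating the equivalence modulo the rescaling symmetry. The non-uniqueness of the cSVD under repeated singular values also needs care, but it is absorbed by the arbitrary orthonormal $\bfR$.
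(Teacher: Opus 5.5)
Your proposal is correct. The paper states this corollary without any proof, and your treatment is more careful than the statement itself. The factor you flag is genuine, not an artefact of your bookkeeping. With $\sigxx=\frac{1}{P}\bfX\bfX^T=\bfI$, the MRNS objective is $P\|\wa\|_F^2+\|\wb\|_F^2$. Your Steps 1--3 give $\bfP_\text{u}=\bfZero$, $\bfM=\bfU$, $\bfN=\sqrt{P}\,\bfS$, $\bfO=\bfC\bfA^T\bfV$ and $\bfO^T\bfX^+=\bfV^T/\sqrt{P}$. These correctly show that the MRNS are exactly the MWNS rescaled by $(\wa,\wb)\mapsto(P^{-1/4}\wa,P^{1/4}\wb)$. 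The same factor appears in the paper's own RSM corollaries, since under whitening $\bfO\bfN\bfO^T=P^{-1/2}\,\bfX^T\bfV\bfS\bfV^T\bfX$. So the corollary holds literally only for $P=1$, or if the representation term is normalised as $\frac{1}{P}\|\wa\bfX\|_F^2=\tr(\wa\sigxx\wa^T)$. State that precise version as the claim you prove, rather than appealing to an unspecified ``normalisation convention''.

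On the route, I would promote your ``cleaner alternative'' to the main proof. The map $(\wa,\wb)\mapsto(P^{1/4}\wa,P^{-1/4}\wb)$ is a bijection of the common feasible set $\{\wb\wa=\sigyx\}$. It satisfies $\|P^{1/4}\wa\|_F^2+\|P^{-1/4}\wb\|_F^2=P^{-1/2}\big(P\|\wa\|_F^2+\|\wb\|_F^2\big)$, so it carries MRNS minimisers exactly onto MWNS minimisers. This argument is two lines and needs no discussion of cSVD non-uniqueness. It also does not lean on the two Lagrange-multiplier parametrisation theorems at all. Your parametric computation is then a useful consistency check that those theorems agree, and it displays the explicit rescaled form of the solutions.
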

A key difference between the four solution types lies in how they constrain the image and kernel of the weight matrices, which map between input, hidden, and output spaces.
\ac{gls} impose minimal constraints.
\ac{lss} restrict projections to and from null spaces in the input and hidden layers.
\ac{mrns} further reduce freedom in the null space, eliminate irrelevant projections, and constrain the core solution itself.
\ac{mwns}, the most restrictive class, eliminate all irrelevant and null-space projections, and enforce balance in the core by requiring equal contributions from the input and output weights.
\erin{Relate to balancedness from Clem's work.}
While this perspective clarifies how different subregions of the solution manifold constrain the structure of the weight matrices, it does not address a key question: how these solutions differ in their hidden-layer representations.

\subsection{Hidden-layer representations}
\label{sec:hidden-layer}

Understanding hidden-layer representations begins with identifying the degrees of freedom in the input weights, which govern how inputs are mapped into hidden space.
Intuitively, input weights are constrained only by the need to preserve sufficient task-relevant information for the output weights to solve the task.
In this section, we go beyond this intuition by leveraging the exact parametrisations of $\wao$ to precisely characterise the degrees of freedom in hidden-layer representations.
Proofs for \cref{cor:lss-rsm,corr:mrns-rsm,corr:mwns-rsm} are in \cref{app:sec:hidden-layer-representations}.
We begin by noting that \ac{gls} and \ac{lss} differ only in how they handle projections from unobserved input and unoccupied hidden directions and thus implement the same input-output map on the training data.
As a result, they exhibit identical degrees of freedom in their hidden-layer representations
\begin{equation}
  \bfH = \bfQ\sqrt{\bfS}\bfV^T\bfX + \bfGamma_1\bfP_\text{i}\bfX.
\end{equation}
Since $\bfQ$ can be any full-column-rank matrix and $\bfGamma_1$ is free up to a rank constraint, \ac{gls} and \ac{lss} support nearly arbitrary hidden-layer representations.
To illustrate this, we consider a semantic learning task linking items to positions within a hierarchical structure~(\cref{fig:representations}A,B).
For example, we can select a point on the solution manifold where the hidden-layer representations of the items form the shape of an elephant~(\cref{fig:representations}C).
\begin{corollary}
  \label{cor:lss-rsm}
  \ac{gls} and \ac{lss} permit any \ac{rsm} of the form
  \begin{equation}
    \begin{aligned}
      \rsm & = \bfX^T(\bfV\sqrt{\bfS}\bfQ^T\bfQ\sqrt{\bfS}\bfV^T + \bfV\sqrt{\bfS}\bfQ^T\bfGamma_1\bfP_\text{i}                       \\
           & \qquad\quad+ \bfP_\text{i}^T\bfGamma_1^T\bfQ\sqrt{\bfS}\bfV^T + \bfP_\text{i}^T\bfGamma_1^T\bfGamma_1\bfP_\text{i})\bfX.
    \end{aligned}
  \end{equation}
\end{corollary}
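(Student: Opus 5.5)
The plan is to substitute the parametrisation of $\wao$ from \cref{the:general-linear-solution} (which \cref{the:least-squares-solution} shares verbatim) into the definition $\rsm = \bfX^T\wao^T\wao\bfX = \bfH^T\bfH$. Three points need to be established, in this order: that the unobserved-direction term drops out, that the remaining product expands to the four stated terms, and that every choice of $(\bfQ,\bfGamma_1)$ admitted by the theorems is actually realised.

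\textbf{Removing $\bfGamma_2$.} From the \ac{csvd} $\bfX = \bfA\bfB\bfC^T$, the matrix $\bfA\bfA^T$ is the orthogonal projector onto the column space of $\bfX$, so $\bfA\bfA^T\bfX = \bfX$. Hence
\begin{equation}
  \bfP_\text{u}\bfX = (\bfI - \bfA\bfA^T)\bfX = \bfZero .
\end{equation}
Therefore
\begin{equation}
  \bfH = \wao\bfX = \bfQ\sqrt{\bfS}\bfV^T\bfX + \bfGamma_1\bfP_\text{i}\bfX,
\end{equation}
which matches the expression stated before the corollary and holds equally for \ac{gls} and \ac{lss}. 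The rank constraint involving $\bfP_\text{u}$ and the $\bfPhi$ term in \cref{the:least-squares-solution} only touch $\wbo$ and the unobserved directions. They therefore place no restriction on $\bfH$, so both solution classes yield the same set of representations.

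\textbf{Expanding the product.} Next I compute
\begin{equation}
  \bfH^T\bfH = \bfX^T\left(\bfQ\sqrt{\bfS}\bfV^T + \bfGamma_1\bfP_\text{i}\right)^T\left(\bfQ\sqrt{\bfS}\bfV^T + \bfGamma_1\bfP_\text{i}\right)\bfX .
\end{equation}
Using $\bfS^T=\bfS$ and hence $\sqrt{\bfS}^T = \sqrt{\bfS}$, the four cross terms give exactly the stated formula. I would keep $\bfP_\text{i}^T$ as written, although it equals $\bfP_\text{i}$. That it is a symmetric projector requires $\bfV\bfV^T \preceq \bfA\bfA^T$, i.e.\ the row space of $\sigyx\sigxx^+$ lying inside the column space of $\bfX$; this follows from $\sigxx^+ = \frac{1}{P}(\bfX\bfX^T)^+$ having range $\image(\bfA)$.

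\textbf{Showing every such \ac{rsm} is attained.} The word ``permit'' means each $(\bfQ,\bfGamma_1)$ satisfying the constraints of \cref{the:general-linear-solution} (respectively \cref{the:least-squares-solution}) corresponds to some actual solution. I expect this to be the main, if mild, obstacle.
\begin{itemize}
  \item For \ac{gls}, \cref{the:general-linear-solution} already supplies $\wbo$ for any admissible $(\bfQ,\bfGamma_1,\bfGamma_2)$, so every choice is realised.
  \item For \ac{lss}, the extra rank constraint involves $\wao\bfP_\text{u} = \bfGamma_2\bfP_\text{u}$, which $\bfH$ does not see. Choosing $\bfGamma_2 = \bfZero$ satisfies it trivially and leaves $\bfH$ unchanged.
\end{itemize}
Hence both classes realise exactly the same family of \ac{rsm}s.
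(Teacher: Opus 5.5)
Your first two steps (discarding $\bfGamma_2$ via $\bfP_\text{u}\bfX=\bfZero$, then expanding $\bfH^T\bfH$) are exactly the paper's proof. The paper stops there, so it only shows that every GLS or LSS has an RSM of the stated form.

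\textbf{The gap is in your third step}, which you call mild. It uses \cref{the:general-linear-solution} as a \emph{sufficient} parametrisation: that the displayed $\wbo$ is a valid readout for every $(\bfQ,\bfGamma_1)$ obeying the rank inequality. Neither the theorem's statement (``any GLS satisfies'') nor its proof gives this, and it is false. Write $\bfM=(\bfI-\bfQ\bfQ^+)\bfGamma_1\bfP_\text{i}$. Then $\bfPsi\bfGamma_1\bfP_\text{i}=-\bfU\sqrt{\bfS}\bfQ^+\bfGamma_1\bfP_\text{i}\bfM^+\bfM$, so $\bfPsi$ cancels the interference only under a row-space inclusion, not a rank comparison.

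\textbf{Counterexample.} Let $\bfX$ have full row rank with $N_i=3$, so $\bfP_\text{u}=\bfZero$ and GLS and LSS coincide. Let $N_h=2$, $N_o=1$ and $\sigyx\sigxx^+=s\,\mathbf{v}^T$ with $s>0$, $\|\mathbf{v}\|_2=1$, i.e.\ $\bfU=1$, $\bfS=s$, $\bfV=\mathbf{v}$, $r=1$. Then $\bfP_\text{i}=\mathbf{u}_1\mathbf{u}_1^T+\mathbf{u}_2\mathbf{u}_2^T$ for an orthonormal basis $\mathbf{u}_1,\mathbf{u}_2$ of $\mathbf{v}^\perp$. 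Take $\bfQ=\mathbf{e}_1$ and $\bfGamma_1=\mathbf{e}_1\mathbf{u}_1^T+\mathbf{e}_2\mathbf{u}_2^T$.
\begin{itemize}
  \item Both $\bfQ\bfQ^+\bfGamma_1\bfP_\text{i}=\mathbf{e}_1\mathbf{u}_1^T$ and $(\bfI-\bfQ\bfQ^+)\bfGamma_1\bfP_\text{i}=\mathbf{e}_2\mathbf{u}_2^T$ have rank one, so the paper's constraint holds.
  \item We get $\wao=\mathbf{e}_1\mathbf{a}^T+\mathbf{e}_2\mathbf{u}_2^T$ with $\mathbf{a}=\sqrt{s}\,\mathbf{v}+\mathbf{u}_1$.
  \item For any $\wbo=(b_1,b_2)$, the condition $\wbo\wao=s\,\mathbf{v}^T$ forces $b_1=\sqrt{s}$ from the component along $\mathbf{v}$ and $b_1=0$ from the component along $\mathbf{u}_1$. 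Indeed $\bfPsi=\bfZero$ here.
  \item Worse, since $\bfX$ has full row rank, the RSM fixes $\wao^T\wao=\mathbf{a}\mathbf{a}^T+\mathbf{u}_2\mathbf{u}_2^T$. Every input weight matrix with this RSM is therefore $\mathbf{T}[\mathbf{a},\mathbf{u}_2]^T$ with $\mathbf{T}^T\mathbf{T}=\bfI$, and fails in the same way.
\end{itemize}
So this matrix ``of the stated form'' is the RSM of no GLS at all.

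\textbf{The fix.} The correct admissibility condition is the solvability condition of $\wbo(\bfI-\bfQ\bfQ^+)\bfGamma_1\bfP_\text{i}=-\bfU\sqrt{\bfS}\bfQ^+\bfGamma_1\bfP_\text{i}$. Because $\bfU\sqrt{\bfS}$ and $\bfQ$ have full column rank, this is $\image\big((\bfQ\bfQ^+\bfGamma_1\bfP_\text{i})^T\big)\subseteq\image\big(((\bfI-\bfQ\bfQ^+)\bfGamma_1\bfP_\text{i})^T\big)$. The rank inequality is merely necessary for it; the paper loses this when it passes from an image containment to ranks. Under the inclusion, your step can be repaired by direct verification:
\begin{itemize}
  \item $\bfM^+\bfQ=\bfZero$ gives $\bfPsi\bfQ=\bfZero$ and $\bfPsi\bfGamma_1\bfP_\text{i}=-\bfU\sqrt{\bfS}\bfQ^+\bfGamma_1\bfP_\text{i}$.
  \item $(\bfI-\bfH\bfH^+)\wao\bfA\bfA^T=\bfZero$ makes the $\bfGamma_3$ term inert on observed inputs.
\end{itemize}
So the pair is a GLS, and your choice $\bfGamma_2=\bfZero$ then makes it an LSS. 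Without that restriction on $(\bfQ,\bfGamma_1)$, you (like the paper) can only claim the forward direction.
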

\begin{figure*}[t]
  \centering
  \includegraphics[width=\textwidth]{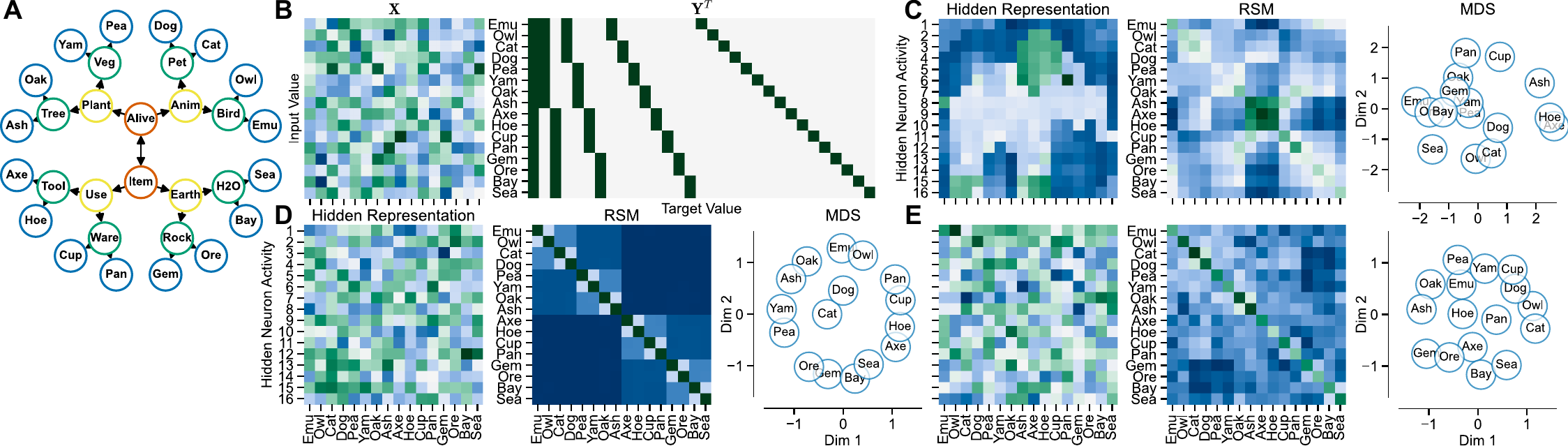}
  \caption{
    \textbf{Hidden-layer representations}.
    \fa Schematic of the semantic hierarchy task.
    \fb Inputs are encoded as random vectors (left) and corresponding target vectors encode for the position in the hierarchy (right).
    A one (zero) indicates that an item is (not) a child of a node.
    \fc Example hidden-layer representations (left), \ac{rsm} (centre) and corresponding 2D \ac{mds} plot (right) for a \ac{gls},
    \fd \ac{mrns}, and
    \fe \ac{mwns}.
  }
  \label{fig:representations}
  \vspace{-0.3cm}
\end{figure*}
In our example, this yields a highly structured \ac{rsm}, yet does not reflect the task structure, \ie the hierarchical relationships between items~(\cref{fig:representations}C).
Accordingly, we make
\begin{definition}
  Neural representations whose \ac{rsm} depends on the specific choice of input weights are \emph{task-agnostic} representations.
\end{definition}
In contrast, hidden-layer representations of \ac{mrns}
\begin{equation}
  \bfH = \bfR\sqrt{\bfN}\bfO^T\bfX^+\bfX
\end{equation}
and \ac{mwns}
\begin{equation}
  \bfH = \bfR\sqrt{\bfS}\bfV^T\bfX
\end{equation}
are unique up to an orthogonal transformation $\bfR$, which includes rotations and reflections.
Thus, in both cases, hidden-layer representations are not unique.
In the semantic hierarchy task, this results in representations that appear arbitrary and unstructured~(\cref{fig:representations}D, E).
However,
\begin{corollary}
  \label{corr:mrns-rsm}
  The \ac{rsm} of \ac{mrns} is unique and given by
  \begin{equation}
    \rsm = \bfO\bfN\bfO^T.
  \end{equation}
\end{corollary}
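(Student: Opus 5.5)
Taking the parametrisation of \cref{the:minimum-representation-norm-solution} as given, I compute the hidden-layer representation $\bfH = \wao\bfX$ and then the \ac{rsm} $\bfH^T\bfH$. Substituting $\wao = \bfR\sqrt{\bfN}\bfO^T\bfX^+ + \bfGamma_2\bfP_\text{u}$ gives
\begin{equation*}
  \bfH = \bfR\sqrt{\bfN}\bfO^T\bfX^+\bfX + \bfGamma_2(\bfI - \bfA\bfA^T)\bfX .
\end{equation*}
The first step is to show that the second term vanishes. From $\svd(\bfX)=\bfA\bfB\bfC^T$ with $\bfA$ semi-orthonormal, $\bfA\bfA^T\bfX = \bfA\bfA^T\bfA\bfB\bfC^T = \bfX$, so $\bfP_\text{u}\bfX = \bfZero$. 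Hence $\bfH = \bfR\sqrt{\bfN}\bfO^T\bfX^+\bfX$, and the free matrix $\bfGamma_2$ drops out entirely.

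Next I simplify $\bfX^+\bfX$. From the compact SVD, $\bfX^+ = \bfC\bfB^{-1}\bfA^T$, so $\bfX^+\bfX = \bfC\bfC^T$, the orthogonal projector onto the row space of $\bfX$. The key claim is $\bfO^T\bfC\bfC^T = \bfO^T$, i.e.\ the right singular vectors $\bfO$ of $\bfY\bfC\bfC^T$ lie in $\image(\bfC)$. This holds because the row space of $\bfY\bfC\bfC^T$ is contained in the row space of $\bfC^T$, and the columns of $\bfO$ in a compact SVD span exactly that row space. Writing $\bfO = \bfC\bfC^T\bfO$ and transposing gives the claim. Therefore $\bfH = \bfR\sqrt{\bfN}\bfO^T$.

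Finally,
\begin{equation*}
  \rsm = \bfH^T\bfH = \bfO\sqrt{\bfN}\bfR^T\bfR\sqrt{\bfN}\bfO^T = \bfO\bfN\bfO^T,
\end{equation*}
using $\bfR^T\bfR = \bfI_r$, since $\bfR\in\mathcal{R}^{N_h\times r}$ has orthonormal columns. Uniqueness then follows: the only free parameters in \cref{the:minimum-representation-norm-solution} are $\bfR$ and $\bfGamma_2$, both of which have been eliminated, and $\bfO\bfN\bfO^T$ equals $(\bfY\bfC\bfC^T)^T(\bfY\bfC\bfC^T)$ up to the square root of the singular values. Hence it is independent of any non-uniqueness in the choice of singular vectors, for example with repeated singular values.

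The step that needs the most care is matching dimensions: the number of columns of $\bfN$ must equal $r$ so that $\bfR^T\bfR$ is the identity of the right size. This requires checking that $\rank(\bfY\bfC\bfC^T) = r = \rank(\sigyx)$. Since $\sigyx = \frac1P\bfY\bfX^T = \frac1P\bfY\bfC\bfB\bfA^T$ and $\bfB\bfA^T$ has full row rank, $\rank(\sigyx) = \rank(\bfY\bfC) = \rank(\bfY\bfC\bfC^T)$, which closes the gap.
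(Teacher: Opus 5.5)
Your proof is correct and follows the paper's argument: you drop the $\bfGamma_2\bfP_\text{u}\bfX$ term, reduce $\bfX^+\bfX$ to $\bfC\bfC^T$, absorb the projector using that the columns of $\bfO$ lie in the span of $\bfC$, and then use $\bfR^T\bfR = \bfI$. Two details you add are left implicit in the paper and are worth keeping: the rank check $\rank(\bfY\bfC\bfC^T) = r$, which makes the dimensions match, and the observation that $\bfO\bfN\bfO^T$ is the positive semidefinite square root of $(\bfY\bfC\bfC^T)^T\bfY\bfC\bfC^T$, so it does not depend on which singular vectors are chosen.
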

Since $\bfO$ and $\bfN$ are fully determined by the training data, the \ac{rsm} is invariant to the specific choice of input weights.
Similarly,
\begin{corollary}
  \label{corr:mwns-rsm}
  The \ac{rsm} of \ac{mwns} is unique and given by
  \begin{equation}
    \rsm = \bfX^T\bfV\bfS\bfV^T\bfX.
  \end{equation}
\end{corollary}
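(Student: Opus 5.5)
The plan is to substitute the parametrisation of \cref{the:minimum-weight-norm-solution} into the definition $\rsm = \bfX^T\wao^T\wao\bfX$ and use the (semi-)orthonormality of $\bfR$ to remove all dependence on the free parameter. By \cref{the:minimum-weight-norm-solution}, every \ac{mwns} has input weights $\wao = \bfR\sqrt{\bfS}\bfV^T$, where $\bfR \in \mathcal{R}^{N_h \times r}$ has orthonormal columns. The hidden representations are therefore $\bfH = \bfR\sqrt{\bfS}\bfV^T\bfX$, and
\begin{equation*}
  \rsm = \bfH^T\bfH = \bfX^T\bfV\sqrt{\bfS}\bfR^T\bfR\sqrt{\bfS}\bfV^T\bfX.
\end{equation*}

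The key step is $\bfR^T\bfR = \bfI_r$, which holds because $\bfR$ is semi-orthonormal with $r$ columns. Note that only this direction of orthonormality is needed; $\bfR\bfR^T$ need not be the identity when $N_h > r$. Since $\bfS$ is diagonal with nonnegative entries, $\sqrt{\bfS}\sqrt{\bfS} = \bfS$, and the expression collapses to $\rsm = \bfX^T\bfV\bfS\bfV^T\bfX$.

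For uniqueness, it remains to check that $\bfV$ and $\bfS$ are determined by the data alone. They come from the \ac{csvd} of $\sigyx(\sigxx)^+$, which depends only on $\mathcal{D}$. One might object that the \ac{csvd} is not unique: with repeated singular values, $\bfV$ can be rotated within each degenerate block, and signs can be flipped. The resolution is that $\bfV\bfS\bfV^T = \bigl(\bfV\sqrt{\bfS}\bfU^T\bigr)\bigl(\bfU\sqrt{\bfS}\bfV^T\bigr)$ is the positive semidefinite square root of $(\sigyx(\sigxx)^+)^T\sigyx(\sigxx)^+$, so it is invariant to these choices.

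The only point requiring care is therefore this well-definedness of $\bfV\bfS\bfV^T$; the algebra itself is immediate. Since the resulting matrix is independent of $\bfR$, the \ac{rsm} is identical across all \ac{mwns}.
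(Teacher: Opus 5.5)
Your proposal is correct and matches the paper's proof: you substitute $\wao = \bfR\sqrt{\bfS}\bfV^T$ from \cref{the:minimum-weight-norm-solution} into $\bfX^T\wao^T\wao\bfX$ and collapse the expression using $\bfR^T\bfR = \bfI_r$. Your additional remark is also correct and is something the paper leaves implicit. Since $\bfV\bfS\bfV^T$ is positive semidefinite and squares to $(\sigyx\sigxx^+)^T\sigyx\sigxx^+$, it is that matrix's unique positive semidefinite square root, so the \ac{rsm} does not depend on which \ac{csvd} is chosen.
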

Again, the \ac{rsm} is invariant to the specific choice of input weights, as $\bfV$ and $\bfS$ are fully determined by the training data.
Accordingly, we make
\begin{definition}
  Neural representations whose \ac{rsm} if fully determined by the training data are \emph{task-specific} representations.
\end{definition}
In the semantic hierarchy task, this yields an \ac{rsm} that reflects the hierarchical structure, where representational similarity increases with proximity in the hierarchy, for \ac{mrns}; and an \ac{rsm} that reflects a combination of input statistics and target hierarchy for \ac{mwns} ~(\cref{fig:representations}D,E).

In summary, in two-layer linear networks, neural representations and the underlying function are dissociable: the same function can arise from different hidden-layer representations, and the same representations can support different functions.
For \ac{gls} and \ac{lss} this flexibility supports almost arbitrary representations and task-agnostic \acp{rsm}.
In contrast, \ac{mrns} and \ac{mwns} impose constraints which determine representations up to orthonormal transformations, and are guaranteed to have unique and task-specific \acp{rsm}.
\section{Implications for neural data analysis}
\label{sec:implications}

A fundamental challenge in understanding computations and learning in artificial and biological neural networks is linking changes in connectivity and representations to changes in function.
However, if representation and function are dissociable, two key observations follow.
First, changes in network function need not alter neural representations, as changes in one layer can be offset by compensatory changes in the next.
Second, changes in network function can proceed without altering representations, as it can, in principle, occur entirely in downstream layers.
We now examine the implications of these findings for representational comparisons, representational drift, and the synaptic stability-plasticity trade-off through a series of illustrative simulations.
While exact outcomes depend on task specifics and hyperparameter choices, these are not our focus here; a systematic analytical and numerical exploration is left for future work.

\subsection{Linear predictivity}
\begin{figure*}[!t]
  \centering
  \includegraphics[width=\textwidth]{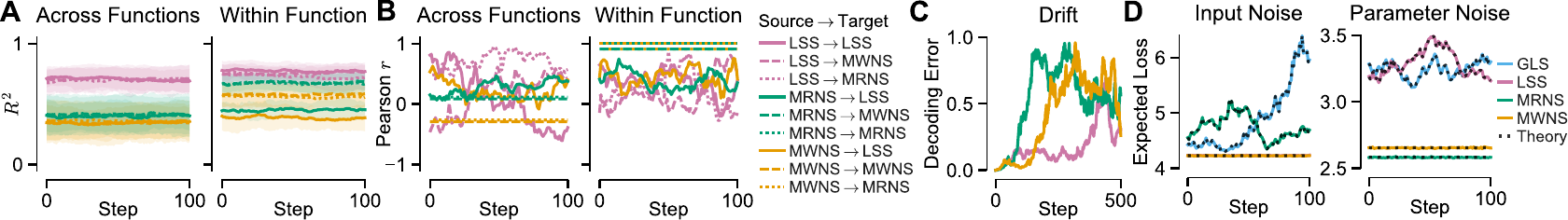}
  \caption{
    \textbf{Implications for neural data analysis.}
    All panels show results during random walks on the solution manifolds of \ac{lss}, \ac{mwns}, and \ac{mrns}.
    \fa Mean and standard deviation of $R^2$ scores for linear predictivity across $n=10$ random walks.
    All source-target combinations are shown for across-function (left) and within-function (right) comparisons.
    \fb Example trajectories of RSA correlation scores, shown for across-function (left) and within-function (right) comparisons.
    \fc MSE of a linear decoder trained on the hidden-layer representation at the initial time step.
    \fd Mean and expected MSE under input noise (left) and parameter noise (right).
  }
  \label{fig:decoding}
  \vspace{-0.3cm}
\end{figure*}
A common method for comparing neural representations is to assess how well activation patterns from a source model or recording can predict those of a target via linear regression~\parencite[\eg][]{yamins2014performanceoptimized,yamins2016using}.
High linear predictivity is often interpreted as evidence that two systems process information similarly.
However, since function and hidden-layer representations are dissociable, strong linear predictivity does not necessarily imply functional alignment.
We illustrate this by comparing hidden-layer representations from independent random walks on the solution manifold of \ac{lss}, \ac{mwns}, and \ac{mrns}~(\cref{fig:decoding}A).
In each case, we compare either representations from two different network functions (across function) or from the same network function (within function).
While $R^2$ scores are on average slightly higher for within-function comparisons, the main determinant of predictivity is whether the representations are task-agnostic or task-specific.
Specifically, in across-function comparisons, $R^2$ scores are highest when the source representation comes from a \ac{lss} (which shares representational degrees of freedom with \ac{gls}), followed by \ac{mrns}, and lowest for \ac{mwns}.
Within-function comparisons yield high $R^2$ when the source is task-agnostic or when both source and target are of the same type; in contrast, predicting a task-agnostic representation from a task-specific one lead to the lowest $R^2$ scores.
These patterns arise because task-agnostic solutions process both relevant and irrelevant input directions, producing higher-rank representations that cannot be linearly predicted from the lower-rank task-specific ones.
These results indicate that linear predictivity is predominantly driven by solution type rather than functional alignment, and may yield misleading conclusions if underlying representational constraints are not explicitly taken into account.

\subsection{Representational similarity analysis}
\Ac{rsa} compares neural activation patterns by evaluation the similarity of \acp{rsm} across conditions, stimuli, models, or participants~\parencite{kriegeskorte2008representational, haxby2014decoding}.
As with linear predictivity, our analytical results show that the interpretability of \ac{rsa} in terms of functional alignment critically depends on the solution type of the comparanda.
We illustrate this in~\cref{fig:decoding}B using example trajectories from the previous section.
Since task-agnostic solutions (\ie \ac{gls} and \ac{lss}) exhibit highly flexible \acp{rsm}, correlation coefficients $r$ involving such representations fluctuate unpredictably throughout the random walk, both within and across functions.
By contrast, comparing within and across task-specific solutions (\ie \ac{mwns} and \ac{mrns}) results in static and consistent $r$, as they exhibit unique \acp{rsm}.
However, because \ac{mwns} and \ac{mrns} induce different unique \acp{rsm}, comparisons across types yield imperfect similarity even within the same function.
In summary, \ac{rsa} reliably reflects functional similarity only when representational constraints enforce unique \acp{rsm}, underscoring the importance of accounting for solution type in representational comparisons.

\subsection{Drifting neural representations}
Intuitively, one might expect that if a stimulus elicits stable perception and behaviour, the associated neural representations should likewise remain stable~\parencite{rule2019causes, driscoll2022representational}.
However, this assumptions is challenged by converging evidence of representation drift across species, brain regions and modalities~\parencite[\eg][]{ziv2013long, driscoll2017dynamic, schoonover2021representational, marks2021stimulus, deitch2021representational, alisha2023representational}.
Our analysis shows that the existence of a solution manifold allows hidden-layer representations to vary without changing the implemented function.
This dissociation implies that stable perception and behaviour do not require stable representations.
Indeed, an optimal linear decoder trained on an initial representation rapidly degrades in performance during a random walk on the solution manifold~(\cref{fig:decoding}C).
Thus, representational drift need not signal functional change, but may instead reflect a reparametrisation within a functionally equivalent subspace.

\subsection{Synaptic stability and plasticity}
The so-called stability-plasticity dilemma posits that neural systems must remain plastic enough to acquire new knowledge while remaining stable enough to retain previously learned information~\parencite{grossberg1987competitive, abraham2005memory}.
This view, grounded in single-neuron and synapse-level intuitions, has motivated continual learning algorithms that explicitly regulate synaptic changes to preserve past knowledge ~\parencite[\eg][]{kirkpatrick2017overcoming, zenke2017continual, aljundi2018memory}.
However, our analysis shows that this dilemma need not apply at the network level, because, independent of the solution type, many distinct configurations of synapses and representations implement the same function~(see \eg \cref{fig:intro}).
This raises important methodological concerns: observing or inducing isolated synaptic or representational changes may not suffice to infer function, learning, or the underlying learning mechanisms in artificial or biological neural networks.
\section{Advantages of task-specific representations}
\label{sec:advantages}

A natural question arises from the observation that function and representation are dissociable: why do biological and artificial systems often converge to non-arbitrary representations that obey the structure of the task?
One possible explanation is that task-specific representations confer computational advantages, creating selective pressure in both biological and artificial systems.
Consequently, such representations may emerge as preferred functional implementations, leading to representational alignment across systems.

\subsection{Secondary error}
\label{sec:secondary-error}
One hypothesised advantage of task-specific representations is improved performance on secondary datasets, such as in- or out-of-distribution generalisation.
To test this, we identify solutions on the primary-task solution manifold that minimise the error on an unseen secondary dataset $\tilde{\mathcal{D}}=\{(\tilde{\bfx}_n, \tilde{\bfy_n})\}_{n=1}^Q$.
In~\cref{app:sec:secondary-error} we derive,
\begin{theorem}
  \label{the:secondary-error}
  The secondary error is minimised by all solutions of the form
  \begin{equation}
    \wbo\wao = \sigyx\sigxx^+ + \tilde{\bfZ}\bfP_\text{u},
  \end{equation}
  where
  \begin{equation}
    \tilde{\bfZ} = \left(\bfYt - \sigyx\sigxx^+\bfXt\right)\left(\bfP_\text{u}\bfXt\right)^+ + \tilde{\bfGamma}\tilde{\bfP}_\text{u},
  \end{equation}
  with $\tilde{\bfGamma} \in \mathcal{R}^{N_o \times N_i}$ arbitrary and $\tilde{\bfP}_\text{u} = \bfI - \bfP_\text{u}\bfXt(\bfP_\text{u}\bfXt)^+$.
\end{theorem}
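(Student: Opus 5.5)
Since the secondary error depends on $\wao$ and $\wbo$ only through the product $\bfW := \wbo\wao$, I will first reduce the problem to a constrained linear least-squares problem in $\bfW$. Minimising the secondary error over the solution manifold is then
\begin{equation*}
  \min_{\bfW} \tfrac{1}{2Q}\|\bfW\bfXt - \bfYt\|_F^2 \quad \text{s.t.}\quad \bfW\sigxx = \sigyx .
\end{equation*}
Under \cref{ass:no-bottleneck}, every $\bfW$ satisfying the constraint is realisable as a product $\wbo\wao$ (for instance via \cref{the:general-linear-solution}), so nothing is lost in this reduction.

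The first step is to parametrise the feasible set. Because $\sigxx = \frac{1}{P}\bfX\bfX^T$ has column space $\image(\bfA)$, the solutions of $\bfW\sigxx = \sigyx$ are exactly
\begin{equation*}
  \bfW = \sigyx\sigxx^+ + \bfZ\bfP_\text{u}, \qquad \bfZ \in \mathcal{R}^{N_o\times N_i} \text{ arbitrary},
\end{equation*}
where $\bfP_\text{u} = \bfI - \bfA\bfA^T$ is the projector onto $\ker\sigxx$. To verify this, note that $\sigyx\sigxx^+\sigxx = \sigyx\bfA\bfA^T = \sigyx$, since $\sigyx = \frac{1}{P}\bfY\bfX^T$ has row space inside $\image(\bfA)$. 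Any two feasible solutions differ by a matrix $\bfD$ with $\bfD\sigxx = 0$, that is $\bfD = \bfD\bfP_\text{u}$.

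The second step substitutes this parametrisation into the secondary loss, which gives the unconstrained problem $\min_{\bfZ}\|\bfZ\bfP_\text{u}\bfXt - (\bfYt - \sigyx\sigxx^+\bfXt)\|_F^2$. This is an ordinary linear least-squares problem in $\bfZ$ with design matrix $\bfB := \bfP_\text{u}\bfXt$, and its minimisers are $\bfZ = \bfE\bfB^+ + \tilde{\bfGamma}(\bfI - \bfB\bfB^+)$ with $\bfE = \bfYt - \sigyx\sigxx^+\bfXt$. I will justify this using the normal equations $\bfZ\bfB\bfB^T = \bfE\bfB^T$, together with the fact that the general solution of $\bfZ\bfM = \bfN$ is a particular solution plus the left null space of $\bfM$, where the left null space of $\bfB\bfB^T$ equals that of $\bfB$. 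Setting $\tilde{\bfP}_\text{u} = \bfI - \bfP_\text{u}\bfXt(\bfP_\text{u}\bfXt)^+$ recovers $\tilde{\bfZ}$ exactly as stated.

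The main point needing care is the interplay of the two projectors. The minimiser $\bfW$ contains $\tilde{\bfZ}\bfP_\text{u}$, so the freedom term $\tilde{\bfGamma}\tilde{\bfP}_\text{u}\bfP_\text{u}$ is annihilated in all directions outside $\image(\bfP_\text{u})\cap\ker(\bfB^T)$. The statement as written with $\tilde{\bfGamma}$ arbitrary is still a correct description of the full minimiser set, because every minimiser in $\bfZ$ yields a minimiser in $\bfW$ and vice versa. Finally, I will remark that the first term $\sigyx\sigxx^+$ has row space inside $\image(\bfA)$, while the correction $\tilde{\bfZ}\bfP_\text{u}$ acts only on the unobserved directions $\bfP_\text{u}$. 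This means secondary-error optimality constrains only the unobserved component ($\bfGamma_2$ in \cref{the:general-linear-solution}) and leaves the representational freedom of $\bfQ$ and $\bfGamma_1$ untouched, which is the conclusion the section draws.
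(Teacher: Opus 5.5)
Your proof is correct and follows the paper's route. You parametrise the primary solution set as $\sigyx\sigxx^+ + \bfZ\bfP_\text{u}$, reduce to an unconstrained least-squares problem in $\bfZ$ with design matrix $\bfP_\text{u}\bfXt$, and solve the normal equations as a particular solution plus the left null space; the paper does this last step explicitly through the cSVD of $\bfP_\text{u}\bfXt$. You also make explicit two things the paper leaves implicit: that every feasible product is realisable under \cref{ass:no-bottleneck}, and that minimisers correspond in both directions.

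One caution, which concerns only your closing interpretive remark and not the theorem itself. The $\bfP_\text{u}$-part of the product is $\wbo\bfGamma_2\bfP_\text{u}$, and its columns must lie in $\image(\wbo) = \image(\bfU) + \wbo\big(\image(\bfH)^\perp\big)$. So when $N_h < N_i$, a representation $\bfH$ that occupies too many hidden dimensions can make the required correction $\tilde{\bfZ}\bfP_\text{u}$ unreachable. Secondary optimality can therefore restrict $\bfQ$ and $\bfGamma_1$, rather than leaving them untouched.
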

Since the solution is a \ac{lss} with a perturbation in the unobserved null directions of the primary-task inputs, minimising secondary error permits task-agnostic solutions.
Thus, observing $\bfH$ gives no information about secondary task performance and secondary error cannot explain the emergence of task-specific representations in two-layer linear networks.

\subsection{Sensitivity to noise}
\label{sec:robustness}

\erin{Tease out relationship between noise-robustness and flat minima, , see \url{https://arxiv.org/abs/1802.05296}}

Neural systems are subject to a multitude of internal and external sources of noise, which range from variability in incoming sensory signals to fluctuations in synaptic efficacy and spontaneous neural activity~\parencite{faisal2008noise}.
Therefore, solutions that exhibit robustness to such noise are advantageous, as they enable the neural circuitry to maintain reliable function~\parencite{johnston2020nonlinear}.
The following theorems are derived in~\cref{app:sec:noise-proofs}.
\begin{theorem} \label{the:input-noise}
  The expected loss under additive, \ac{iid}, zero-centred input noise $\bfxi_{\bfx_n}$ with variance $\sigma_x^2$ is
  \begin{equation}
    \begin{aligned}
       & \bigg\langle\frac{1}{2P}\sum_{n=1}^P||\wbo\wao\left(\bfx_n + \bfxi_{\bfx_n}\right) - \bfy_n||_2^2\bigg\rangle \\
       & \ = \frac{\sigma_\bfx^2}{2}||\wbo\wao||_F^2 + c,
    \end{aligned}
  \end{equation}
  where $c$ is a noise-independent constant that only depends on the training data.
\end{theorem}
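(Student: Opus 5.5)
I would prove this by expanding the squared norm inside the expectation and separating noise-free, cross, and pure-noise terms. Write $\bfW = \wbo\wao$ and, for each $n$, decompose
\begin{equation*}
  \bfW(\bfx_n + \bfxi_{\bfx_n}) - \bfy_n = (\bfW\bfx_n - \bfy_n) + \bfW\bfxi_{\bfx_n}.
\end{equation*}
Squaring gives $||\bfW\bfx_n - \bfy_n||_2^2 + 2(\bfW\bfx_n - \bfy_n)^T\bfW\bfxi_{\bfx_n} + \bfxi_{\bfx_n}^T\bfW^T\bfW\bfxi_{\bfx_n}$. Taking the expectation over the noise, the cross term vanishes because $\bfxi_{\bfx_n}$ is zero-centred and everything multiplying it is deterministic.

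For the quadratic noise term I will use the trace trick:
\begin{equation*}
  \langle \bfxi^T\bfW^T\bfW\bfxi\rangle = \tr(\bfW^T\bfW\langle\bfxi\bfxi^T\rangle) = \sigma_\bfx^2\tr(\bfW^T\bfW) = \sigma_\bfx^2||\bfW||_F^2.
\end{equation*}
Here i.i.d.\ zero-mean components with variance $\sigma_\bfx^2$ give covariance $\sigma_\bfx^2\bfI$. This term is identical for all $n$, so averaging with the prefactor $\frac{1}{2P}\sum_{n=1}^P$ yields $\frac{\sigma_\bfx^2}{2}||\wbo\wao||_F^2$.

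It remains to show that the noise-free part, $\frac{1}{2P}\sum_n||\bfW\bfx_n - \bfy_n||_2^2$, is a constant depending only on the data. This is the only step that uses optimality, and it is the main (mild) obstacle. Expanding it gives
\begin{equation*}
  \tfrac12\tr(\bfW\sigxx\bfW^T) - \tr(\bfW\sigyx^T) + \tfrac12\tr(\sigyy).
\end{equation*}
By \cref{def:general-linear-solution}, $\bfW\sigxx = \sigyx$. The first term therefore becomes $\frac12\tr(\sigyx\bfW^T)$, and the loss reduces to $\frac12\tr(\sigyy) - \frac12\tr(\bfW\sigyx^T)$.

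To eliminate $\bfW$ from the last term, note that $\sigyx$ has rows in the row space of $\sigxx$, since $\sigyx = \frac1P\bfY\bfX^T$ and $\image(\bfX^T\ldots)$ lies in the column space of $\bfX$. Hence $\sigyx = \sigyx\sigxx^+\sigxx$. Then
\begin{equation*}
  \bfW\sigyx^T = \bfW\sigxx\sigxx^+\sigyx^T = \sigyx\sigxx^+\sigyx^T,
\end{equation*}
using symmetry of $\sigxx^+$. So the noise-free loss equals
\begin{equation*}
  c = \tfrac12\tr\!\left(\sigyy - \sigyx\sigxx^+\sigyx^T\right),
\end{equation*}
which depends only on the training data. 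Here $\sigyy = \frac1P\sum_n\bfy_n\bfy_n^T$. Equivalently, every GLS attains the same global minimum training loss, as already implied by the Laurent--von Brecht result quoted before \cref{def:general-linear-solution}. Combining the three pieces gives the statement.
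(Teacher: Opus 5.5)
Your proposal is correct and follows essentially the same route as the paper: it splits the residual into the noise-free part and $\bfW\bfxi$, removes the cross term by zero mean, evaluates the quadratic term with the trace trick, and reduces the noise-free loss to $c=\tfrac12\tr(\sigyy-\sigyx\sigxx^+\sigyx^T)$ using $\bfW\sigxx=\sigyx$. Your explicit justification of that last step via $\sigyx=\sigyx\sigxx^+\sigxx$ is a cleaner version of the paper's own computation of this constant.
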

\begin{corollary}
  The expected loss under input noise is exclusively minimised by \ac{lss}, which include \ac{mwns} as a subspace (see~\cref{fig:decoding}D).
\end{corollary}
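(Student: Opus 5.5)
The corollary follows from \cref{the:input-noise} together with the defining constraint of the solution manifold. By \cref{the:input-noise}, on the set of \ac{gls} the expected loss under input noise equals $\frac{\sigma_\bfx^2}{2}\|\wbo\wao\|_F^2 + c$. Here $c$ depends only on the training data and not on which point of the manifold we pick, and $\sigma_\bfx^2>0$. So minimising the expected noisy loss over all \ac{gls} is the same as minimising $\|\wbo\wao\|_F^2$ subject to $\wbo\wao\sigxx=\sigyx$. This is exactly the optimisation problem in \cref{def:least-squares-solution}. Hence the set of minimisers equals the set of \ac{lss}, which gives both directions of ``exclusively''.

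To keep this from being circular, I check that the minimiser of that problem is $\sigyx\sigxx^+$, as \cref{the:least-squares-solution} asserts. Any $\bfW=\wbo\wao$ satisfying the constraint acts as $\sigyx\sigxx^+$ on $\image(\sigxx)=\image(\bfA)$. So it can be written as $\bfW=\sigyx\sigxx^+ + \bfZ\bfP_\text{u}$ for some $\bfZ$.

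The rows of $\sigyx\sigxx^+$ lie in $\image(\bfA)$, while $\bfZ\bfP_\text{u}$ has rows in the orthogonal complement. The cross term therefore vanishes and
\begin{equation}
  \|\bfW\|_F^2=\|\sigyx\sigxx^+\|_F^2+\|\bfZ\bfP_\text{u}\|_F^2 .
\end{equation}
This is minimised exactly when $\bfZ\bfP_\text{u}=0$. Under \cref{ass:no-bottleneck}, \cref{the:least-squares-solution} shows that this product is realised by two-layer factorisations, so the minimum over the manifold is attained and the minimisers are precisely the \ac{lss}.

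Finally, I need to show that \ac{mwns} lie inside \ac{lss}. By \cref{the:minimum-weight-norm-solution}, $\wbo\wao=\bfU\sqrt{\bfS}\bfR^T\bfR\sqrt{\bfS}\bfV^T=\bfU\bfS\bfV^T$, using $\bfR^T\bfR=\bfI$. This equals $\sigyx\sigxx^+$ by the definition of its \ac{csvd}, so every \ac{mwns} attains the minimal product norm and is an \ac{lss}.

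The inclusion is generally strict. For example, choose $\bfGamma_1\neq 0$ satisfying the rank constraint of \cref{the:general-linear-solution}; this gives a valid \ac{lss} that is not an \ac{mwns}. So \ac{mwns} form a proper subset.

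The only delicate point is confirming that $c$ in \cref{the:input-noise} is genuinely constant across the manifold. It is $\frac{1}{2P}\sum_n\|\wbo\wao\bfx_n-\bfy_n\|^2$, which is the optimal training loss and is identical for every \ac{gls}. Once that is established, the argument above is complete.
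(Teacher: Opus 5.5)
Your proposal is correct and follows essentially the paper's route: \cref{the:input-noise} reduces minimisation over the manifold to exactly the defining problem of \ac{lss}, and the \ac{mwns} product collapses to $\bfU\bfS\bfV^T$. Your orthogonal-decomposition check that the minimising product is $\sigyx\sigxx^+$ replaces the Lagrange-multiplier step in the proof of \cref{the:least-squares-solution}. One small caveat concerns your optional strictness remark, which the statement does not require since it only asserts inclusion. That remark needs $\bfGamma_1\bfP_\text{i}\neq 0$, which is not always available (e.g.\ when $\bfP_\text{i}=0$). An unbalanced core such as $\bfQ=2\bfR$ always gives an \ac{lss} that is not an \ac{mwns}.
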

Thus, while robustness to input noise selects for solutions with minimal functional norm, \ac{lss} employ task-agnostic representations, so robustness alone does not ensure representational alignment.
\begin{theorem} \label{the:parameter-noise}
  The expected loss under additive, \ac{iid}, zero-centred noise $\Xi_1$ and $\Xi_2$ in the parameters, with variances~$\sigma_1^2 \propto 1 / ||\bfX||_F^2$ and~$\sigma_2^2 \propto 1 / N_o$ is
  \begin{equation}
    \begin{aligned}
       & \la \frac{1}{2P}\sum_{n=1}^{P}||\left(\wbo + \bfXi_2\right)\left(\wao + \bfXi_1\right)\bfx_n - \bfy_n||^2_2 \ra \\
       & \ = \frac{1}{2P}\Big(||\wao\bfX||_F^2 + ||\wbo||_F^2 + c\Big).
    \end{aligned}
  \end{equation}
  where $c$ is again a noise-independent constant.
\end{theorem}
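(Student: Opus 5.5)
Expand the noisy product and average term by term. Write $(\wbo + \bfXi_2)(\wao + \bfXi_1)\bfx_n - \bfy_n = (\wbo\wao\bfx_n - \bfy_n) + \bfXi_2\wao\bfx_n + \wbo\bfXi_1\bfx_n + \bfXi_2\bfXi_1\bfx_n$. I take $\bfXi_1$ and $\bfXi_2$ to be independent of each other, zero-mean, with i.i.d.\ entries. Then every cross term containing a single noise matrix to first power vanishes in expectation. This includes $\la (\bfXi_2\wao\bfx_n)^T\wbo\bfXi_1\bfx_n\ra$, which factorises into a product of zero means. It also includes the cross terms between $\bfXi_2\bfXi_1\bfx_n$ and the first-order terms, since these are odd in at least one of the two noise matrices. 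So the expected loss splits into four nonnegative pieces: the noiseless error $\frac{1}{2P}\sum_n\|\wbo\wao\bfx_n-\bfy_n\|^2$, and one contribution from each of the three noise terms.

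Each second moment follows from the identity $\la \bfXi\bfa\bfa'^T\bfXi^T\ra$-type calculations, i.e.\ $\la\|\bfXi\bfz\|_2^2\ra = \sigma^2 (\#\text{rows of }\bfXi)\|\bfz\|_2^2$ and $\la\|\bfB\bfXi\bfz\|_2^2\ra = \sigma^2\|\bfB\|_F^2\|\bfz\|_2^2$ for a fixed $\bfB$ and $\bfz$. Summing over $n$ gives three terms. First, $\sum_n\la\|\bfXi_2\wao\bfx_n\|^2\ra = \sigma_2^2 N_o\|\wao\bfX\|_F^2$. Second, $\sum_n\la\|\wbo\bfXi_1\bfx_n\|^2\ra = \sigma_1^2\|\wbo\|_F^2\|\bfX\|_F^2$. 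Third, conditioning on $\bfXi_1$ first, $\sum_n\la\|\bfXi_2\bfXi_1\bfx_n\|^2\ra = \sigma_2^2 N_o\,\sigma_1^2 N_h\|\bfX\|_F^2$.

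Now substitute the scalings $\sigma_2^2 = \kappa_2/N_o$ and $\sigma_1^2 = \kappa_1/\|\bfX\|_F^2$. The first two terms become $\kappa_2\|\wao\bfX\|_F^2$ and $\kappa_1\|\wbo\|_F^2$. The doubly-noisy term becomes $\kappa_1\kappa_2 N_h$, which does not depend on the weights. The noiseless error is the same for every solution on $\mathcal{M}$: by \cref{def:general-linear-solution}, all GLS attain the global minimum, whose value is determined by the data alone (it equals the least-squares residual). I absorb it together with $\kappa_1\kappa_2N_h$ into $c$. Choosing the proportionality constants so that $\kappa_1=\kappa_2$ (set to one, or kept as a common prefactor) yields exactly $\frac{1}{2P}(\|\wao\bfX\|_F^2 + \|\wbo\|_F^2 + c)$.

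The main obstacle is bookkeeping rather than substance. I need to state the independence assumption between $\bfXi_1$ and $\bfXi_2$ explicitly, handle the expectation of the product term carefully (condition on one noise matrix, then the other), and be explicit that the proportionality constants must be equal for the two norms to carry matching weights. I also want to note that $c$ absorbs both the optimal residual and $N_h\kappa_1\kappa_2$, so it is noise-independent only in the sense that it does not depend on the weights. As a by-product, the statement shows the expected loss under parameter noise is minimised exactly by the MRNS of \cref{the:minimum-representation-norm-solution}.
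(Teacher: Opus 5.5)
Your proposal is correct and follows essentially the same route as the paper's proof. You split the error into the noiseless residual, which is constant on the solution manifold, plus a zero-mean noise part, drop the cross terms, evaluate the three surviving second moments $\sigma_1^2\|\wbo\|_F^2\|\bfX\|_F^2$, $N_o\sigma_2^2\|\wao\bfX\|_F^2$ and $N_hN_o\sigma_1^2\sigma_2^2\|\bfX\|_F^2$, and then substitute the variance scalings. You also make explicit two things the paper leaves implicit: that $\bfXi_1$ and $\bfXi_2$ are independent, and that the two proportionality constants must coincide (the paper effectively sets both to one).
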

\begin{corollary}
  The expected loss under parameter noise is exclusively minimised by \ac{mrns} (see~\cref{fig:decoding}D).
\end{corollary}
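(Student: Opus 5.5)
The corollary follows from \cref{the:parameter-noise} together with the definition and characterisation of \ac{mrns}. I would split the argument into two steps.

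\textbf{Step 1: the objective.} By \cref{the:parameter-noise}, every point on the solution manifold has expected loss under parameter noise equal to
\begin{equation*}
  \frac{1}{2P}\big(||\wao\bfX||_F^2 + ||\wbo||_F^2 + c\big),
\end{equation*}
where $c$ is noise-independent and depends only on the training data. I would check in the derivation that $c$ is also independent of the choice of optimal weights. The expansion produces cross terms (vanishing because the noise is zero-mean and independent), the noise-free residual (which is the same for every \ac{gls}), and the term $\sigma_1^2\sigma_2^2 N_o N_h ||\bfX||_F^2$ (also weight-independent). Minimising the expected loss over the manifold $\mathcal{M}$ is therefore the same as minimising $||\wa\bfX||_F^2 + ||\wb||_F^2$ subject to $\wb\wa\sigxx = \sigyx$. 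This is precisely the optimisation problem in \cref{def:minimum representation-norm-solution}.

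\textbf{Step 2: exclusivity.} The word ``exclusively'' needs two things. First, the minimum must be attained, so the set of \ac{mrns} is non-empty. \Cref{the:minimum-representation-norm-solution} gives an explicit parametrisation, which already shows attainment. Alternatively, coercivity in the relevant directions plus closedness of the constraint set would do. Second, any minimiser of the expected loss has the minimal value of the objective, so it is by definition an \ac{mrns}. Conversely, every \ac{mrns} achieves that value.

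It is also worth stating what the minimisers look like. By \cref{the:minimum-representation-norm-solution}, they are exactly $\wbo = \bfM\sqrt{\bfN}\bfR^T$ and $\wao = \bfR\sqrt{\bfN}\bfO^T\bfX^+ + \bfGamma_2\bfP_\text{u}$. The term $\bfGamma_2\bfP_\text{u}$ does not change $\wao\bfX$, so it does not affect the objective. By \cref{corr:mrns-rsm}, these solutions carry the task-specific \ac{rsm} $\bfO\bfN\bfO^T$.

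\textbf{Main obstacle.} I expect the difficulty to be showing that no other solution type reaches the minimum. In particular, I need to rule out \ac{mwns} or \ac{lss} points attaining it unless they coincide with \ac{mrns}, as they do when $\sigxx=\bfI$. This reduces to the claim that \cref{the:minimum-representation-norm-solution} characterises all minimisers, which I may assume from the paper. To make the step self-contained, I would use a balancedness argument. At any minimiser, rescaling by $\bfQ$ (as in the $\bfQ$-transformation invariance) forces $\wao\bfX\bfX^T\wao^T = \wbo^T\wbo$. Applying AM--GM to the singular values then shows the minimal value equals $2\,\tr(\bfN)$ (the nuclear norm of $\bfY\bfC\bfC^T$). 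Equality holds iff the weights take the stated \ac{mrns} form.
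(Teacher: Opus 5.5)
Your proposal is correct and matches the paper's route: the corollary follows at once from \cref{the:parameter-noise}, since on $\mathcal{M}$ the expected loss is the \ac{mrns} objective $||\wao\bfX||_F^2 + ||\wbo||_F^2$ plus a weight-independent constant, and so its minimisers are exactly the \ac{mrns} by definition. The ``main obstacle'' is not a real obstacle: the solution types overlap and any minimiser is an \ac{mrns} by definition, so your balancedness/nuclear-norm sketch (a valid alternative to the paper's Lagrange-multiplier proof of \cref{the:minimum-representation-norm-solution}) only describes the minimisers explicitly and is not needed for exclusivity.
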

\begin{figure*}[!t]
  \centering
  \includegraphics[width=\textwidth]{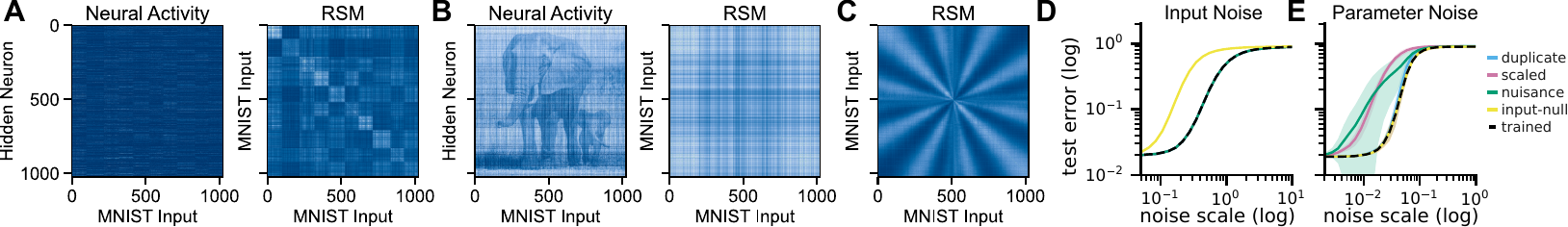}
  \caption{
    \textbf{Function and representation are dissociable in non-linear networks.}
    \fa Hidden-layer activations for $1024$ MNIST inputs, grouped by class (left) and the corresponding task-specific \ac{rsm} (right) after training a ReLU network from small initial weights.
    \fb Same as \fa, but for a network reparametrised via augmented Lagrangian optimisation to reshape hidden-layer representation while preserving training set classifications.
    \fc \ac{rsm} of the network from \fa after reparametrisation using exact invariant transformations (\cref{sec:nonlinear-symmetries}).
    \fd Test error under input noise for all exact invariant transformations.
    Networks with \texttt{input-null} expansion are sensitive.
    \fe As in \fd but for parameter noise; networks with \texttt{scaled}, \texttt{nuisance}, and \texttt{duplicate} expansions are sensitive to varying degrees.
  }
  \label{fig:mnist}
  \vspace{-0.3cm}
\end{figure*}
We have scaled noise variances to simplify the analytical expression; without this scaling, the results hold up to multiplicative constants.
Robustness to parameter noise thus selects for solutions with task-specific representations, ensuring representational alignment.
\begin{theorem} \label{the:parameter-and-input-noise}
  Under the assumption that the input data is whitened, \ie $\sigxx = \bfI$, the expected loss under additive, \ac{iid}, zero-centred parameter noise $\bfXi_1$ and $\bfXi_2$ with variance $\sigma^2_1 \propto 1 / N_i$ and $\sigma^2_2 \propto 1 / N_o$ and input noise $\bfxi_{\bfx_n}$ with variance $\sigma_x^2$ is
  \begin{align}
     & \bigg\langle\frac{1}{2P}\sum_{n=1}^P||\left(\wbo + \bfXi_2\right)\left(\wao + \bfXi_1\right)\left(\bfx_n + \bfxi_{\bfx_n}\right) - \bfy_n||_2^2\bigg\rangle \nonumber \\
     & = \frac{\sigma^2_{\bfx}}{2}\big(||\wbo\wao||_F^2 + ||\wbo||_F^2 + ||\wao||_F^2\big)                                                                                   \\
     & \qquad\qquad + \frac{1}{2P}\big(||\wbo||_F^2 + ||\wao\bfX||_F^2\big) + c,\nonumber
  \end{align}
  with noise-independent constant $c$.
\end{theorem}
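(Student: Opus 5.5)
Our plan is to expand the product inside the norm and take expectations term by term, using independence and zero mean of $\bfXi_1$, $\bfXi_2$ and $\bfxi_{\bfx_n}$. Write
\begin{equation*}
(\wbo+\bfXi_2)(\wao+\bfXi_1)(\bfx_n+\bfxi_{\bfx_n}) - \bfy_n = \big(\wbo\wao\bfx_n - \bfy_n\big) + \bfE_n,
\end{equation*}
where $\bfE_n$ collects every term containing at least one noise factor. The expected squared norm splits into $||\wbo\wao\bfx_n-\bfy_n||^2$, plus cross terms $2\la(\wbo\wao\bfx_n-\bfy_n)^T\bfE_n\ra$, plus $\la||\bfE_n||^2\ra$. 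Each monomial of $\bfE_n$ is linear in at least one independent zero-mean noise source. Hence all cross terms with the deterministic residual vanish. Averaging the residual over $n$ gives the data-only part of $c$; at a \ac{gls} it is the minimal training loss, which depends only on the data.

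Next we would expand $\la||\bfE_n||^2\ra$. Its seven monomials are $\wbo\wao\bfxi$, $\wbo\bfXi_1\bfx$, $\bfXi_2\wao\bfx$, $\wbo\bfXi_1\bfxi$, $\bfXi_2\wao\bfxi$, $\bfXi_2\bfXi_1\bfx$ and $\bfXi_2\bfXi_1\bfxi$. Independence and zero mean make cross expectations between distinct monomials vanish. Whenever two monomials differ, some noise source appears an odd number of times, so the expectation is zero. This leaves a sum of squared norms, which we evaluate with the identity $\la||\bfA\bfXi\bfb||^2\ra=\sigma^2||\bfA||_F^2||\bfb||^2$ for \ac{iid} $\bfXi$, together with $\la\bfxi\bfxi^T\ra=\sigma_x^2\bfI$.

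The key steps, in order, are:
\begin{itemize}
\item $\sigma_x^2||\wbo\wao||_F^2$;
\item $\sigma_1^2||\wbo||_F^2||\bfx_n||^2$, which after averaging over $n$ gives $\sigma_1^2||\wbo||_F^2||\bfX||_F^2/P$;
\item $\sigma_2^2N_o||\wao\bfx_n||^2$, which after averaging gives $\sigma_2^2N_o||\wao\bfX||_F^2/P$;
\item $\sigma_1^2\sigma_x^2N_i||\wbo||_F^2$;
\item $\sigma_2^2\sigma_x^2N_o||\wao||_F^2$;
\item $\sigma_1^2\sigma_2^2N_oN_h||\bfx_n||^2$ and $\sigma_1^2\sigma_2^2\sigma_x^2N_oN_hN_i$, both independent of the weights and so absorbed into $c$.
\end{itemize}

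Whitening $\sigxx=\bfI$ gives $||\bfX||_F^2=P\,\tr(\sigxx)=PN_i$. With $\sigma_1^2\propto1/N_i$ and $\sigma_2^2\propto1/N_o$, taking unit proportionality constants, the second and third items reduce to $\frac1P(||\wbo||_F^2+||\wao\bfX||_F^2)$. The fourth and fifth reduce to $\sigma_x^2(||\wbo||_F^2+||\wao||_F^2)$. Dividing by $2$ yields the stated expression.

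The main obstacle is bookkeeping rather than any conceptual difficulty. We must verify that every mixed expectation vanishes, including those involving $\bfXi_2\bfXi_1$ paired with single-noise terms. We must also track the normalisation constants consistently with \cref{the:input-noise,the:parameter-noise}, whose computations we can reuse as the special cases $\sigma_1=\sigma_2=0$ and $\sigma_x=0$. One subtlety is that the paper's scaling $\sigma_1^2\propto1/||\bfX||_F^2$ becomes $\propto1/N_i$ only after invoking whitening. We would therefore state explicitly that the proportionality constants are chosen to make the coefficients match, as the remark after \cref{the:parameter-noise} allows.
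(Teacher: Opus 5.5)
The gap is in your final substitution. Everything before it is correct and matches the paper's appendix computation; your symmetric-difference argument for the vanishing mixed terms is a tidier replacement for its enumeration of summands.

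From $\sigxx=\bfI$ you correctly get $\|\bfX\|_F^2=PN_i$. But then, with $\sigma_1^2=1/N_i$, your second item is $\sigma_1^2\|\wbo\|_F^2\|\bfX\|_F^2/P=\|\wbo\|_F^2$, not $\frac{1}{P}\|\wbo\|_F^2$. Choosing proportionality constants cannot repair this. The same $\sigma_1^2$ multiplies your second and fourth items, so under $\|\bfX\|_F^2=PN_i$ the total coefficient of $\|\wbo\|_F^2$ is $\frac{\sigma_1^2N_i}{2}(1+\sigma_x^2)$. Its noise-free and input-noise parts are therefore locked in the ratio $1:\sigma_x^2$, whereas the target requires $\frac{1}{P}:\sigma_x^2$. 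Taking $\sigma_1^2=1/\|\bfX\|_F^2=1/(PN_i)$, as your last paragraph suggests, fixes the second item but turns the fourth into $\frac{\sigma_x^2}{P}\|\wbo\|_F^2$. With $\frac{1}{P}\bfX\bfX^T=\bfI$ taken literally, your computation actually proves
\[
\frac{\sigma_x^2}{2}\big(\|\wbo\wao\|_F^2+\|\wbo\|_F^2+\|\wao\|_F^2\big)+\frac{1}{2}\|\wbo\|_F^2+\frac{1}{2P}\|\wao\bfX\|_F^2+c .
\]
Under that reading, the displayed identity is false whenever $P>1$.

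The stated coefficients are exactly what you get under the normalisation $\bfX\bfX^T=\bfI$ instead. Then $\|\bfX\|_F^2=N_i$, so the scaling $\sigma_1^2\propto 1/\|\bfX\|_F^2$ of \cref{the:parameter-noise} coincides with $\sigma_1^2\propto 1/N_i$. Your second item becomes $\frac{1}{P}\|\wbo\|_F^2$, and every term matches with unit constants. This is evidently the intended hypothesis. It is also the one under which $\|\wa\bfX\|_F=\|\wa\|_F$, so that MRNS and MWNS coincide, as the paper asserts for whitened inputs. By contrast, the hybrid you wrote down (stated coefficients with $\|\bfX\|_F^2=PN_i$) weights $\|\wbo\|_F^2$ and $\|\wao\|_F^2$ unequally. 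Its minimiser on the fixed-product manifold would then be an unbalanced rescaling of the MWNS, contradicting the subsequent corollary.

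The paper's own proof stops at the general-variance expression, which is the same list you derive, and never performs this substitution, so it passes over the same point. To close your argument, either assume $\bfX\bfX^T=\bfI$ explicitly, or keep $\sigxx=\bfI$ and report the corrected coefficient $\frac{1}{2}\|\wbo\|_F^2$. You cannot obtain the stated form from $\|\bfX\|_F^2=PN_i$ by a choice of constants; that step is an arithmetic slip.
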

\begin{corollary}
  Under the stated assumptions and constraints, the expected loss under input and parameter noise is minimised exclusively by \ac{mrns} and \ac{mwns}.
\end{corollary}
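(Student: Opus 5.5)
Under whitened inputs ($\sigxx = \bfI$) and the stated noise scalings, the plan is to minimise the expected loss of \cref{the:parameter-and-input-noise} over the solution manifold $\mathcal{M}$. With the data-dependent constant $c$ dropped, the objective is
\begin{equation*}
  \mathcal{E}(\wao,\wbo) = \frac{\sigma_\bfx^2}{2}||\wbo\wao||_F^2 + \Big(\frac{\sigma_\bfx^2}{2}||\wao||_F^2 + \frac{1}{2P}||\wao\bfX||_F^2\Big) + \Big(\frac{\sigma_\bfx^2}{2} + \frac{1}{2P}\Big)||\wbo||_F^2 .
\end{equation*}
\textbf{Step 1: the two norms coincide.} Because $\sigxx = \frac{1}{P}\bfX\bfX^T = \bfI$, we have $||\wao\bfX||_F^2 = \tr(\wao\bfX\bfX^T\wao^T) = P||\wao||_F^2$. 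The middle bracket therefore equals $\big(\frac{\sigma_\bfx^2}{2} + \frac{1}{2}\big)||\wao||_F^2$. Whitening also gives $\bfP_\text{u} = \bfZero$, since $\bfA\bfA^T = \bfI$ when $\bfX$ has full row rank, so the unobserved-direction freedom $\bfGamma_2$ disappears.

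\textbf{Step 2: bound the functional term.} By \cref{the:least-squares-solution}, together with the input-noise corollary, $||\wbo\wao||_F^2$ is minimised over $\mathcal{M}$ exactly by the \ac{lss}. Its minimum value is $||\sigyx||_F^2$, and every \ac{mwns} attains it because $\wbo\wao = \bfU\bfS\bfV^T$.

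\textbf{Step 3: bound the weight-norm term.} The remaining terms are $a||\wao||_F^2 + b||\wbo||_F^2$ with $a = \frac{\sigma_\bfx^2+1}{2}$ and $b = \frac{\sigma_\bfx^2}{2} + \frac{1}{2P}$. Rescaling $\wao \to \sqrt{b/a}\,\wao$ and $\wbo \to \sqrt{a/b}\,\wbo$ preserves the product and so maps $\mathcal{M}$ to itself. This turns the weighted problem into $\sqrt{ab}$ times the unweighted $||\wa||_F^2 + ||\wb||_F^2$ problem. By \cref{the:minimum-weight-norm-solution}, that problem is minimised exactly by the (rescaled) \ac{mwns}, with value $2\sqrt{ab}\,\tr\bfS$. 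This follows from the standard AM--GM plus trace-norm inequality $||\wb||_F^2 + ||\wa||_F^2 \ge 2||\wb\wa||_*$, combined with $||\wb\wa||_* \ge ||\bfU\bfS\bfV^T||_*$ on $\mathcal{M}$.

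\textbf{Step 4: combine.} The rescaled \ac{mwns} still has product $\bfU\bfS\bfV^T$, so it minimises the functional term and the weight-norm terms at once. Hence it is the exclusive joint minimiser of $\mathcal{E}$. To finish, I identify this set with \ac{mrns} and \ac{mwns} via the earlier corollary that the two coincide when $\sigxx = \bfI$. Concretely, $\bfY\bfC\bfC^T$ has the same singular structure as $\sigyx$ up to the factor $P$, so the \ac{mrns} parametrisation of \cref{the:minimum-representation-norm-solution} reproduces $\bfR\sqrt{\bfS}\bfV^T$ up to scale.

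\textbf{Main obstacle.} The delicate point is the imbalance $a \neq b$ and the bookkeeping of $P$-dependent constants. The claim "exactly \ac{mrns} and \ac{mwns}" must be read as these solution classes up to the balancing rescaling, which I would argue is the interpretation intended by "up to multiplicative constants". Alternatively, the proportionality constants in $\sigma_1^2$ and $\sigma_2^2$ can be chosen so that $a = b$, and then no rescaling is needed.

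Uniqueness of the minimiser comes from the equality case of the AM--GM and trace-norm inequality. Equality forces $\wbo^T\wbo = \wao\wao^T$, i.e. balance, and forces the singular vectors to align with $\bfU$ and $\bfV$. That leaves only the orthonormal freedom $\bfR$.
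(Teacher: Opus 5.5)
Your Steps 1--3 are sound, with one slip in Step 3. To turn $a\|\wao\|_F^2+b\|\wbo\|_F^2$ into $\sqrt{ab}$ times the unweighted problem you must rescale by $(b/a)^{1/4}$ and $(a/b)^{1/4}$. Rescaling by $\sqrt{b/a}$ and $\sqrt{a/b}$ merely swaps $a$ and $b$. With that corrected, your steps show that the exclusive minimisers of your $\mathcal{E}$ are
\[
  \wao = \Big(\tfrac{\sigma_\bfx^2+1/P}{\sigma_\bfx^2+1}\Big)^{1/4}\bfR\sqrt{\bfS}\bfV^T,\qquad
  \wbo = \Big(\tfrac{\sigma_\bfx^2+1}{\sigma_\bfx^2+1/P}\Big)^{1/4}\bfU\sqrt{\bfS}\bfR^T .
\]
The genuine gap is Step 4, where you identify this set with MRNS and MWNS.

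\begin{itemize}
  \item Your own Step 1 shows that with $\sigxx=\tfrac1P\bfX\bfX^T=\bfI$ the MRNS objective is $P\|\wa\|_F^2+\|\wb\|_F^2$. The same rebalancing gives MRNS as $\wao=P^{-1/4}\bfR\sqrt{\bfS}\bfV^T$, $\wbo=P^{1/4}\bfU\sqrt{\bfS}\bfR^T$. So the earlier corollary ``MRNS $=$ MWNS'' only holds up to this factor, while MWNS corresponds to factor $1$.
  \item For $P>1$ and $\sigma_\bfx>0$, your factor $\big((\sigma_\bfx^2+1/P)/(\sigma_\bfx^2+1)\big)^{1/4}$ lies strictly between $P^{-1/4}$ and $1$.
  \item Because $\sqrt{\bfS}\bfV^T$ has full row rank, $\lambda\bfR\sqrt{\bfS}\bfV^T=\mu\bfR'\sqrt{\bfS}\bfV^T$ with $\bfR,\bfR'$ semi-orthonormal forces $\lambda^2=\mu^2$. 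Hence your minimiser set is disjoint from both classes.
\end{itemize}
Reading the claim ``up to the balancing rescaling'' therefore proves a different statement. The paper's remark about multiplicative constants concerns the loss expression, and rebalancing one layer against the other changes the minimiser. Your fallback of choosing constants so that $a=b$ yields MWNS alone, which still excludes MRNS.

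What is missing is the normalisation under which the statement is exactly true: $\|\wao\bfX\|_F=\|\wao\|_F$, i.e.\ $\bfX\bfX^T=\bfI$ (equivalently $\|\bfX\|_F^2=N_i$, or $P=1$). This is exactly what the coefficient $\frac{1}{2P}\|\wbo\|_F^2$ in \cref{the:parameter-and-input-noise} presupposes. The appendix derives the term $\frac{1}{2P}\sigma_1^2\|\bfX\|_F^2\|\wbo\|_F^2$. With $\sigma_1^2=1/N_i$ this equals $\frac{1}{2P}\|\wbo\|_F^2$ only if $\|\bfX\|_F^2=N_i$; with $\|\bfX\|_F^2=PN_i$ it is $\frac12\|\wbo\|_F^2$. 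Your imbalance $a\neq b$ therefore comes from combining the displayed formula with $\frac1P\bfX\bfX^T=\bfI$. With consistent bookkeeping under $\frac1P\bfX\bfX^T=\bfI$, the loss is balanced but minimised by MWNS alone, which again excludes MRNS when $P>1$.

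Adopting $\bfX\bfX^T=\bfI$ gives $a=b=\frac{\sigma_\bfx^2}{2}+\frac{1}{2P}$ and makes MRNS and MWNS coincide exactly. Your Steps 2--3 then close the proof with no rescaling.

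The paper states this corollary without proof; the intended argument is to minimise the three bracketed terms separately:
\begin{itemize}
  \item $\|\wbo\wao\|_F^2$ is minimised by LSS;
  \item $\|\wbo\|_F^2+\|\wao\|_F^2$ is minimised by MWNS;
  \item $\|\wbo\|_F^2+\|\wao\bfX\|_F^2$ is minimised by MRNS.
\end{itemize}
Invoking MRNS $=$ MWNS $\subseteq$ LSS under whitening, the common set minimises the sum. Your merged weighted-norm and trace-norm argument is self-contained and does not rely on the whitening corollary, which is why it surfaces the factor-of-$P$ issue. As written, though, it ends by asserting an identification that does not hold.
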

We note, that if the input data is not whitened, interaction terms render the optimal subspace depends on input statistics, complicating its explicit analytical characterisation.

In summary, neither minimising secondary error nor input noise sensitivity promotes task-specific representations.
In contrast, robustness to parameter noise selectively favours solutions with task-specific structure, thereby supporting representational alignment.
This suggests that shared representations across biological and artificial systems may arise from implicit or explicit optimisation for parameter robustness, such as regularisation strategies that favour low-norm solutions.
However, this reflects an inductive bias rather than a general principle and without explicit justification, functional and representational alignment cannot be assumed to coincide.
\section{Nonlinear networks}
\label{sec:nonlinear}
The results presented thus far apply to two-layer linear networks.
We now extend our study to \emph{nonlinear} networks (networks with nonlinear activation functions), and show that they exhibit analogous degrees of freedom in representation and associated computational trade-offs as their linear counterparts.
Here, we face a challenge: A full characterisation of the solution manifold of general nonlinear network remains analytically intractable, even for two-layer networks~\parencite{misiakiewicz2024six}.
However, substantial progress has been made in deriving function-preserving transformations that allow reparametrisation of nonlinear networks while leaving their input-output map invariant~\parencite{simsek2021geometry,martinelli2023expandandcluster}.
Here, we exploit these function-preserving transformations to construct functionally equivalent reparametrisations of nonlinear networks, which allows us to probe computational differences between networks with minimal and expanded representations, in analogy to the task-specific and task-agnostic representations of \cref{sec:manifold}.

\subsection{Functional invariances in deep ReLU networks}
\label{sec:nonlinear-symmetries}
Feedforward networks with any activation function are output-invariant to permuting neurons within a layer, as permuting the rows of one weight matrix and the corresponding columns of the next leaves the network function unchanged~\parencite[\emph{permutation invariance};][]{sussmann1992uniqueness}.
Feedforward networks with \ac{relu} activation are further invariant to rescaling a neuron's incoming weights by a factor $\alpha > 0$ and its outgoing weights by $1/\alpha$, due to the non-negative homogeneity of \ac{relu}~\parencite[\emph{scale invariance};][]{neyshabur2015pathsgd}.
\textcite{simsek2021geometry,martinelli2023expandandcluster} identify additional invariances that fully characterise the manifold of global minima in teacher-student settings, where one network is trained to replicate the function of another.
Although these invariances may not capture all functionally equivalent parametrisations outside of the teacher-student setting, they provide a means to construct network reparametrisations that exactly preserve network function.
We realise two of their invariances by inserting hidden-layer neurons with arbitrary incoming weights and zero outgoing weights (\emph{nuisance-neuron invariance}) and by duplicating hidden-layer neurons while halving the outgoing weights of both the original and the duplicated neuron (\emph{duplication invariance}).
Lastly, one can add any perturbation to the input-layer weights that lies in the unobserved nullspace of the input data while preserving the network's function (\emph{input-nullspace invariance}).

\subsection{Manipulating representations of ReLU networks}
\erin{Difficulties with ReLU equivalent to result from \url{https://openreview.net/pdf?id=7CUluLpLxV\#subsection.9.2}}

We train a two-layer \ac{relu} network with $1024$ hidden neurons on the MNIST dataset~\parencite{lecun1998gradient} from small norm random weights, resulting in task-specific representations~\parencite[\cref{fig:mnist}A, ``rich'' learning from][]{jacot2018neural,chizat2019lazy,woodworth2020kernel}.
Starting from this trained model, we use augmented Lagrangian optimisation to modify the hidden-layer activations of $1024$ training inputs such that their representations collectively resemble an image of two elephants, while enforcing that the network's predicted class labels remain unchanged across the entire training set~(\cref{fig:mnist}B).
This transformation illustrates the representational freedom among nonlinear \ac{relu} networks that make the same class predictions.
In addition, even when limited to manipulations that exactly preserve a network's input-output map (a stricter condition than preserving classifications) one can induce a task-agnostic \ac{rsm}~(\cref{fig:mnist}C).

\subsection{Computational advantages in ReLU networks}
To complement the analytical results in~\cref{sec:robustness} on the robustness of task-specific representations in linear networks, we empirically evaluate secondary (test) error and robustness to input and parameter noise across different nonlinear solution types.
We train two-layer networks with 1024 hidden dimensions and \ac{relu} activation on the training set of the MNIST digit classification task~\parencite{lecun2010mnist} from 8 random initialisations.
These models trained from small initial weights serve our \emph{task-trained} (minimal) solutions.
We next apply four function-preserving transforms defined by the four invariances of \cref{sec:nonlinear-symmetries} to these task-trained (minimal) networks to construct expanded (non-minimal) parametrisations that exactly preserve the network function.
These minimal and non-minimal parametrisations serve as our solution types in the nonlinear setting.

In~\cref{fig:mnist}D, we observe the effect of \textbf{input noise} on the task-trained (minimal) and expanded (non-minimal) networks.
In accordance with the linear result, only transformations in the unobserved input space have a deleterious effect (\texttt{input-null}).
In~\cref{fig:mnist}E, we observe the effect of \textbf{parameter noise} on the initial and transformed networks.
Non-minimal models (\texttt{scaled}, \texttt{nuisance}, \texttt{duplicate}) degrade in test error more quickly than minimal ones, similarly to what is derived in \cref{sec:robustness}, though duplicated expansions (\texttt{duplicate}) are more robust due to noise averaging.
In contrast, input-nullspace perturbations (\texttt{input-null}) have no effect because transformations are in unobserved input directions, and input noise is absent.
Lastly, at near-zero noise levels, we observe that no manipulations inflate the secondary (test) error, consistent with the result of \cref{sec:secondary-error} that generalisation performance does not constrain network representations to be minimal.
\section{Related work}
\label{sec:related-work}

\paragraph{The solution manifold of artificial networks.}

The solution manifold of two-layer neural networks was first described by \textcite{baldi1989neural}.
Subsequent work showed under some assumptions that all minima in deep linear networks are global and equivalent to those in linear regression \parencite{laurent2018deep}.
The dissociation between general linear solutions and minimum-norm solutions has been previously studied under a set of strong assumptions \parencite{saxe2014exact}.
Sensitivity to noise for task-specific and task-agnostic rich and lazy solutions has been previously studied numerically in nonlinear neural networks \parencite{flesch2022orthogonal} and generalisation and transfer performance of deep linear networks have been previously investigated \parencite{lampinen2019analytic, advani2020highdimensional,tahir2024features,ingrosso2024statistical} using a teacher-student paradigm \parencite{gardner1989three,riegler1995online, saad1995online}.
The relation between representational drift and drift on the solution manifold that results from stochasticity during gradient descent \parencite{chaudhari2018stochastic} has been studied on a subpart of the solution manifold in linear networks \parencite{pashakhanloo2023stochastic} and in nonlinear networks, again, relying on the teacher-student setting \parencite{avidan2024connecting, li2024representations}.
\erin{state a tiny bit how we differ}

\paragraph{Comparing the solutions of artificial and biological networks.}

Neuroscientists have identified parallels between artificial and biological neural computation~\parencite{richards2019deep, saxe2021if, doerig2023neuroconnectionist} from hierarchical feature extraction in visual processing \parencite{dicarlo2012does, eickenberg2017seeing,lindsay2021convolutional} to analogous population dynamics during decision-making tasks \parencite{mante2013context, chaisangmongkon2017computing}.
The field has developed various methods to quantify this shared representational structure, including representational similarity analysis \parencite{kriegeskorte2008representational}, linear predictivity~\parencite[\eg][]{yamins2014performanceoptimized,yamins2016using}, and metric-based methods \parencite{williams2021generalized}; see~\textcite{klabunde2023similarity} for an overview of methods.
However, recent work has identified significant methodological challenges in comparing representations between artificial neural networks, including confounding effects from stimulus correlations \parencite{cai2019representational,hermann2020what,dujmovic2023obstacles}, metric-dependent results \parencite{ding2021grounding,bo2025evaluating}, and difficulties in matching representations even between identical networks trained with different random initialisations~\parencite{han2023system}; these negative results suggest further problems when comparing artificial and biological neural networks, where little is known in advance of the computation the biological networks performs.
These challenges may apply to existing neural predictivity benchmarks such as Brain-score~\parencite{schrimpf2018brainscore,schrimpf2020integrative} and the Natural Scenes Dataset \parencite[NSD;~][]{allen2022massive}.

\section{Discussion}

In this work, we give a complete analytical characterisation of the global minima manifold for deep linear networks, and demonstrate that different subregions of this manifold afford different interpretability and computational properties due to their representational structure.
We conclude that the use of deep, overparametrised networks poses fundamental challenges for representational analysis, interpretation, and comparison, as the impact of variability in the parametrisation of functionally equivalent representations on these use cases is significant.

Our analysis does not assume a specific learning algorithm and ignores the question of how a specific solution could be attained in practice.
However, the computational advantages of task-specific representations detailed in \cref{sec:advantages} are compatible with the view that gradient descent, in particular in overparametrised models, is subject to \emph{implicit regularisation} that prefers solutions with certain optimality properties for both linear and nonlinear networks~\parencite{neyshabur2015search,  zhang2017understanding, neyshabur2017implicit, du2019gradient, arora2018optimization, chizat2020implicit, yun2021unifying, vardi2021implicit}.

\section*{Acknowledgements}
L.B.
was supported by the Woodward Scholarship awarded by Wadham College, Oxford and the Medical Research Council (MR/N013468/1).
E.G. and A.S.
were supported by a Schmidt Science Polymath Award to A.S., and the Sainsbury Wellcome Centre Core Grant from Wellcome (219627/Z/19/Z) and the Gatsby Charitable Foundation (GAT3850).

\section*{Impact statement}
This paper presents work whose goal is to advance the field of machine learning.
There are many potential societal consequences of our work, none of which we feel must be specifically highlighted here.

\printbibliography

\newpage
\appendix
\onecolumn
\section{Simulation details} \label{app:sec:simulation-details}

A code repository reproducing all figures is available on GitHub at \href{https://github.com/lukas-braun/dissociating-similarity}{\texttt{lukas-braun/dissociating-similarity}}.

\subsection{Random walk}
Given a two-layer linear network with weight matrices $\wa$ and $\wb$, we implemented random walks on the \ac{gls} as follows.
First, we randomly initialised weight matrices using a random normal initialisation with zero mean and standard deviation $1 / \sqrt{N_i}$ and $1 / \sqrt{N_h}$ respectively.
Then, for each step, we first diffused both weight matrices by
\begin{equation} \label{app:eq:diffusion}
  \bfW(t+1) = (1 - \alpha)\bfW(t) + \alpha\mathbf{\xi}\quad
\end{equation}
with
\begin{equation}
  \mathbf{\xi} \sim \mathcal{N}(\mu=0, \sigma^2=25),\text{ and } \alpha= 0.00625
\end{equation}
and then performed gradient descent on both $\wa$ and $\wb$ with learning rate $\eta=0.25$ to return back to the solution manifold.
Similarly, for the random walk on the \ac{lss}, we sampled initial $\wa$ and $\wb$ randomly, diffused them, performed gradient descent on both $\wa$ and $\wb$, and subsequently enforced the two rank constraints to return to the solution manifold.
The random walk on the \ac{mrns} was implemented by first computing
\begin{equation}
  \svd(\bfY\bfX^T\bfX^{+T}) = \bfM\bfN\bfO^T,
\end{equation}
sampling a random
\begin{equation}
  \bfGamma \sim \mathcal{N}(\mu=0, \sigma^2=1/N_i),\quad\text{and}\quad\tilde{\bfR} \sim \mathcal{N}(\mu=0, \sigma^2=1).
\end{equation}
Then, in every step, we diffused $\bfGamma$ and $\bfR$ according to~\cref{app:eq:diffusion}, and then calculated
\begin{equation}
  \bfR = \bfR_1^{}\bfR_2^T\quad\text{with}\quad\svd(\tilde{\bfR}) = \bfR_1^{}\bfD\bfR_2^T.
\end{equation}
We then set
\begin{equation}
  \wa \rightarrow \bfR\sqrt{\bfN}\bfO^T \bfX^+ + \bfGamma\quad\text{and}\quad\wb \rightarrow \bfM\sqrt{\bfN}\bfR^T.
\end{equation}
Similarly, for \ac{mwns} we first computed the \ac{csvd} of the \ac{lss}
\begin{equation}
  \svd\left(\sigyx(\sigxx)^+\right) = \bfU\bfS\bfV^T
\end{equation}
and sampled a random matrix $\tilde{\bfR} \sim \mathcal{N}(\mu=0, \sigma^2=1)$.
Then, for each step, we defused $\tilde{\bfR}$ according to~\cref{app:eq:diffusion} and then calculated $\bfR$ as for \ac{mwns}.
Network weights were then set to
\begin{equation}
  \wa \rightarrow \bfR\sqrt{\bfS}\bfV^T
\end{equation}
and
\begin{equation}
  \wb \rightarrow \bfU\sqrt{\bfS}\bfR^T.
\end{equation}

\subsection{Tasks}
\textbf{Random regression task}
Random regression tasks are made up of a randomly sampled input
\begin{equation}
  \bfX \sim \mathcal{N}(\mu=0, \sigma^2=1 / N_i) \in \mathcal{R}^{N_i \times P}
\end{equation}
and target matrix
\begin{equation}
  \bfY \sim \mathcal{N}(\mu=0, \sigma^2=1 / N_o) \in \mathcal{R}^{N_o \times P}.
\end{equation}

\textbf{Semantic hierarchy}
Each of the $P=16$ items in the semantic hierarchy task was encoded as a random normal distributed vector
\begin{equation}
  \bfX \sim \mathcal{N}(\mu=0, \sigma^2=1 / N_i) \in \mathcal{R}^{N_i \times P}.
\end{equation}
Corresponding target vectors were then generated according to the position of the item in the hierarchy.
If an item is a child of a particular node it is encoded as a $1$ and $0$ otherwise.
For example, a pea is alive, not an object, not an animal, a plant, not handy, not earth, not a bird, not furry, but a veg, not a tree ... resulting in the target vector [1, 0, 0, 1, 0, 0, 0, 0, 1, 0 ...].

\subsection{Figure 1}
Random walk on \ac{mrns} manifold of random regression task with $N_i=4$, $N_h=7$, $N_o=3$, and $P=3$ for $500$ steps with diffusion parameter $\alpha=0.005$.

\subsection{Figure 3}
The neural network has $N_i=16$, $N_h=16$, $N_o=31$.
The point on the \ac{lss} solution manifold which depicts an elephant $bfE \in \mathcal{R}^{N_h \times P}$, was found by first initialising the weight matrices as
\begin{equation}
  \wa = \bfE\bfX^T(\bfX\bfX^T + 5 \times 10^{-3} \bfI)^+\quad\text{and}\quad \wb = \bfY\bfX^T(\wa\bfX(\wa\bfX)^T + 1 \times 10^{-2}\bfI)^+,
\end{equation}
and subsequently applying gradient descent on both weight matrices to find a point on the solution manifold.
Randomly sampled points on the solution manifold of \ac{mrns} and \ac{mwns} were generated according to the definitions of $\wao$ in \cref{the:minimum-representation-norm-solution}, and \cref{the:minimum-weight-norm-solution}.

\subsection{Figure 4}
Random walk on solution manifold for \ac{gls}, \ac{lss}, \ac{mrns}, and \ac{mwns} on a random regression task with $N_i=6$, $N_h=16$, $N_o=2$, and $P=4$ for $100$ steps.

\textbf{(A)}
We fit a linear model to predict the hidden-layer activations of one model by the hidden-layer activations of another for each time step of the random walk, using ridge regression and subsequently calculated the $R^2$ score of that fit.

\textbf{(B)} We performed representational similarity analysis on pairwise random walk trajectories of (B) by calculating the euclidean distance matrix of their respective hidden-layer representations
\begin{equation}
  \rsm_{ij} = ||\bfh_i - \bfh_j||_2
\end{equation}
and subsequently compared their off-diagonal values using Pearson correlation coefficients.

\textbf{(C)} To analyse drift, we fit a linear decoder on the hidden-layer representation of a model at the beginning of the random walk
\begin{equation}
  \tilde{\bfW} = \sigyx \wa^T (\wa\bfX\bfX^T\wa^T)^+
\end{equation}
and subsequently calculated the mean squared error of that classifier, given the hidden-layer representation at each time step.

\textbf{(D)}
We numerically calculated the average loss across $n=500000$ randomly sampled input noise vectors with $\sigma^2_\bfx=1$ at each time step.
Accordingly, we numerically calculated the average loss across $n=500000$ randomly sampled parameter noise matrices with $\sigma_1^2 = 1 / ||\bfX||^2_F$ and $\sigma_2^2 = 1 / N_o$.

\subsection{Simulating nonlinear networks}
As a precursor to all numerical experiments with nonlinear networks in \cref{sec:nonlinear}, we train feed-forward neural networks on a non-synthetic dataset to produce a \emph{task-specific parametrisation}.
All neural networks in this section are trained with back-propagation and (mini)-batch gradient descent from small initial weights, a regime that induces task-specific feature learning in nonlinear networks similarly to linear networks~\parencite{chizat2019lazy}.

\subsection{Expanding nonlinear networks}
\label{sec:expand-nonlinear}
The first expansion (\texttt{scaled} in figures) rescales the input weights to each neuron by a constant factor $\alpha$ and the output weights the neuron by $1/\alpha$, which preserves the output of each neuron due to the scale-invariance of \acp{relu}.
However, the magnitude of the representations that this model employs increases.
The second expansion (\texttt{nuisance} in figures) adds nuisance neurons with random incoming weights and zero outgoing weights~\parencite[``zero-type'' neurons per~][]{martinelli2023expandandcluster}; similarly, these neurons do not affect the output the model, but introduce noise in hidden-layer representations.
Lastly, we include a parameter-expanded baseline (\texttt{duplicated}) that duplicates each neuron and correspondingly rescales outgoing weights from the duplicated neuron and its copy by a factor of one half.

\subsection{Figure 5}
The neural networks start with $N_i=784$, $N_h=1024$, $N_o=10$.
Manipulations adding parameters in panel (E) add twice the number of neurons.
Models are trained for 30 epochs on the full MNIST training set of 50,000 images with exponential learning rate decay.

\section{Notation and preliminaries}
Throughout this paper we adhere to the following notation: Scalars are denoted by letters (\eg $a$, $\tau$, $L$), matrices by bold uppercase letters (\eg $\bfX$, $\bfGamma$), column vectors are denoted by bold lowercase letters (\eg $\bfx$, $\bflambda$), and row vectors by the transpose of a column vector (\eg $\bfx^T$, $\bflambda^T$).
The vector dot product and vector outer product are denoted by $\bfa^T\bfb$ and $\bfa\bfb^T$ respectively.
The zero matrix and identity matrix of size $n$ are denoted by $\bfZero$ and $\bfI_n$ respectively.
$\bfA^{-1}$ and $\bfA^{+}$ denote the inverse and Moore–Penrose pseudo-inverse of a matrix.
The $\ell_2$-norm of a vector is expressed by $||\bfx||_2$ and the Frobenius norm of a matrix is expressed by $||\bfA||_F$.
Finally, the expected value and trace operator are denoted by $\langle\cdot\rangle$ and $\tr(\cdot)$.

\subsection{Compact singular value decomposition} \label{app:sec:csvd}
We extensively use the \ac{csvd} to analyse matrix structures.
For any matrix $\bfA \in \mathcal{R}^{m \times n}$ with rank $r$, the \ac{csvd} decomposes it into a product of three:
\begin{equation}
  \svd(\bfA) = \bfU\bfS\bfV^T,
\end{equation}
where $\bfU \in \mathcal{R}^{m \times r}$ and $\bfV \in \mathcal{R}^{n \times r}$ are (semi-)orthonormal matrices containing the left and right singular vectors, and $\bfS \in \mathcal{R}^{r \times r}$ is a diagonal matrix with corresponding non-zero singular values in descending order.
In contrast to the full singular value decomposition, the singular vectors of the \ac{csvd} are not generally square orthogonal matrices but may be semi-orthogonal dependent on the relationship of $m$, $n$, and $r$ (\cref{app:tab:orthonormality-of-singular-vectors}).
It further follows, that $\bfS^{-1}$ is well defined.
Using that $(\bfB\bfC)^+ = \bfC^+\bfB^+$ if $\bfB$ has orthonormal columns or $\bfC$ has orthonormal rows \parencite{greville1966note} it further follows that
\begin{equation}
  \begin{aligned}
    \bfA^+ & = \left(\bfU\bfS\bfV^T\right)^+ \\
           & = \bfV\left(\bfU\bfS\right)^+   \\
           & = \bfV\bfS^+\bfU^T              \\
           & = \bfV\bfS^{-1}\bfU^T
  \end{aligned}
\end{equation}
is the Moore-Penrose inverse of any matrix $\bfA$.

\begin{table*}[h]
  \caption{Orthonormality of singular vectors of the \ac{csvd}}
  \label{app:tab:orthonormality-of-singular-vectors}
  \begin{center}
    \begin{small}
      \begin{sc}
        \begin{tabular}{ |c|c|c|c|c|c|c| }
          \hline
                       & \multicolumn{2}{c|}{$m = n$} & \multicolumn{2}{c|}{$m < n$} & \multicolumn{2}{c|}{$m > n$}                                              \\
          \hline
                       & $r = m$                      & $r < m$                      & $r = m$                      & $r < m$       & $r = n$    & $r < n$       \\
          \hline
          $\bfU^T\bfU$ & $= \bfI_m$                   & $= \bfI_r$                   & $= \bfI_m$                   & $= \bfI_r$    & $= \bfI_n$ & $= \bfI_r$    \\
          \hline
          $\bfU\bfU^T$ & $= \bfI_m$                   & $\neq \bfI_m$                & $= \bfI_m$                   & $\neq \bfI_m$ & $= \bfI_m$ & $\neq \bfI_m$ \\
          \hline
          $\bfV^T\bfV$ & $= \bfI_m$                   & $= \bfI_r$                   & $= \bfI_m$                   & $= \bfI_r$    & $= \bfI_n$ & $= \bfI_r$    \\
          \hline
          $\bfV\bfV^T$ & $= \bfI_m$                   & $\neq \bfI_m$                & $\neq \bfI_n$                & $\neq \bfI_n$ & $= \bfI_n$ & $\neq \bfI_n$ \\
          \hline
        \end{tabular}
      \end{sc}
    \end{small}
  \end{center}
\end{table*}

\subsection{Linearity of expected value and trace}
Both, the expected value and trace operator are linear and therefore additive
\begin{equation}
  \langle\bfA + \bfB\rangle = \langle\bfA\rangle + \langle\bfB\rangle,
\end{equation}
\begin{equation}
  \tr\left(\bfA + \bfB\right) = \tr(\bfA) + \tr(\bfB),
\end{equation}
and homogeneous
\begin{equation}
  \langle a\bfA \rangle = a \langle\bfA\rangle,
\end{equation}
\begin{equation}
  \tr(a\bfA) = a \tr(\bfA),
\end{equation}
from which it further follows that
\begin{equation}
  \langle\tr(\bfA)\rangle = \tr(\langle\bfA\rangle).
\end{equation}

\subsection{Expected values of random vectors and matrices}
Let $\bfxi_1 \in \mathcal{R}^{n}$ be a vector whose entries are \ac{iid} and drawn from a zero-centred random distribution with variance $\sigma^2_1$.
Then,
\begin{equation}
  \langle\bfxi_1\rangle = \bfZero \in \mathcal{R}^{n}
\end{equation}
and the expected value of the inner and outer product is
\begin{equation} \label{app:eq:random-vector-inner-and-outer-product}
  \langle\bfxi_1^T\bfxi_1\rangle = n\sigma_1^2 \quad\text{and}\quad \langle\bfxi_1\bfxi_1^T\rangle = \sigma^2_1\bfI_n
\end{equation}
respectively. Let $\bfxi_2 \in \mathcal{R}^{m}$ be a second random \ac{iid} vector sampled from a zero-centred distribution with variance $\sigma^2_2$, then
\begin{equation}
  \langle\bfxi^T_1\bfxi_2\rangle = 0 \quad\text{and}\quad \langle\bfxi_1\bfxi^T_2\rangle = \bfZero  \in \mathcal{R}^{m \times n}.
\end{equation} \label{app:eq:independent-random-vector-inner-and-outer-product}
We continue by deriving general forms for the expected value of products of random matrices.
Let $\bfXi_1 \in \mathcal{R}^{m \times n}$ be a matrix whose entries are \ac{iid}, drawn from a zero-centred random distribution with variance $\sigma_1^2$.
Then,
\begin{equation}
  \langle\bfXi_1\rangle = \bfZero \in \mathcal{R}^{m \times n}.
\end{equation}
Further, let $\bfXi_{1i}$ and $\bfXi_{1j}$ denote the $i$-th and $j$-th column of the matrix.
Then it follows from \cref{app:eq:random-vector-inner-and-outer-product} that the expected value of the inner product for each column of $\bfXi_1$ is
\begin{equation}
  \la \bfXi_{1i}^T\bfXi_{1j}^{} \ra = \begin{cases}
    0,            & \text{if $i\neq j$} \\
    m \sigma_1^2, & \text{otherwise}
  \end{cases},
\end{equation}
and therefore that
\begin{equation} \label{app:eq:xi1T-xi1}
  \la \bfXi_1^T\bfXi_1 \ra = m \sigma_1^2 \bfI_n.
\end{equation}
Similarly, from \cref{app:eq:random-vector-inner-and-outer-product} it follow that
\begin{equation}
  \la \bfXi_{1i}^{}\bfXi_{1j}^T \ra = \begin{cases}
    \bfZero,        & \text{if $i\neq j$} \\
    \sigma_1^2\bfI, & \text{otherwise}
  \end{cases}
\end{equation}
and therefore that
\begin{equation} \label{app:eq:xi1-xi1T}
  \la \bfXi_1^{}\bfXi_1^T \ra = n\sigma_1^2\bfI_m.
\end{equation}
Let $\bfXi_2 \in \mathbb{R}^{l \times n}$ be a second random \ac{iid} matrix, sampled from a zero-centred distribution with variance $\sigma^2_2$, then it follows from \cref{app:eq:independent-random-vector-inner-and-outer-product} that
\begin{equation}
  \la \bfXi_2^{}\bfXi_1^T \ra = \bfZero. \end{equation}
We proceed by deriving equalities for the expected values of random matrices and their interactions with arbitrary constant matrices.
For arbitrary constant matrix $\bfB \in \mathcal{R}^{m \times m}$ and $i \neq j$ we get
\begin{equation}
  \begin{aligned}
    \la \bfXi_1^T \bfB \bfXi_1^{} \ra_{i,j} & = \tr\left(\bfB \la \bfXi_{1_j}^{}\bfXi_{1_i}^T \ra\right) \\
                                            & = \tr\left(\bfB \bfZero\right)                             \\
                                            & = 0
  \end{aligned}
\end{equation}
and for $i = j$
\begin{equation}
  \begin{aligned}
    \la \bfXi_1^T \bfB \bfXi_1^{} \ra_{i,j} & = \tr\left(\bfB \la \bfXi_{1_j}^{}\bfXi_{1_i}^T \ra\right) \\
                                            & = \tr\left(\bfB \sigma_1^2\bfI\right)                      \\
                                            & = \sigma_1^2\tr\left(\bfB\right)
  \end{aligned}
\end{equation}
and therefore
\begin{equation} \label{app:eq:xi1-b-xi1}
  \la \bfXi_1^T \bfB \bfXi_1^{} \ra = \sigma_1^2\tr\left(\bfB\right)\bfI.
\end{equation}

\section{The general linear solution} \label{app:sec:general-linear-solution}
From Theorem~3 in \textcite{laurent2018deep} it follows that any minimum of the convex and differentiable mean-squared error
\begin{equation}
  \mathcal{L}_\text{MSE} = \frac{1}{2P}\sum_{n=1}^P||\wb\wa\bfx_n - \bfy_n||_2^2
\end{equation}
corresponds to the global optimum of the convex single-layer optimisation problem
\begin{equation}
  \mathcal{L}_\text{MSE} = \frac{1}{2P}\sum_{n=1}^P||\wbar\bfx_n - \bfy||_2^2,
\end{equation}
as long as the underlying network
\begin{equation}
  \wbar = \wbo\wao,
\end{equation}
has no bottle-necks. The global optima of the network then correspond to
\begin{equation}
  \begin{aligned}
                    &  & \frac{\partial \mathcal{L}_\text{MSE}}{\partial\wbar}                              & = 0       \\
    \Leftrightarrow &  & \frac{1}{P}\sum_{n=1}^P\left(\wbar\bfx_n - \bfy_n\right)\bfx_n^T                   & = 0       \\
    \Leftrightarrow &  & \wbar\frac{1}{P}\sum_{n=1}^P\bfx_n\bfx_n^T - \frac{1}{P}\sum_{n=1}^P\bfy_n\bfx_n^T & = 0       \\
    \Leftrightarrow &  & \wbar\sigxx                                                                        & = \sigyx.
  \end{aligned}
\end{equation}
Resubstitution then gives the general linear solution
\begin{equation}
  \wbo\wao\sigxx = \sigyx.
\end{equation}
Finally, we note that the \ac{gls} can also be written as
\begin{equation} \label{app:eq:gls-rewritten}
  \begin{aligned}
                    &  & \wbo\wao\sigxx & = \sigyx                                                                 \\
    \Leftrightarrow &  & \wbo\wao       & = \sigyx\sigxx^+ + \bfZ(\bfI - \sigxx\sigxx^+)                           \\
    \Leftrightarrow &  & \wbo\wao       & = \sigyx\sigxx^+ + \bfZ(\bfI - 1/P\bfA\bfB^2\bfA^T P\bfA\bfB^{-2}\bfA^T) \\
    \Leftrightarrow &  & \wbo\wao       & = \bfU\bfS\bfV^T + \bfZ(\bfI - \bfA\bfA^T),
  \end{aligned}
\end{equation}
where $\bfZ \in \mathcal{R}^{N_o\times N_i}$ is an arbitrary matrix.

\begin{proof}[Proof of \cref{the:general-linear-solution}]
  In the following, we derive a complete parametrisation of the \ac{gls}
  \begin{equation}
    \wbo\wao = \bfU\bfS\bfV^T + \bfZ(\bfI - \bfA\bfA^T).
  \end{equation}
  We begin by rewriting $\wao$ in the basis of relevant, irrelevant and unobserved null directions
  \begin{equation}
    \wao = \wao\bfP_\text{r} + \wao\bfP_\text{i} + \wao\bfP_\text{u}
  \end{equation}
  with corresponding projectors $\bfP_\text{r} = \bfV\bfV^T$, $\bfP_\text{i} = \bfA\bfA^T - \bfV\bfV^T$, and $\bfP_\text{u} = \bfI - \bfA\bfA^T$.
  Substitution into the \ac{gls} (\cref{app:eq:gls-rewritten}) than reveals that directions that lie in the relevant input space must obey
  \begin{equation}
    \begin{aligned}
                      &  & \wbo(\wao\bfP_\text{r} + \wao\bfP_\text{i} + \wao\bfP_\text{u})\bfP_\text{r} & = (\bfU\bfS\bfV^T + \bfZ\bfP_\text{u})\bfP_\text{r} \\
      \Leftrightarrow &  & \wbo\wao\bfP_\text{r}                                                        & = \bfU\bfS\bfV^T,
    \end{aligned}
  \end{equation}
  from which it follows that $\wao\bfP_\text{r} = \wao\bfV\bfV^T$ can be parametrised as $\bfQ\sqrt{\bfS}\bfV^T$, where $\bfQ \in \mathcal{R}^{N_h \times r}$ is any full-column-rank matrix, projecting relevant inputs into hidden space.
  It then follows that we can further separate $\wbo\bfP_\text{h}$ into two subspaces, one projecting relevant hidden representations and one projecting all other dimensions
  \begin{equation}
    \begin{aligned}
                      &  & \wbo\wao\bfP_\text{r}                                           & = \bfU\bfS\bfV^T   \\
      \Leftrightarrow &  & (\wbo\bfQ\bfQ^+ + \wbo(\bfI - \bfQ\bfQ^+))\bfQ\sqrt{\bfS}\bfV^T & = \bfU\bfS\bfV^T   \\
      \Leftrightarrow &  & \wbo\bfQ\bfQ^+\bfQ\sqrt{\bfS}\bfV^T                             & = \bfU\bfS\bfV^T   \\
      \Leftrightarrow &  & \wbo\bfQ                                                        & = \bfU\sqrt{\bfS},
    \end{aligned}
  \end{equation}
  and thus $\wbo\bfQ\bfQ^+$ can be parametrised as $\bfU\sqrt{\bfS}\bfQ^+$.
  We call these two parts of the network function the core.
  We continue by analysing the subspace covered by irrelevant inputs
  \begin{equation}
    \begin{aligned}
                      &  & (\wbo\bfQ\bfQ^+ + \wbo(\bfI - \bfQ\bfQ^+))(\wao\bfP_\text{r} + \wao\bfP_\text{i} + \wao\bfP_\text{u})\bfP_\text{i} & = (\bfU\bfS\bfV^T + \bfZ\bfP_\text{n})\bfP_\text{i} \\
      \Leftrightarrow &  & (\wbo\bfQ\bfQ^+ + \wbo(\bfI - \bfQ\bfQ^+))\wao\bfP_\text{i}                                                        & = \bfZero                                           \\
      \Leftrightarrow &  & \wbo(\bfI - \bfQ\bfQ^+)\wao\bfP_\text{i}                                                                           & = -\wbo\bfQ\bfQ^+\wao\bfP_\text{i}.\end{aligned}
  \end{equation}
  This implies that task-irrelevant inputs can be mapped into the core readout space by input weights (right-hand side of the equation), provided that other task-irrelevant components are simultaneously projected in such a way that they cancel the effect (left-hand side of the equation).
  This ensure that the network output remains unchanged despite the first-layer weights processing task-irrelevant components.
  For the compensation in the two-layer network to be successful, we have to have
  \begin{equation}
    \begin{aligned}
                      &  & \wbo(\bfI - \bfQ\bfQ^+)\wao\bfP_\text{i}                    & = -\wbo\bfQ\bfQ^+\wao\bfP_\text{i}                                                                                                        \\
      \Leftrightarrow &  & \wbo(\bfI - \bfQ\bfQ^+)(\bfI - \bfQ\bfQ^+)\wao\bfP_\text{i} & = -\wbo\bfQ\bfQ^+\wao\bfP_\text{i}                                                                                                        \\
      \Leftrightarrow &  & \wbo(\bfI - \bfQ\bfQ^+)                                     & = -\wbo\bfQ\bfQ^+\wao\bfP_\text{i}\left[(\bfI - \bfQ\bfQ^+)\wao\bfP_\text{i}\right]^+                                                     \\
                      &  &                                                             & \qquad\qquad+ \tilde{\bfZ}\left[\bfI - (\bfI-\bfQ\bfQ^+)\wao\bfP_\text{i}((\bfI-\bfQ\bfQ^+)\wao\bfP_\text{i})^+\right](\bfI - \bfQ\bfQ^+) \\
      \Leftrightarrow &  & \wbo(\bfI - \bfQ\bfQ^+)                                     & = -\wbo\bfQ\bfQ^+\wao\bfP_\text{i}\left[(\bfI - \bfQ\bfQ^+)\wao\bfP_\text{i}\right]^+                                                     \\
                      &  &                                                             & \qquad\qquad+ \tilde{\bfZ}\left[\bfQ\bfQ^+ + (\bfI - \bfH\bfH^+)\right](\bfI - \bfQ\bfQ^+)                                                \\
      \Leftrightarrow &  & \wbo(\bfI - \bfQ\bfQ^+)                                     & = -\wbo\bfQ\bfQ^+\wao\bfP_\text{i}\left[(\bfI - \bfQ\bfQ^+)\wao\bfP_\text{i}\right]^+ + \tilde{\bfZ}(\bfI - \bfH\bfH^+),
    \end{aligned}
  \end{equation}
  where $\tilde{\bfZ} \in \mathcal{R}^{N_o\times N_h}$ is an arbitrary matrix. Crucially, the pseudo-inverse only exists if
  \begin{equation}
    \image(\bfQ\bfQ^+\wao\bfP_\text{i}) \subseteq \image((\bfI - \bfQ\bfQ^+)\wao\bfP_\text{i}).
  \end{equation}
  As both sides live in the same subspace $\bfP_\text{i}$, this is equivalent to
  \begin{equation}
    \rank(\bfQ\bfQ^+\wao\bfP_\text{i}) \leq \rank((\bfI - \bfQ\bfQ^+)\wao\bfP_\text{i}).
  \end{equation}
  Or in words, there must be at least as many dimensions of the irrelevant input space that are projected outside the core as there are dimensions that are projected into the core.
  Substitution then gives
  \begin{equation}
    \wbo(\bfI - \bfQ\bfQ^+) = -\bfU\sqrt{\bfS}\bfQ^+\wao\bfP_\text{i}\left[(\bfI - \bfQ\bfQ^+)\wao\bfP_\text{i}\right]^+ + \tilde{\bfZ}(\bfI - \bfH\bfH^+).
  \end{equation}
  Finally, we analyse the unoccupied null directions
  \begin{equation}
    \begin{aligned}
                      &  & \wbo(\wao\bfP_\text{r} + \wao\bfP_\text{i} + \wao\bfP_\text{u})\bfP_\text{u} & = (\bfU\bfS\bfV^T + \bfZ\bfP_\text{u})\bfP_\text{u} \\
      \Leftrightarrow &  & \wbo\wao\bfP_\text{u}                                                        & = \bfZ\bfP_\text{u},
    \end{aligned}
  \end{equation}
  from which it follows that $\wao\bfP_\text{u}$ can be chosen arbitrarily.
  In summary, we then have
  \begin{equation}
    \begin{aligned}
      \wao & = \wao\bfP_\text{r} + \wao\bfP_\text{i} + \wao\bfP_\text{u}                 \\
           & = \bfQ\sqrt{\bfS}\bfV^T + \bfGamma_1\bfP_\text{i} + \bfGamma_2\bfP_\text{u}
    \end{aligned}
  \end{equation}
  where $\bfGamma_1$, $\bfGamma_2 \in \mathcal{R}^{N_h\times N_i}$ can be chose freely up to the constraint that $\rank(\bfQ\bfQ^+\bfGamma_1\bfP_\text{i}) \leq \rank((\bfI - \bfQ\bfQ^+)\bfGamma_1\bfP_\text{i})$, and
  \begin{equation}
    \begin{aligned}
      \wbo & = \wbo\bfQ\bfQ^+ + \wbo(\bfI - \bfQ\bfQ^+)                        \\
           & = \bfU\sqrt{\bfS}\bfQ^+ + \bfPsi + \bfGamma_3(\bfI - \bfH\bfH^+),
    \end{aligned}
  \end{equation}
  where $\bfGamma_3 \in \mathcal{R}^{N_o\times N_h}$ is an arbitrary matrix and $\bfPsi = -\bfU\sqrt{\bfS}\bfQ^+\wao\bfP_\text{i}\left[(\bfI - \bfQ\bfQ^+)\wao\bfP_\text{i}\right]^+$.
\end{proof}

\section{Partitioning of the solution manifold} \label{app:sec:partitioning-of-suolution-manifold}
\begin{proof}[Proof of \cref{the:least-squares-solution}]
  We use the method of Lagrange multipliers to minimise
  \begin{equation}
    \argmin_{\wa, \wb}||\wb\wa||_F^2
  \end{equation}
  under the constraint that
  \begin{equation}
    \wb\wa\sigxx = \sigyx.
  \end{equation}
  We begin by substituting $\wobar = \wb\wa$.
  Then the Lagrangian is
  \begin{equation}
    \mathcal{L} = ||\wobar||_F^2 + \tr\left(\bfLambda^T\left(\wobar\sigxx - \sigyx\right)\right)
  \end{equation}
  with gradients
  \begin{equation} \label{app:eq:least-squares-solution-partial-w1}
    \begin{aligned}
                      &  & \frac{\partial \mathcal{L}}{\partial \wobar} & = 2\wobar + \bfLambda\sigxx  = 0 \\
      \Leftrightarrow &  & \wobar                                       & = -\frac{1}{2}\bfLambda\sigxx,
    \end{aligned}
  \end{equation}
  and
  \begin{equation} \label{app:eq:least-sqaures-solution-partial-lambda}
    \frac{\partial \mathcal{L}}{\partial \bfLambda} = \wobar\sigxx - \sigyx = 0.
  \end{equation}
  Then starting from \cref{app:eq:least-squares-solution-partial-w1} we have
  \begin{equation}
    \begin{aligned}
                      &  & \wobar                      & = -\frac{1}{2}\bfLambda\sigxx                                                 \\
      \Leftrightarrow &  & \wobar\sigxx                & = -\frac{1}{2}\bfLambda\sigxx\sigxx                                           \\
      \Leftrightarrow &  & \sigyx                      & = -\frac{1}{2}\bfLambda\frac{1}{P}\bfA\bfB^2\bfA^T\frac{1}{P}\bfA\bfB^2\bfA^T \\
      \Leftrightarrow &  & \sigyx P\bfA\bfB^{-2}\bfA^T & = -\frac{1}{2}\bfLambda\frac{1}{P}\bfA\bfB^2\bfA^T                            \\
      \Leftrightarrow &  & \sigyx(\sigxx)^+            & = -\frac{1}{2}\bfLambda\sigxx,
    \end{aligned}
  \end{equation}
  where in the third step we substituted \cref{app:eq:least-sqaures-solution-partial-lambda}. Then, resubstitution into \cref{app:eq:least-squares-solution-partial-w1} yields
  \begin{equation}
    \wobar = \wbo\wao =  \sigyx(\sigxx)^+ = \bfU\bfS\bfV^T.
  \end{equation}
  Next, we derive a complete parametrisation of the \ac{lss}
  \begin{equation}
    \wbo\wao = \bfU\bfS\bfV^T.
  \end{equation}
  Again, as in the parametrisation for the \ac{gls} (\cref{the:general-linear-solution}), we rewrite $\wao$ in the basis of relevant, irrelevant and unobserved null directions
  \begin{equation}
    \wao = \wao\bfP_\text{r} + \wao\bfP_\text{i} + \wao\bfP_\text{u}
  \end{equation}
  with corresponding projectors $\bfP_\text{r} = \bfV\bfV^T$, $\bfP_\text{i} = \bfA\bfA^T - \bfV\bfV^T$, and $\bfP_\text{u} = \bfI - \bfA\bfA^T$.
  First, we note that the equations for the relevant and irrelevant input directions are identical to the ones presented in~\cref{the:general-linear-solution}, that is
  \begin{equation}
    \begin{aligned}
                      &  & \wbo(\wao\bfP_\text{r} + \wao\bfP_\text{i} + \wao\bfP_\text{u})\bfP_\text{r} & = \bfU\bfS\bfV^T\bfP_\text{r} \\
      \Leftrightarrow &  & \wbo\wao\bfP_\text{r}                                                        & = \bfU\bfS\bfV^T,
    \end{aligned}
  \end{equation}
  and
  \begin{equation}
    \begin{aligned}
                      &  & (\wbo\bfQ\bfQ^+ + \wbo(\bfI - \bfQ\bfQ^+))(\wao\bfP_\text{r} + \wao\bfP_\text{i} + \wao\bfP_\text{u})\bfP_\text{i} & = \bfU\bfS\bfV^T\bfP_\text{i}       \\
      \Leftrightarrow &  & \wbo(\bfI - \bfQ\bfQ^+)\wao\bfP_\text{i}                                                                           & = -\wbo\bfQ\bfQ^+\wao\bfP_\text{i}.
    \end{aligned}
  \end{equation}
  From which it follows that
  \begin{align}
    \wao\bfP_\text{r}       & = \bfQ\sqrt{\bfS}\bfV^T                     \\
    \wao\bfP_\text{i}       & = \bfGamma_1\bfP_\text{i}                   \\
    \wbo\bfQ\bfQ^+          & = \bfU\sqrt{\bfS}\bfQ^+                     \\
    \wbo(\bfI - \bfQ\bfQ^+) & = \bfPsi + \tilde{\bfZ}(\bfI - \bfH\bfH^+),
  \end{align}
  where $\bfQ \in \mathcal{R}^{N_h \times r}$ is an arbitrary full-column-rank matrix, $\bfGamma_1$, is an arbitrary matrix subject to the constraint $\rank(\bfQ\bfQ^+\bfGamma_1\bfP_\text{i}) \leq \rank((\bfI - \bfQ\bfQ^+)\bfGamma_1\bfP_\text{i})$, $\bfPsi = -\bfU\sqrt{\bfS}\bfQ^+\bfGamma_1\bfP_\text{i}\left[(\bfI - \bfQ\bfQ^+)\bfGamma_1\bfP_\text{i}\right]^+$, and $\tilde{\bfZ} \in \mathcal{R}^{N_o \times N_h}$ is an arbitrary matrix.
  However, \ac{lss} differ in how they constrain the solution manifold in the case of unobserved null directions.
  In particular, $\hat{\bfZ}(\bfI - \bfH\bfH^+)$ is constrained by
  \begin{equation}
    \begin{aligned}
                      &  & \wbo(\wao\bfP_\text{r} + \wao\bfP_\text{i} + \wao\bfP_\text{u})\bfP_\text{u}        & = \bfU\bfS\bfV^T\bfP_\text{u}                                                                                                            \\
      \Leftrightarrow &  & (\wbo\bfQ\bfQ^+ + \wbo(\bfI - \bfQ\bfQ^+))\wao\bfP_\text{u}                         & = \bfZero                                                                                                                                \\
      \Leftrightarrow &  & (\bfU\sqrt{\bfS}\bfQ^+ + \bfPsi + \tilde{\bfZ}(\bfI - \bfH\bfH^+))\wao\bfP_\text{u} & = \bfZero                                                                                                                                \\
      \Leftrightarrow &  & \tilde{\bfZ}(\bfI - \bfH\bfH^+)(\bfI - \bfH\bfH^+)\wao\bfP_\text{u}                 & = -(\bfU\sqrt{\bfS}\bfQ^+ + \bfPsi)\wao\bfP_\text{u}                                                                                     \\
      \Leftrightarrow &  & \tilde{\bfZ}(\bfI - \bfH\bfH^+)                                                     & = -(\bfU\sqrt{\bfS}\bfQ^+ + \bfPsi)\wao\bfP_\text{u}\left[(\bfI - \bfH\bfH^+)\wao\bfP_\text{u}\right]^+                                  \\
                      &  &                                                                                     & \qquad\ + \hat{\bfZ}[\bfI - (\bfI - \bfH\bfH^+)\wao\bfP_\text{u}((\bfI - \bfH\bfH^+)\wao\bfP_\text{u})^+](\bfI - \bfH\bfH^+)             \\
      \Leftrightarrow &  & \tilde{\bfZ}(\bfI - \bfH\bfH^+)                                                     & = -(\bfU\sqrt{\bfS}\bfQ^+ + \bfPsi)\wao\bfP_\text{u}\left[(\bfI - \bfH\bfH^+)\wao\bfP_\text{u}\right]^+                                  \\
                      &  &                                                                                     & \qquad\ + \hat{\bfZ}[\bfH\bfH^+ + (\bfI - \wao\wao^+)](\bfI - \bfH\bfH^+)                                                                \\
      \Leftrightarrow &  & \tilde{\bfZ}(\bfI - \bfH\bfH^+)                                                     & = -(\bfU\sqrt{\bfS}\bfQ^+ + \bfPsi)\wao\bfP_\text{u}\left[(\bfI - \bfH\bfH^+)\wao\bfP_\text{u}\right]^+ + \hat{\bfZ}(\bfI - \wao\wao^+),
    \end{aligned}
  \end{equation}
  where $\hat{\bfZ} \in \mathcal{R}^{N_o\times N_h}$ is an arbitrary matrix.
  Again, for the pseudo-inverse to exist we must have
  \begin{equation}
    \image(\bfH\bfH^+\wao\bfP_\text{u}) \subseteq \image((\bfI - \bfH\bfH^+)\wao\bfP_\text{u}).
  \end{equation}
  As both sides live in the same subspace $\bfP_\text{u}$, this is equivalent to
  \begin{equation}
    \rank(\bfH\bfH^+\wao\bfP_\text{u}) \leq \rank((\bfI - \bfH\bfH^+)\wao\bfP_\text{u}).
  \end{equation}
  In other words, at least as many unobserved dimensions that are projected into the occupied hidden-layer dimensions have to be projected into the unoccupied hidden-layer dimensions in order for a correction to exist.
  In summary, we then have
  \begin{equation}
    \begin{aligned}
      \wao & = \wao\bfP_\text{r} + \wao\bfP_\text{i} + \wao\bfP_\text{u}                 \\
           & = \bfQ\sqrt{\bfS}\bfV^T + \bfGamma_1\bfP_\text{i} + \bfGamma_2\bfP_\text{u}
    \end{aligned}
  \end{equation}
  where $\bfGamma_1$, $\bfGamma_2 \in \mathcal{R}^{N_h\times N_i}$ can be chose freely up to the constraints that $\rank(\bfQ\bfQ^+\bfGamma_1\bfP_\text{i}) \leq \rank((\bfI - \bfQ\bfQ^+)\bfGamma_1\bfP_\text{i})$, and $\rank(\bfH\bfH^+\wao\bfP_\text{u}) \leq \rank((\bfI - \bfH\bfH^+)\wao\bfP_\text{u})$, and
  \begin{equation}
    \begin{aligned}
      \wbo & = \wbo\bfQ\bfQ^+ + \wbo(\bfI - \bfQ\bfQ^+)                                 \\
           & = \bfU\sqrt{\bfS}\bfQ^+ + \bfPsi + \bfPhi + \bfGamma_3(\bfI - \wao\wao^+),
    \end{aligned}
  \end{equation}
  where $\bfGamma_3 \in \mathcal{R}^{N_o\times N_h}$ is an arbitrary matrix, $\bfPsi = -\bfU\sqrt{\bfS}\bfQ^+\wao\bfP_\text{i}\left[(\bfI - \bfQ\bfQ^+)\wao\bfP_\text{i}\right]^+$, and $\bfPhi = -(\bfU\sqrt{\bfS}\bfQ^+ + \bfPsi)\wao\bfP_\text{u}\left[(\bfI - \bfH\bfH^+)\wao\bfP_\text{u}\right]^+$.
\end{proof}

\begin{proof}[Proof of \cref{the:minimum-representation-norm-solution}]
  We use the method of Lagrange multipliers to minimise
  \begin{equation}
    \argmin_{\wa, \wb}||\wa\bfX||_F^2 + ||\wb||_F^2
  \end{equation}
  under the constraint that
  \begin{equation}
    \wb\wa\sigxx = \sigyx.
  \end{equation}
  Then the Lagrangian is
  \begin{equation}
    \begin{aligned}
      \mathcal{L} =  \ ||\wa\bfX||_F^2 + ||\wb||_F^2
      + \tr\left(\bfLambda^T\left(\wb\wa\sigxx - \sigyx\right)\right)
    \end{aligned}
  \end{equation}
  with gradients
  \begin{equation} \label{app:eq:minimum-weight-solution-partial-w1}
    \begin{aligned}
                      &  & \frac{\partial \mathcal{L}}{\partial \wa} & = 2\wa\bfX\bfX^T + \wb^T\bfLambda\sigxx = 0 \\
      \Leftrightarrow &  & \wa\bfX\bfX^T                             & = -\frac{1}{2}\wb^T\bfLambda\sigxx,
    \end{aligned}
  \end{equation}
  \begin{equation} \label{app:eq:minimum-weight-solution-partial-w2}
    \begin{aligned}
                      &  & \frac{\partial \mathcal{L}}{\partial \wb} & = 2\wb + \bfLambda\sigxx\wa^T = 0   \\
      \Leftrightarrow &  & \wb                                       & = -\frac{1}{2}\bfLambda\sigxx\wa^T,
    \end{aligned}
  \end{equation}
  and
  \begin{equation} \label{app:eq:minimum-weight-solution-partial-lambda}
    \frac{\partial \mathcal{L}}{\partial \bfLambda} = \wb\wa\sigxx - \sigyx = 0.
  \end{equation}
  Starting from \cref{app:eq:minimum-weight-solution-partial-w2}, we then have
  \begin{equation}
    \begin{aligned}
                      &  & \wb      & = -\frac{1}{2}\bfLambda\sigxx\wa^T      \\
      \Leftrightarrow &  & \wb^T\wb & = -\frac{1}{2}\wb^T\bfLambda\sigxx\wa^T \\
      \Leftrightarrow &  & \wb^T\wb & = \wa\bfX\bfX^T\wa^T,
    \end{aligned}
  \end{equation}
  where in the last line we substituted \cref{app:eq:minimum-weight-solution-partial-w1}. Let
  \begin{equation}
    \svd(\wb) = \bfA\bfB\bfC^T\ \text{and}\ \svd(\wa\bfX) = \bfD\bfE\bfF^T
  \end{equation}
  be the \ac{csvd} of the network weights, then
  \begin{equation}
    \begin{aligned}
                      &  & \wb^T\wb                     & = \wa\bfX\bfX^T\wa^T           \\
      \Leftrightarrow &  & \bfC\bfB\bfA^T\bfA\bfB\bfC^T & = \bfD\bfE\bfF^T\bfF\bfE\bfD^T \\
      \Leftrightarrow &  & \bfC\bfB^2\bfC^T             & = \bfD\bfE^2\bfD^T.
    \end{aligned}
  \end{equation}
  Since $\bfC$ and $\bfD$ are (semi-)orthonormal matrices and $\bfB$ and $\bfE$ are diagonal matrices with strictly positive entries it follows that
  \begin{equation}
    \bfC = \bfD\ \ \text{and}\ \ \bfB^2 = \bfE^2 \Leftrightarrow \bfB = \bfE.
  \end{equation}
  In the following we denote $\bfC$ and $\bfD$ as $\bfR$, and $\bfB$ and $\bfE$ as $\bfG$ and write
  \begin{equation}
    \wa\bfX = \bfR\bfG\bfF^T\ \text{and}\ \wb = \bfA\bfG\bfR^T,
  \end{equation}
  where $\bfR$ is an arbitrary (semi-)orthogonal matrix. Finally, let
  \begin{equation}
    \svd(\bfX) = \bfJ\bfK\bfL^T,
  \end{equation}
  and
  \begin{equation}
    \begin{aligned}
      \svd(\bfY\bfX^T\bfX^{+T}) & = \svd(\bfY\bfL\bfK\bfJ^T\bfJ\bfK^{-1}\bfL^T) \\
                                & = \svd(\bfY\bfL\bfL^T)                        \\
                                & = \bfM\bfN\bfO^T.
    \end{aligned}
  \end{equation}
  Then starting from \cref{app:eq:minimum-weight-solution-partial-lambda} we get
  \begin{equation}
    \begin{aligned}
                      &  & \wb\wa\sigxx                 & = \sigyx             \\
      \Leftrightarrow &  & \wb\wa\bfX\bfX^T             & = \bfY\bfX^T         \\
      \Leftrightarrow &  & \wb\wa\bfJ\bfK^2\bfJ^T       & = \bfY\bfL\bfK\bfJ^T \\
      \Leftrightarrow &  & \wb\wa\bfJ\bfK\bfL^T         & = \bfY\bfL\bfL^T     \\
      \Leftrightarrow &  & \bfA\bfG\bfR^T\bfR\bfG\bfF^T & = \bfM\bfN\bfO^T     \\
      \Leftrightarrow &  & \bfA\bfG^2\bfF^T             & = \bfM\bfN\bfO^T,
    \end{aligned}
  \end{equation}
  from which it follows that
  \begin{equation}
    \bfA = \bfM,\ \bfF^T = \bfO^T,\ \text{and}\ \bfG^2 = \bfN \Leftrightarrow \bfG = \sqrt{\bfN}
  \end{equation}
  and therefore that
  \begin{equation}
    \wb = \bfM\sqrt{\bfN}\bfR^T\ \text{and}\ \wa\bfX = \bfR\sqrt{\bfN}\bfO^T.
  \end{equation}
  Finally, we rewrite
  \begin{equation}
    \begin{aligned}
      \wa\bfX           & = \bfR\sqrt{\bfN}\bfO^T                                                                     \\
      \wa\bfJ\bfK\bfL^T & = \bfR\sqrt{\bfN}\bfO^T                                                                     \\
      \wa               & = \bfR\sqrt{\bfN}\bfO^T\bfL\bfK^{-1}\bfJ^T + \bfZ\left(\bfI - \bfJ\bfJ^T\right)             \\
      \wa               & = \bfR\sqrt{\bfN}\bfO^T\bfX^+ + \underbrace{\bfZ\left(\bfI - \bfJ\bfJ^T\right)}_{\bfGamma},
    \end{aligned}
  \end{equation}
  where $\bfZ$ is an arbitrary matrix.
\end{proof}

\begin{proof}[Proof of \cref{the:minimum-weight-norm-solution}]
  We use the method of Lagrange multipliers to minimise
  \begin{equation}
    \argmin_{\wa, \wb}||\wa||_F^2 + ||\wb||_F^2
  \end{equation}
  under the constraint that
  \begin{equation}
    \wb\wa\sigxx = \sigyx.
  \end{equation}
  Then the Lagrangian is
  \begin{equation}
    \begin{aligned}
      \mathcal{L} = & \ ||\wa||_F^2 + ||\wb||_F^2                                    + \tr\left(\bfLambda^T\left(\wb\wa\sigxx - \sigyx\right)\right)
    \end{aligned}
  \end{equation}
  with gradients
  \begin{equation} \label{app:eq:minimum-norm-solution-partial-w1}
    \begin{aligned}
       &  & \frac{\partial \mathcal{L}}{\partial \wa} = 2\wa + \wb^T\bfLambda\sigxx & = 0                                 \\
       &  & \Leftrightarrow \wa                                                     & = -\frac{1}{2}\wb^T\bfLambda\sigxx,
    \end{aligned}
  \end{equation}
  \begin{equation} \label{app:eq:minimum-norm-solution-partial-w2}
    \begin{aligned}
       &  & \frac{\partial \mathcal{L}}{\partial \wb} = 2\wb + \bfLambda\sigxx\wa^T & = 0                                 \\
       &  & \Leftrightarrow \wb                                                     & = -\frac{1}{2}\bfLambda\sigxx\wa^T,
    \end{aligned}
  \end{equation}
  and
  \begin{equation} \label{app:eq:minimum-norm-solution-partial-lambda}
    \frac{\partial \mathcal{L}}{\partial \bfLambda} = \wb\wa\sigxx - \sigyx = 0.
  \end{equation}
  Starting from \cref{app:eq:minimum-norm-solution-partial-w2} we then have
  \begin{equation}
    \begin{aligned}
                      &  & \wb      & = -\frac{1}{2}\bfLambda\sigxx\wa^T      \\
      \Leftrightarrow &  & \wb^T\wb & = -\frac{1}{2}\wb^T\bfLambda\sigxx\wa^T \\
      \Leftrightarrow &  & \wb^T\wb & = \wa\wa^T,
    \end{aligned}
  \end{equation}
  where in the last line we substituted \cref{app:eq:minimum-norm-solution-partial-w1}. Let
  \begin{equation}
    \svd(\wb) = \bfA\bfB\bfC^T\ \text{and}\ \svd(\wa) = \bfD\bfE\bfF^T
  \end{equation}
  be the \ac{csvd} of the network weights, then
  \begin{equation}
    \begin{aligned}
                      &  & \wb^T\wb                     & = \wa\wa^T                     \\
      \Leftrightarrow &  & \bfC\bfB\bfA^T\bfA\bfB\bfC^T & = \bfD\bfE\bfF^T\bfF\bfE\bfD^T \\
      \Leftrightarrow &  & \bfC\bfB^2\bfC^T             & = \bfD\bfE^2\bfD^T.
    \end{aligned}
  \end{equation}
  Since $\bfC$ and $\bfD$ are (semi-)orthonormal matrices and $\bfB$ and $\bfE$ are diagonal matrices with strictly positive entries it follows that
  \begin{equation}
    \bfC = \bfD\ \text{and}\ \bfB^2 = \bfE^2 \Leftrightarrow \bfB = \bfE.
  \end{equation}
  It further follows that $\wa$ and $\wb$ must be of identical rank.
  In the following we denote $\bfC$ and $\bfD$ as $\bfR$, and $\bfB$ and $\bfE$ as $\bfG$ and write
  \begin{equation}
    \wa = \bfR\bfG\bfF^T\ \text{and}\ \wb = \bfA\bfG\bfR^T,
  \end{equation}
  where $\bfR$ is an arbitrary (semi-)orthogonal matrix. Finally, let
  \begin{equation}
    \svd(\sigxx) = \bfJ\bfK\bfJ^T
  \end{equation}
  and
  \begin{equation}
    \svd(\sigyx\sigxx^+) = \bfU\bfS\bfV^T
  \end{equation}
  denote the \ac{csvd} of the input covariance matrix and a least-squares solution. Then, starting from \cref{app:eq:minimum-norm-solution-partial-lambda} and using \cref{app:eq:minimum-norm-solution-partial-w1}, we get
  \begin{equation}
    \begin{aligned}
                      &  & \wb\wa\sigxx                                  & = \sigyx                    \\
      \Leftrightarrow &  & -\frac{1}{2}\wb\wb^T\bfLambda\sigxx\sigxx     & = \sigyx                    \\
      \Leftrightarrow &  & -\frac{1}{2}\wb\wb^T\bfLambda\bfJ\bfK^2\bfJ^T & = \sigyx                    \\
      \Leftrightarrow &  & -\frac{1}{2}\wb\wb^T\bfLambda\bfJ\bfK\bfJ^T   & = \sigyx\bfJ\bfK^{-1}\bfJ^T \\
      \Leftrightarrow &  & -\frac{1}{2}\wb\wb^T\bfLambda\sigxx           & = \sigyx\sigxx^+            \\
      \Leftrightarrow &  & \wb\wa                                        & = \bfU\bfS\bfV^T            \\
      \Leftrightarrow &  & \bfA\bfG\bfR^T\bfR\bfG\bfF^T                  & = \bfU\bfS\bfV^T            \\
      \Leftrightarrow &  & \bfA\bfG^2\bfF^T                              & = \bfU\bfS\bfV^T,
    \end{aligned}
  \end{equation}
  from which it follows that
  \begin{equation}
    \bfA = \bfU\text{,}\ \bfF^T = \bfV^T\,\ \text{and}\ \bfG^2 = \bfS \Leftrightarrow \bfG = \sqrt{\bfS}
  \end{equation}
  and therefore that
  \begin{equation}
    \wb = \bfU\sqrt{\bfS}\bfR^T\ \text{and}\ \wa = \bfR\sqrt{\bfS}\bfV^T.
  \end{equation}
\end{proof}

\section{Hidden-layer representations} \label{app:sec:hidden-layer-representations}
\begin{proposition} \label{app:prop:gls-lss-identical-h}
  Let $\svd(\bfX) = \bfA\bfB\bfC^T$, then \ac{gls} are identical to \ac{lss} when operating on inputs $\bfX$ as any \ac{gls} (\cref{app:eq:gls-rewritten}) can be written as
  \begin{equation}
    \begin{aligned}
                      &  & \wbo\wao\bfX & = \sigyx(\sigxx)^+\bfX + \bfZ(\bfI - \bfA\bfA^T)\bfX           \\
      \Leftrightarrow &  & \wbo\wao\bfX & = \sigyx(\sigxx)^+\bfX + \bfZ(\bfI - \bfA\bfA^T)\bfA\bfB\bfC^T \\
      \Leftrightarrow &  & \wbo\wao\bfX & = \sigyx(\sigxx)^+\bfX + \bfZ(\bfA\bfB\bfC^T - \bfA\bfB\bfC^T) \\
      \Leftrightarrow &  & \wbo\wao\bfX & = \sigyx(\sigxx)^+\bfX.
    \end{aligned}
  \end{equation}
  As a consequence, hidden-layer representations of the training data in \ac{gls} and \ac{lss} can be analysed by means of studying the \ac{lss} only.
\end{proposition}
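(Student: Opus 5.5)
The statement (\cref{app:prop:gls-lss-identical-h}) claims that every \ac{gls} acts on the training inputs exactly as the \ac{lss} map $\sigyx(\sigxx)^+$. I would start from the rewritten form of the \ac{gls} in \cref{app:eq:gls-rewritten}, namely $\wbo\wao = \sigyx\sigxx^+ + \bfZ(\bfI - \bfA\bfA^T)$ with arbitrary $\bfZ$, and right-multiply both sides by $\bfX$. The task then reduces to showing that the perturbation term vanishes on the data, $\bfZ\bfP_\text{u}\bfX = \bfZero$, for every $\bfZ$.

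\textbf{Step 1: the perturbation annihilates the data.} Using $\svd(\bfX) = \bfA\bfB\bfC^T$ with $\bfA$ having orthonormal columns, we have $\bfA^T\bfA = \bfI_{\rank(\bfX)}$. Hence $\bfA\bfA^T\bfX = \bfA\bfA^T\bfA\bfB\bfC^T = \bfA\bfB\bfC^T = \bfX$. It follows that $(\bfI - \bfA\bfA^T)\bfX = \bfZero$, and so $\bfZ\bfP_\text{u}\bfX = \bfZero$ for every $\bfZ$. This is the chain displayed in the statement.

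\textbf{Step 2: justify the rewritten form.} The only nontrivial ingredient is the equivalence $\wbo\wao\sigxx = \sigyx \Leftrightarrow \wbo\wao = \sigyx\sigxx^+ + \bfZ(\bfI - \sigxx\sigxx^+)$, together with $\sigxx\sigxx^+ = \bfA\bfA^T$. For the latter, I would use $\sigxx = \frac1P\bfA\bfB^2\bfA^T$, so that $\sigxx^+ = P\bfA\bfB^{-2}\bfA^T$ and the product is $\bfA\bfA^T$. For the former, the general solution of the linear system $\bfW\sigxx = \sigyx$ is the standard pseudoinverse formula. It requires consistency, $\sigyx\sigxx^+\sigxx = \sigyx$, which holds because the rows of $\sigyx = \frac1P\bfY\bfX^T$ lie in $\image(\bfX) = \image(\bfA)$. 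This consistency check is the one step I would verify carefully.

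\textbf{Step 3: conclude.} The \ac{lss} satisfies $\wbo\wao = \sigyx\sigxx^+$ by \cref{the:least-squares-solution}, i.e.\ it is the case $\bfZ = \bfZero$. Therefore the outputs on the training data, and hence all training-set predictions, coincide for \ac{gls} and \ac{lss}. For the consequence about hidden representations, I would additionally note that the parametrisations in \cref{the:general-linear-solution,the:least-squares-solution} share the same $\wao$. Since $\bfP_\text{u}\bfX = \bfZero$, the term $\bfGamma_2\bfP_\text{u}$ drops out of $\bfH = \wao\bfX$, giving $\bfH = \bfQ\sqrt{\bfS}\bfV^T\bfX + \bfGamma_1\bfP_\text{i}\bfX$ in both cases. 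The remaining differences, $\bfPhi$ and the $\bfGamma_3$ term, affect only $\wbo$. So the sets of attainable $\bfH$ coincide, and one may study \ac{lss} only.
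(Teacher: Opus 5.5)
Your proposal is correct and follows the paper's argument. You right-multiply the rewritten GLS by $\bfX$, substitute $\bfX = \bfA\bfB\bfC^T$, and use $\bfA^T\bfA = \bfI$ to annihilate $\bfZ(\bfI - \bfA\bfA^T)\bfX$; your consistency check $\sigyx\sigxx^+\sigxx = \sigyx$ and the explicit $\bfH$ computation fill in details the paper leaves implicit. One small nit: the LSS parametrisation adds a rank constraint on $\wao\bfP_\text{u}$, so the two families of $\wao$ are not literally identical, but $\bfGamma_2 = \bfZero$ is admissible in both and $\bfP_\text{u}\bfX = \bfZero$, so the attainable hidden representations still coincide.
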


\begin{proof}[Proof of \cref{cor:lss-rsm}]
  Following ~\cref{app:prop:gls-lss-identical-h} and given the parametrisation of $\wao$ as derived in ~\cref{the:general-linear-solution,the:least-squares-solution}, we derive
  \begin{equation}
    \begin{aligned}
      \rsm & = \bfX^T\wao^T\wao\bfX                                                                                                                                                                                          \\
           & = \bfX^T(\bfV\sqrt{\bfS}\bfQ^T + \bfP_\text{i}^T\bfGamma_1^T + \bfP_\text{u}^T\bfGamma_2^T)(\bfQ\sqrt{\bfS}\bfV^T + \bfGamma_1\bfP_\text{i} + \bfGamma_2\bfP_\text{u})\bfX                                      \\
           & = \bfX^T(\bfV\sqrt{\bfS}\bfQ^T\bfQ\sqrt{\bfS}\bfV^T + \bfV\sqrt{\bfS}\bfQ^T\bfGamma_1\bfP_\text{i} + \bfP_\text{i}^T\bfGamma_1^T\bfQ\sqrt{\bfS}\bfV^T + \bfP_\text{i}^T\bfGamma_1^T\bfGamma_1\bfP_\text{i})\bfX
    \end{aligned}
  \end{equation}
\end{proof}
\begin{proof}[Proof of \cref{corr:mrns-rsm}]
  Let
  \begin{equation}
    \svd(\bfX) = \bfA\bfB\bfC^T,
  \end{equation}
  then given the subset of $\wao$ as defined in \cref{the:minimum-representation-norm-solution} we derive
  \begin{equation}
    \begin{aligned}
      \bfX^T(\bfX^{+T}\bfO\sqrt{\bfN}\bfR^T + \bfGamma^T)(\bfR\sqrt{\bfN}\bfO^T\bfX^+ + \bfGamma)\bfX
      =\  & \bfX^T\bfX^{+T}\bfO\sqrt{\bfN}\bfR^T\bfR\sqrt{\bfN}\bfO^T\bfX^+\bfX              \\
      =\  & \bfC\bfB\bfA^T\bfA\bfB^{-1}\bfC^T\bfO\bfN\bfO^T\bfC\bfB^{-1}\bfA^T\bfA\bfB\bfC^T \\
      =\  & \bfC\bfC^T\bfO\bfN\bfO^T\bfC\bfC^T                                               \\
      =\  & \bfO\bfN\bfO^T~,
    \end{aligned}
  \end{equation}
  where in the last step we used that
  \begin{equation}
    \bfY\bfX^T\bfX^{+T} = \bfY\bfC\bfC^T.
  \end{equation}
\end{proof}
\begin{proof}[Proof of \cref{corr:mwns-rsm}]
  Given the subset of $\wao$ as defined in \cref{the:minimum-weight-norm-solution} we derive
  \begin{equation}
    \begin{aligned}
      \bfX^T\bfV\sqrt{\bfS}\bfR^T\bfR\sqrt{\bfS}\bfV^T\bfX
      =\  & \bfX^T\bfV\bfS\bfV^T\bfX~.
    \end{aligned}
  \end{equation}
\end{proof}

\section{Secondary error} \label{app:sec:secondary-error}
\begin{proof} [Proof of \cref{the:secondary-error}]
  Let the secondary error be denoted by
  \begin{equation}
    \mathcal{G}_\text{MSE} = \frac{1}{2Q}\sum_{n=1}^Q||\wb\wa\bfxt_n - \bfyt_n||_2^2,
  \end{equation}
  where input vectors $\bfxt_n$ and corresponding target values $\bfyt_n$ come from a secondary dataset $\tilde{\mathcal{D}} = \{(\bfxt_n, \bfyt_n)\}_{n=1}^Q$ with a total of $Q$ input-output pairs.
  We want to find the point on the solution manifold of the primary task such that $\mathcal{G}_\text{MSE}$ is minimised.
  We therefore substitute the \ac{gls} (see~\cref{app:eq:gls-rewritten}) into the secondary error
  \begin{equation}
    \mathcal{G}_\text{MSE} = \frac{1}{2Q}\sum_{n=1}^Q\left|\left|\left(\sigyx\sigxx^+ + \bfZ\bfP_\text{u}\right)\bfxt_n - \bfyt_n\right|\right|_2^2,
  \end{equation}
  where $\bfZ \in \mathcal{R}^{N_o \times N_i}$ is an arbitrary matrix and $\bfP_\text{u} = \bfI - \bfA\bfA^T$.
  Thus $\bfZ\bfP_\text{u}$, which describes all possible transformations of the network function that lie in the unoccupied input space of the primary task is the only degree of freedom.
  Thus, we have to find $\tilde{\bfZ}$, which minimises the error.
  To this end, we define
  \begin{equation}
    \svd(\bfP_\text{u}\bfXt) = \bfD\bfE\bfF^T,
  \end{equation}
  and continue with
  \begin{equation}
    \begin{aligned}
                      &  & \frac{\partial \mathcal{G}_\text{MSE}}{\partial \tilde{\bfZ}} = \frac{1}{Q}\sum_{n=1}^Q\left(\left(\sigyx\sigxx^+ + \tilde{\bfZ}\bfP_\text{u}\right)\bfxt_n - \bfyt_n\right)\bfxt_n^T\bfP_\text{u} & = \bfZero                                                                                                          \\
      \Leftrightarrow &  & \left(\left(\sigyx\sigxx^+ + \tilde{\bfZ}\bfP_\text{u}\right)\sigxxt - \sigyxt\right)\bfP_\text{u}                                                                                                 & = \bfZero                                                                                                          \\
      \Leftrightarrow &  & \tilde{\bfZ}\bfP_\text{u}\sigxxt\bfP_\text{u}                                                                                                                                                      & = \left(\sigyxt - \sigyx\sigxx^+\sigxxt\right)\bfP_\text{u}                                                        \\
      \Leftrightarrow &  & \tilde{\bfZ}\bfP_\text{u}\bfXt\bfXt^T\bfP_\text{u}                                                                                                                                                 & = \left(\bfYt - \sigyx\sigxx^+\bfXt\right)\bfXt^T\bfP_\text{u}                                                     \\
      \Leftrightarrow &  & \tilde{\bfZ}\bfD\bfE^2\bfD^T                                                                                                                                                                       & = \left(\bfYt - \sigyx\sigxx^+\bfXt\right)\bfF\bfE\bfD^T                                                           \\
      \Leftrightarrow &  & \tilde{\bfZ}                                                                                                                                                                                       & = \left(\bfYt - \sigyx\sigxx^+\bfXt\right)\bfF\bfE^{-1}\bfD^T + \tilde{\bfGamma}(\bfI - \bfD\bfD^T)                \\
      \Leftrightarrow &  & \tilde{\bfZ}                                                                                                                                                                                       & = \left(\bfYt - \sigyx\sigxx^+\bfXt\right)\left(\bfP_\text{u}\bfXt\right)^+ + \tilde{\bfGamma}(\bfI - \bfD\bfD^T),
    \end{aligned}
  \end{equation}
  where $\tilde{\bfGamma} \in \mathcal{R}^{N_o \times N_i}$ is an arbitrary matrix.
  Note that $\tilde{\bfGamma}(\bfI - \bfD\bfD^T)\bfP_\text{u}$ describes all possible transformations that lie in the null spaces of both $\bfX$ and $\bfXt$, and therefore have no effect on the processing of either the primary or secondary data.
\end{proof}

\section{Input and parameter noise} \label{app:sec:noise-proofs}
\begin{proof}[Proof of \cref{the:parameter-and-input-noise}]
  Let $\bfxi_{\bfx_n}$ denote vectors and $\bfXi_1$ and $\bfXi_2$ denote matrices whose entries are \ac{iid} and drawn from a zero-centred random distribution with variance $\sigma^2_{\bfx}$, $\sigma^2_{\wao}$ and $\sigma^2_{\wbo}$ respectively. Then, we can write the expected mean-squared error under additive input noise and parameter noise as
  \begin{equation}
    \begin{aligned}
       & \bigg\langle\frac{1}{2P}\sum_{n=1}^P||\left(\wbo + \bfXi_2\right)\left(\wao + \bfXi_1\right)\left(\bfx_n + \bfxi_{\bfx_n}\right) - \bfy_n||_2^2\bigg\rangle \\
       & =\frac{1}{2P}\sum_{n=1}^P\bigg\langle||\underbrace{\wbo\wao\bfx_n - \bfy_n}_a + \underbrace{\big(\wbo\wao + \wb\bfXi_1 +}_b
      \underbrace{\bfXi_2\wa +\bfXi_2\bfXi_1\big) \bfxi_{\bfx_n} + \big(\wb\bfXi_1 + \bfXi_2\wa +}_b
      \underbrace{\bfXi_2\bfXi_2\big)\bfx_n}_b||_2^2\bigg\rangle                                                                                                     \\
       & =\frac{1}{2P}\sum_{n=1}^P \bfa^T\bfa + \frac{1}{2P}\sum_{n=1}^P2 \bfa^T\big\langle\bfb \big\rangle + \frac{1}{2P}\sum_{n=1}^P \la \bfb^T\bfb \ra.
    \end{aligned}
  \end{equation}
  In the following we solve each of the three summands independently.
  Using the definition of a linear solution (\cref{def:general-linear-solution}), we derive
  \begin{equation}
    \begin{aligned}
      \label{app:eq:noise-ata}
      \frac{1}{2P}\sum_{n=1}^P \bfa^T\bfa =\  & \frac{1}{2P}\sum_{n=1}^P \bfx_n^T\wao^T\wbo^T\wbo\wao\bfx_n - \frac{1}{2P}\sum_{n=1}^P \bfx_n^T\wao^T\wbo^T\bfy_n
      - \frac{1}{2P}\sum_{n=1}^P \bfy_n^T\wbo\wao\bfx_n + \frac{1}{2P}\sum_{n=1}^P \bfy_n^T\bfy_n                                                                 \\
      =\                                      & \frac{1}{2}\tr(\wao^T\wbo^T\wbo\wao\sigxx) - \tr(\wao^T\wbo^T\sigyx)
      + \frac{1}{2}\tr(\sigyy)                                                                                                                                    \\
      =\                                      & -\frac{1}{2}\tr(\wao^T\wbo^T\wbo\wao\sigxx(\sigxx)^+\sigxx^T) + \frac{1}{2}\tr(\sigyy)                            \\
      =\                                      & -\frac{1}{2}\tr(\sigyx(\sigxx)^+\sigyx^T) + \frac{1}{2}\tr(\sigyy)                                                \\
      =\                                      & c,
    \end{aligned}
  \end{equation}
  which is a noise-independent constant that only depends on the statistics of the training data.
  Next, using the assumption that the noise is zero-centred, we derive that
  \begin{equation} \label{app:eq:noise-atb}
    \frac{1}{2P}\sum_{n=1}^P2 \bfa^T\big\langle\bfb \big\rangle = 0
  \end{equation}
  using that
  \begin{align}
    \big\langle\bfb\big\rangle  = \Big\langle\big(\wbo\wao + \wbo\bfXi_1 + \bfXi_2\wao +\bfXi_2\bfXi_1\big) \la\bfxi_{\bfx_n}\ra +
    \big(\wbo\bfXi_1 + \bfXi_2\wao + \bfXi_2\bfXi_2\big)\bfx_n\Big\rangle
    = 0~.
  \end{align}
  Which leaves us with the third term, which we can write as
  \begin{equation} \label{app:eq:btb-0}
    \frac{1}{2P}\sum_{n=1}^P \la \bfb^T\bfb \ra = \frac{1}{2P}\sum_{n=1}^P \Big(\big\langle\bfxi_{\bfx_n}^T(...)\bfxi_{\bfx_n}^{}\big\rangle + \big\langle\bfxi_{\bfx_n}^T(...)\bfx_n^{}\big\rangle + \big\langle\bfx_n^T(...)\bfxi_{\bfx_n}^{}\big\rangle + \bfx_n^T\big\langle(...)\big\rangle\bfx_n^{}\Big).
  \end{equation}
  In the following, we solve each of the four summands independently.
  We begin with
  \begin{equation}
    \begin{aligned}
      \bigg\langle\bfxi_{\bfx_n}^T\bigg(\wao^T\wbo^T\wbo^{}\wao^{} + \wao^T\wbo^T\wbo^{}\Big\langle\bfXi_1^{}\Big\rangle                                      & + \wao^T\wbo^T\Big\langle\bfXi_2\Big\rangle\wao + \wao^T\wbo^T\Big\langle\bfXi_2\Big\rangle\Big\langle\bfXi_1\Big\rangle                                              \\
      \ +\ \Big\langle\bfXi_1^T\Big\rangle\wbo^T\wbo\wao + \Big\langle\bfXi_1^T\wbo^T\wbo\bfXi_1\Big\rangle                                                   & + \Big\langle\bfXi_1^T\Big\rangle\wbo^T\Big\langle\bfXi_2\Big\rangle\wao + \Big\langle\bfXi_1^T\wbo^T\Big\langle\bfXi_2\Big\rangle\bfXi_1\Big\rangle                  \\
      \ +\ \wao^T\Big\langle\bfXi_2^T\Big\rangle\wbo\wao + \wao^T\Big\langle\bfXi_2^T\big\rangle\wbo\Big\langle\bfXi_1\Big\rangle                             & + \wao^T\Big\langle\bfXi_2^T\bfXi_2\Big\rangle\wao + \wao^T\Big\langle\bfXi_2^T\bfXi_2\Big\rangle\Big\langle\bfXi_1\Big\rangle                                        \\
      \ +\ \Big\langle\bfXi_1^T\Big\rangle\Big\langle\bfXi_2^T\Big\rangle\wbo\wao + \Big\langle\bfXi_1^T\Big\langle\bfXi_2^T\Big\rangle\wbo\bfXi_1\Big\rangle & + \Big\langle\bfXi_1^T\Big\rangle\Big\langle\bfXi_2^T\bfXi_2\Big\rangle\wao + \Big\langle\bfXi_1^T\bfXi_2^T\bfXi_2\bfXi_1\Big\rangle\bigg)\bfxi_{\bfx_n}\bigg\rangle,
    \end{aligned}
  \end{equation}
  where we used that noise is \ac{iid}. Using that noise is zero-centred,
  \erin{needs reindexing}
  we note that summands 2-5, 7-10, and 12-15 resolve to $0$,
  which leaves us with summands 1, 6, 11, and 16 which we solve using \cref{app:eq:random-vector-inner-and-outer-product,app:eq:xi1T-xi1,app:eq:xi1-xi1T,app:eq:xi1-b-xi1}, as
  \begin{equation}
    \begin{aligned}
      \Big\langle\bfxi_{\bfx_n}^T\wao^T\wbo^T\wbo^{}\wao^{}\bfxi_{\bfx_n}\Big\rangle & = \tr\left(\wao^T\wbo^T\wbo^{}\wao^{}\la\bfxi_{\bfx_n}\bfxi_{\bfx_n}^T\ra\right) \\
                                                                                     & = \sigma^2_{\bfx}||\wbo\wao||_F^2,
    \end{aligned}
  \end{equation}

  \begin{equation}
    \begin{aligned}
      \Big\langle\bfxi_{\bfx_n}^T\la\bfXi_1^T\wbo^T\wbo\bfXi_1\ra\bfxi_{\bfx_n}\Big\rangle & = \tr\left(\la\bfXi_1^T\wbo^T\wbo\bfXi_1\ra \la\bfxi_{\bfx_n}\bfxi_{\bfx_n}^T\ra\right) \\
                                                                                           & = \sigma^2_{\bfx} \tr\left(\wbo^T\wbo\la\bfXi_1\bfXi_1^T\ra\right)                      \\
                                                                                           & = \sigma^2_{\bfx}\sigma^2_1N_i||\wbo||_F^2,
    \end{aligned}
  \end{equation}

  \begin{equation}
    \begin{aligned}
      \Big\langle\bfxi_{\bfx_n}^T\wao^T\la\bfXi_2^T\bfXi_2\ra\wao\bfxi_{\bfx_n}\Big\rangle & =  \tr\left(\wao^T\la\bfXi_2^T\bfXi_2\ra\wao\la\bfxi_{\bfx_n}\bfxi_{\bfx_n}^T\ra\right) \\
                                                                                           & = \sigma^2_{\bfx} \tr\left(\wao\wao^T\la\bfXi_2^T\bfXi_2\ra\right)                      \\
                                                                                           & = \sigma^2_{\bfx}\sigma^2_2N_o||\wao||_F^2,
    \end{aligned}
  \end{equation}
  and
  \begin{equation}
    \begin{aligned}
      \la\bfxi_{\bfx_n}^T\bfXi_1^T\bfXi_2^T\bfXi_2\bfXi_1\bfxi_{\bfx_n}\ra & = \tr\left(\la\bfXi_1^T\bfXi_2^T\bfXi_2\bfXi_1\ra\la\bfxi_{\bfx_n}\bfxi_{\bfx_n}^T\ra\right) \\
                                                                           & = \sigma^2_{\bfx}\tr\left(\la\bfXi_1\bfXi_1^T\ra\la\bfXi_2^T\bfXi_2\ra\right)                \\
                                                                           & =\sigma^2_{\bfx}\sigma_1^2N_i\sigma^2_2N_o\tr\left(\bfI_{N_h}\right)                         \\
                                                                           & = \sigma^2_{\bfx}\sigma_1^2N_i\sigma^2_2N_oN_h.
    \end{aligned}
  \end{equation}

  We continue by noting that
  \begin{equation}
    \big\langle\bfxi_{\bfx_n}^T(...)\bfx_n^{}\big\rangle = \tr\left(\big\langle(...)\big\rangle\bfx_n^{}\big\langle\bfxi_{\bfx_n}^T\big\rangle\right) = 0
  \end{equation}
  and
  \begin{equation}
    \big\langle\bfx_n^T(...)\bfxi_{\bfx_n}\big\rangle = \tr\left(\big\langle(...)\big\rangle\big\langle\bfxi_{\bfx_n}^{}\big\rangle\bfx_n^T\right) = 0.
  \end{equation}
  Finally, we solve
  \begin{equation}
    \begin{aligned}\label{app:eq:xt-btb-x}
      \bfx_n^T\bigg( & \la\bfXi_1^T\wbo^T\wbo^{}\bfXi_1^{}\ra + \la\bfXi_1^T\ra\wbo^T\Big\langle\bfXi_2^{}\Big\rangle\wao^{} + \la\bfXi_1^T\wbo^T\Big\langle\bfXi_2\Big\rangle\bfXi_1\ra + \wao^T\la\bfXi_2^T\ra\wbo\Big\langle\bfXi_1\Big\rangle + \wao^T\la\bfXi_2^T\bfXi_2\ra\wao \\
                     & \ +\ \wao^T\la\bfXi_2^T\bfXi_2\ra\Big\langle\bfXi_1\Big\rangle +\la\bfXi_1^T\la\bfXi_2^T\ra\wbo\bfXi_1\ra + \la\bfXi_1^T\ra\la\bfXi_2^T\bfXi_2\ra\wao + \la\bfXi_1^T\bfXi_2^T\bfXi_2\bfXi_1\ra\bigg)\bfx_n,
    \end{aligned}
  \end{equation}
  where we again used that that noise is \ac{iid}.
  Using that noise is zero-centred, we note that summands 2-4, and 6-8 are equal to $0$.
  Which leaves us with summands 1, 5, and 9, which we solve using \cref{app:eq:xi1-b-xi1,app:eq:xi1T-xi1} as
  \begin{equation}
    \begin{aligned}
      \bfx_n^T\la\bfXi_1^T\wbo^T\wbo\bfXi_1\ra\bfx_n & = \sigma_1^2\tr(\wbo^T\wbo)\bfx_n^T\bfI\bfx_n \\
                                                     & = \sigma_1^2||\wbo||_F^2\tr(\bfx_n\bfx_n^T),
    \end{aligned}
  \end{equation}

  \begin{equation}
    \bfx_n^T\wao^T\la\bfXi_2^T\bfXi_2\ra\wao\bfx = N_o\sigma_2^2\tr(\wao^T\wao\bfx_n\bfx_n^T),
  \end{equation}
  and
  \begin{equation}
    \begin{aligned}
      \bfx_n^T\la\bfXi_1^T\bfXi_2^T\bfXi_2\bfXi_1\ra\bfx & = \sigma_1^2\tr\left(\la\bfXi_2^T\bfXi_2\ra\right)\bfx_n^T\bfx_n \\
                                                         & = \sigma_1^2N_o\sigma_2^2\tr(\bfI)\tr(\bfx_n\bfx_n^T)            \\
                                                         & = \sigma_1^2N_o\sigma_2^2N_h\tr(\bfx_n\bfx_n^T).
    \end{aligned}
  \end{equation}
  Resubstitution into \cref{app:eq:btb-0} then yields
  \begin{equation}
    \begin{aligned}
      \label{app:eq:noise-btb-cc}
      \frac{1}{2P}\sum_{n=1}^P \la \bfb^T\bfb \ra = & \frac{\sigma^2_{\bfx}}{2}\big(||\wbo\wao||_F^2 + \sigma^2_1N_i||\wbo||_F^2 + \sigma^2_2N_o||\wao||_F^2 + \sigma_1^2N_i\sigma^2_2N_oN_h\big) \\
      +\                                            & \frac{1}{2P}\big(\sigma_1^2||\wbo||_F^2||\bfX||_F^2 + N_o\sigma_2^2||\wao\bfX||_F^2 + N_h\sigma_1^2N_o\sigma_2^2||\bfX||_F^2\big),
    \end{aligned}
  \end{equation}
  Substituting \cref{app:eq:noise-ata,app:eq:noise-atb,app:eq:noise-btb-cc} back into \cref{the:parameter-and-input-noise} concludes the proof.
\end{proof}

\begin{proof}[Proof of \cref{the:input-noise}]
  Let $\bfxi_n \in \mathcal{R}^{N_i}$ denote a random vector with independent and identically distributed, zero-centred entries with variance $\sigma_x^2$. Then the expected mean squared error over the training data when adding such a random perturbation to each input is
  \begin{equation}
    \begin{aligned}
      \label{app:eq:input-noise}
      \la\frac{1}{2P}\sum_{n=1}^P||\wbo\wao(\bfx_n + \bfxi_n) - \bfy_n||_2^2\ra & = \frac{1}{2P}\sum_{n=1}^P\la||\underbrace{\wbo\wao\bfx_n - \bfy_n}_a + \underbrace{\wbo\wao\bfxi_n}_b||_2^2\ra                     \\
                                                                                & =  \frac{1}{2P}\sum_{n=1}^P \bfa^T\bfa + \frac{1}{2P}\sum_{n=1}^P2 \bfa^T\la\bfb \ra + \frac{1}{2P}\sum_{n=1}^P \la \bfb^T\bfb \ra.
    \end{aligned}
  \end{equation}
  The first summand is identical to \cref{app:eq:noise-ata}, the second summand is
  \begin{equation} \label{app:eq:noise-atb-2}
    \begin{aligned}
      \frac{1}{2P}\sum_{n=1}^P2\bfa^T\left\langle\bfb \right\rangle = \frac{1}{2P}\sum_{n=1}^P2(\wbo\wao\bfx_n -\bfy_n)^T\wbo\wao\left\langle\xi_n\right\rangle \\
       & \ = 0
    \end{aligned}
  \end{equation}
  and for the third summand, using \cref{app:eq:random-vector-inner-and-outer-product,app:eq:xi1T-xi1} we get
  \begin{align}\label{app:eq:noise-btb-2}
    \frac{1}{2P}\sum_{n=1}^P \la \bfb^T\bfb \ra & = \frac{1}{2P}\sum_{n=1}^P\la \bfxi_n^T\wao^T\wbo^T\wbo\wao\bfxi_n \ra \nonumber                \\
                                                & = \frac{1}{2P}\sum_{n=1}^P\tr\left(\wao^T\wbo^T\wbo\wao\la \bfxi_n\bfxi_n^T\ra\right) \nonumber \\
                                                & = \frac{1}{2P} \sum_{n=1}^P \sigma_\bfx^2 \tr\left(\wao^T\wbo^T\wbo\wao\right)                  \\
                                                & = \frac{\sigma_\bfx^2}{2}||\wbo\wao||_F^2 \nonumber.
  \end{align}
  Substituting \cref{app:eq:noise-ata,app:eq:noise-atb-2,app:eq:noise-btb-2} back into \cref{app:eq:input-noise}, then concludes the proof.
\end{proof}

\begin{proof} [Proof of \cref{the:parameter-noise}]
  The expected mean squared error over the training data of a linear solution (\cref{def:general-linear-solution}) under additive, independent and identically distributed zero-centred parameter noise with variance $\sigma_1^2$ and $\sigma_2^2$ is
  \begin{align}
    \label{app:eq:expected-loss-parameter-noise}
    \left\langle \frac{1}{2P}\sum_{n=1}^{P}||\left(\wbo + \bfXi_2\right)\left(\wao + \bfXi_1\right)\bfx_n - \bfy_n||^2_2 \right\rangle & = \frac{1}{2P}\sum_{n=1}^P\bigg\langle||\underbrace{\wbo\wao\bfx_n - \bfy_n}_a + \underbrace{\wbo\bfXi_1\bfx_n +\bfXi_2\wao\bfx_n + \bfXi_2\bfXi_1\bfx_n}_b||^2_2 \bigg\rangle \nonumber \\
                                                                                                                                       & = \frac{1}{2P}\sum_{n=1}^P \bfa^T\bfa + \frac{1}{2P}\sum_{n=1}^P2 \bfa^T\left\langle\bfb \right\rangle + \frac{1}{2P}\sum_{n=1}^P \left\langle \bfb^T\bfb \right\rangle~.
  \end{align}
  The first summand is again identical to \cref{app:eq:noise-ata} and the second summand is
  \begin{equation} \label{app:eq:parameter-noise-atb}
    \frac{1}{2P}\sum_{n=1}^P2\bfa^T\left\langle\bfb \right\rangle = 0
  \end{equation}
  since
  \begin{equation}
    \begin{aligned}
      \left\langle\bfb \right\rangle & = \left\langle\wbo\bfXi_1\bfx_n + \bfXi_2\wao\bfx_n + \bfXi_2\bfXi_1\bfx_n \right\rangle                                                        \\
                                     & = \wbo\left\langle\bfXi_1\right\rangle\bfx_n + \left\langle\bfXi_2\right\rangle\wao\bfx_n + \left\langle\bfXi_2\bfXi_1\right\rangle\bfx_n = 0~.
    \end{aligned}
  \end{equation}
  Which leaves us with the third term, which is identical to \cref{app:eq:xt-btb-x} which equals
  \begin{equation}
    \begin{aligned}
      \label{app:eq:parameter-noise-btb}
      \frac{1}{2P}\sum_{n=1}^P \left\langle \bfb^T\bfb \right\rangle & = \frac{1}{2P}\sum_{n=1}^P \bfx_n^T\big(\sigma_1^2||\wbo||_F^2\bfI + N_o\sigma_2^2\wao^T\wao + N_h\sigma_1^2N_o\sigma_2^2\bfI\big)\bfx_n \\
                                                                     & = \frac{1}{2}\big(\sigma_1^2||\wbo||_F^2\tr(\sigxx) + N_o\sigma_2^2\tr(\wao^T\wao\sigxx) + N_h\sigma_1^2N_o\sigma_2^2\tr(\sigxx)\big)~.
    \end{aligned}
  \end{equation}
  Substituting \cref{app:eq:noise-ata,app:eq:parameter-noise-atb,app:eq:parameter-noise-btb} back into \cref{app:eq:expected-loss-parameter-noise}, then concludes the proof.
\end{proof}

\end{document}